\def\eqref#1{equation~\ref{#1}}
\def\1{\bm{1}}
\DeclareMathAlphabet{\mathsfit}{\encodingdefault}{\sfdefault}{m}{sl}
\SetMathAlphabet{\mathsfit}{bold}{\encodingdefault}{\sfdefault}{bx}{n}
\let\classAND\AND
\let\AND\relax
\let\AND\classAND
\theoremstyle{plain}
\newtheorem{theorem}{Theorem}[section]
\newtheorem{lemma}[theorem]{Lemma}
\theoremstyle{definition}
\newtheorem{definition}[theorem]{Definition}
\theoremstyle{remark}
\newcommand{\ReLU}{\operatorname{ReLU}}
\newcommand{\x}{\bold{x}}
\newcommand{\Q}{\bold{Q}}
\newcommand{\K}{\bold{K}}
\newcommand{\V}{\bold{V}}
\newcommand{\tr}{\operatorname{tr}}
\newcommand{\R}{\bold{R}}
\newcommand{\I}{\bold{I}}
\newcommand{\X}{\bold{X}}
\newcommand{\Z}{\bold{Z}}
\newcommand{\T}{\bold{T}}
\newcommand{\Prob}{\operatorname{Prob}}
\newcommand{\Score}{\operatorname{Score}}
\title{Sampling Foundational Transformer: A Theoretical Perspective}
\author{\name Viet Anh Nguyen \email anhnv117@fpt.com \\
      \addr FPT Software AI Center, Vietnam
      \AND
      \name Minh Lenhat \email minhln30@fpt.com \\
      \addr FPT Software AI Center, Vietnam
      \AND
      \name Khoa Nguyen \email khoant30@fpt.com\\
      \addr FPT Software AI Center, Vietnam
      \AND
      \name Duong Duc Hieu \email hieudd12@fpt.com\\
      \addr FPT Software AI Center, Vietnam
      \AND
      \name Dao Huu Hung \email hungdh3@fpt.com\\
      \addr FPT Software AI Center, Vietnam
      \AND
      \name Truong Son Hy$^*$ \email thy@uab.edu\\
      \addr University of Alabama at Birmingham, United States \\ FPT Software AI Center, Vietnam }
\begin{document}

\maketitle
\def\thefootnote{*}\footnotetext{Correspondence to thy@uab.edu}\def\thefootnote{\arabic{footnote}}

\begin{abstract}

{The versatility of self-attention mechanism earned transformers great success in almost all data modalities, with limitations on the quadratic complexity and difficulty of training. To apply transformers across different data modalities, practitioners have to make specific clever data-modality-dependent constructions. In this paper, we propose Sampling Foundational Transformer (SFT) that can work on multiple data modalities (e.g., point cloud, graph, and sequence) and constraints (e.g., rotational-invariant). The existence of such model is important as contemporary foundational modeling requires operability on multiple data sources. For efficiency on large number of tokens, our model relies on our context aware sampling-without-replacement mechanism for both linear asymptotic computational complexity and real inference time gain. For efficiency, we rely on our newly discovered pseudoconvex formulation of transformer layer to increase model's convergence rate. As a model working on multiple data modalities, SFT has achieved competitive results on many benchmarks, while being faster in inference, compared to other very specialized models. 
}

\end{abstract}

\section{Introduction} \label{sec:introduction}

Transformers \citep{AttentionAllYouNeed} have been successfully applied to a wide variety of tasks on many data {modalities}, namely sequence \citep{AttentionAllYouNeed,GPT2,BERT}, {vision \citep{VisionTransformer,DETR,LLAVA}, speech \citep{Conformer,AudioSpectrogramTransformer,AudioTransformerPatchout}, time-series \citep{NonStationaryTransformer,TimeLLM}}, point cloud \citep{SetTransformer,PointTransformer,GumbelSubsetSampling}, and graph \citep{GraphTransformerNetwork,GraphGPS,Exphormer,10.1063/5.0152833}. Its successes can be characterized {by the versatility and parallelizability of self-attention modules in long-range modeling. This implies transformers can} transform the whole sequence simultaneously like convolutional neural networks (CNN) {with unlimited receptive field}. This alone allows the model to scale up to a very big size, enabling large language modeling. {While being versatile and parallelizable, transformers have some limitations: heavy architectural modifications for other data modalities adaptation, inherent quadratic complexity that limits scalability, and certain training difficulties. The first limitation arises when recent developments in deep learning towards foundational modeling. Large foundational models need more data beyond text (e.g. SQL databases, knowledge graphs, etc.) and end-users want to input different data modalities into these models like normal chatting experience between humans (e.g. "Look at my cute dog <dog.png> with her puppies <puppies.png>."). The exceptionally high-quality and abundant (probably in private) knowledge graph data and the relationship between different data modalities is an interesting research direction. The very first step towards such models is designing one single architecture that works for a large number of data modalities and evaluating its efficacy on each of these data modalities.} 

In this work, we propose Sampling Foundational Transformer (SFT) as a {solution to the mentioned limitations. As our research resources have certain constraints, our research limits to two common archetypes of relationships between tokens: \textbf{dense low rank} (point cloud) and \textbf{sparse high rank} (sequence and graph). Graphs also pose a different challenge: the structural patterns are much more diverse compared to sequences. For training and evaluation efficiency, we lean towards developing} sparse global attention, {and} some tweaks to trivialize the training process. We are interested in sparse global attention because its simplicity allows{: applicability on (almost) all data modalities and transformer variants,} a subquadratic asymptotic complexity, and a \textbf{lower practical runtime}. We notice that most sparse attention mechanisms either utilize a fixed sparse pattern \citep{Beltagy2020Longformer,GPT3_GeneratingLongSequenceSparseTrans} or rely on random sparse patterns \citep{Exphormer} or a combination of these \citep{BigBird}. Our initial idea is to use a differentiable sampling method to learn a subsampling pattern. Based on Gumbel-softmax reparameterization, we design sampling \textbf{without} replacement through neural importance score computation. This way, the sparse global self-attention can learn to attend to important tokens and is usable on non-sequential data structures like point clouds and graphs. The attention nonlinearity we design is derived from the maxout network\citep{maxout_networks}. We combine it with a ReLU-based probability function instead of the Softmax function previously seen in transformers. The combination ultimately makes each of our transformer cells pseudoconvex on its parameters and is linear-scaled. This greatly alleviates the hyperparameter tuning pain of transformers. We further show that our maxout attention nonlinearity combined with the ReLU-based probability function allows better relative information aggregation. To demonstrate the effectiveness of relative information aggregation via positional encoding, we apply it to rotational invariant point cloud tasks, with coordinate information masked out of the input sequence. In this setting, only transformed Euclidean distance is introduced to the attention matrix. This technique (the ReLU-based probability function and the convex attention non-linearity) can be applied to any other self-attention-based model to increase the model expressivity.




In this paper, we conduct extensive experiments on point cloud, graph, and sequence datasets. For point clouds, we benchmarked SFT on classification \citep{Modelnet40} and semantic segmentation \citep{Shapenet, s3dis} tasks, in both conventional paradigms as well as rotational insensitive ones. For graphs, we evaluate SFT on two Peptide datasets~\citep{Peptideds} and one Computer Vision dataset~\citep{PascalVOC}. For sequences, we evaluate SFT on the long-range arena benchmark \citep{longrangearena} featuring four classification tasks and one retrieval task on these datasets: ListOps \citep{nangia-bowman-2018-listops}, IMDb review \citep{ImdbReview}, ACL Anthology Network \citep{ACLAnthologyNetworkCorpus}, Grayscaled CIFAR-10 \citep{CIFAR10}, and Pathfinder \citep{Pathfinder}. SFT achieved competitive results on these benchmarks as a foundational model working on both point clouds and sequences compared to the specialized ones, featuring faster empirical runtime.


In short, our contributions can be summarized as follows:

\begin{itemize}
    \item Efficient linear complexity sparse self-attention in terms of the number of tokens,
    \item Differentiable and parallelizable sampling without replacement method that is optimization-friendly,
    \item Pseudoconvex transformer layer that {is stable during training by theoretical gradient analysis},
    \item Zero-additional computing cost rotation-invariant addon for point cloud transformers,
    \item Outperformance over many well-known methods on standard benchmarks.
\end{itemize}

We provide the detailed theoretical analysis in the Appendix Section~\ref{sec:theory}.
\section{Related works} \label{sec:related}

{Our work provides an efficient transformer variant that works on three data modalities: point clouds, graphs, and sequences; therefore, this section provides a context on relevant literature on efficient transformer, point cloud learning, graph learning, and sequential learning literature.}

\textbf{Efficient Transformers.} The scalability of transformers is hindered by the quadratic complexity of self-attention modules. Hence, numerous works have delved into sparsity \citep{Beltagy2020Longformer,EncodingLongandStructuredInputsinTransformers,BigBird,Exphormer}, low-rank projection \citep{wang2020linformer}, hashing-based method \citep{kitaev2020reformer}, and kernel-based methods \citep{performer,TransformerAreRNN_LinearTransformer} to sacrifice the expressivity of transformers for a sub-quadratic construction. Our review of contemporary efficient transformers is influenced by this survey by \citep{efficienttransformersurvey}. Linformer \citep{wang2020linformer} is based on the low-rank assumption of self-attention matrices, realized via projection matrices. As one of the earliest works in efficient transformers, it has flaws in both scalability and fixed sequence length. Subsequent work like Linear Transformer \citep{TransformerAreRNN_LinearTransformer} redesigns the similarity kernel previously as a softmax of dot-product score (or any other kernels) to one made of a linear probability cage. The modification frees them from the recomputation of the multiplication of key-value matrices, making attention computational cost $O(N)$, where $N$ refers to the number of tokens. Methods like \citep{Beltagy2020Longformer} utilize three choices of fixed-sparsity-patterns: sliding window, dilated sliding window, and global+sliding window. Similarly, \citep{BigBird} uses a combination of different sparse patterns at one time: random attention, window attention, and global attention. On graphs, \citep{Exphormer} also combines different sparsity patterns but with a constraint on the random-generated one: it has to be an expander graph. The hard-coded sparsity effectively linearized the complexity of transformers and is empirically fast. However, intuitively, no pattern or finite set of patterns fits all problems; therefore, there is also a line of work dedicated to a differentiable sparsity. \citep{RoutingTransformer} drew a relationship from MIPS problem (Maximum Inner Product Search) and the Nearest-Neighbor Search algorithm, when both the query and key vectors are unit vectors. With their online k-means algorithm, their method attends each query vector to every key vector belonging to the same cluster, thus, bringing the complexity down to $O(n^{1.5})$, with $n$ is the number of tokens. Empirically, however, their method is not fast as it needs a k-means construction and extensive specialized implementation for sparse operators. We see that the literature currently lacks a fast learnable sparsity and this is what we provide, based on sampling method. \citep{SetTransformer}, is most similar to our global attention scheme, as it is also based on sampling. Their method uses a multi-head attention stacked on one another to perform sampling. Except, our method learns a sparsity pattern for global attention, which has been proven to be more robust.


\textbf{Point Clouds.} Deep Learning has been applied in Point-Cloud-related problems, including 3-D Shape Classification, 3-D Point Cloud Segmentation, and others. Our literature review on Point Cloud methods is inspired from \citep{PointCloudReview2020,PointCloudClassificationReview2023,GeometricDeepLearningEquivariantReview} and other contemporary literature, covering a wide range of geometric deep learning. Data structure-wise, point cloud works can be classified into three types: point/mesh, voxel, and multi-view; applied by different deep learning methods. Multi-view based methods like \citep{MVCNN,MultiviewHarmonizedBilinear} process point cloud via a sequence of rendered 2-D images of objects from different views, enjoying the early breakthroughs of deep learning-based computer vision \citep{ResNet}. The voxel-based method \citep{VoxNet,VolumetricMulti-viewCNNs} operates on a 3-D grid representing the object, which also uses convolutional neural networks. The voxel-based method is computing-intensive: the number of voxels grows cubically to the grid resolution. Therefore, numerous methods like \citep{OctNet} are dedicated to the sparsification of the voxel grid. OctNet \citep{OctNet} sparsify the voxel grid using an octree to allocate more memory and computation resources to denser areas of the voxel grid. In contrast, point/mesh-based methods like the works by \citep{PointNet,PointNet++,PointTransformer} use unprocessed inputs (3-D coordinates with/without surface normal vectors). These works often share several components: local permutation invariant point operators, global feature aggregation, and point downsampling operators. Methods-wise, point cloud works can be classified into types: local-aggregation-based methods and global-aggregation-based methods. Local aggregators in point cloud literature usually feature convolutional layers or point-to-point global aggregators with self-attention. Convolutional-powered local aggregators can take many forms depending on the data structure: 2-D convolutional layers \citep{MVCNN,MultiviewHarmonizedBilinear} for multi-view, 3-D convolutional layers \citep{VoxNet,VolumetricMulti-viewCNNs} for voxel grid, graph convolutional layers \citep{PointGNN} for point graph (usually constructed using kNN or distance ball), and kNN-powered local feature aggregator (or point convolution operators) \citep{PointNet++,SpiderCNN}. Point-Conv \citep{PointConv} and Point-Transformer \citep{PointTransformer} also feature point interpolation operators as an aggregation method. Global feature aggregation in point cloud literature is dominated by the likes of self-attention \citep{AttentionAllYouNeed} popularized by the fact that the module has been so successful in modeling long-range dependencies in sequential processing. Point-Transformer \citep{PointTransformer} is one of the works featuring long-range dependency modeling. It also takes account of each point's locality via nearest neighbor max pooling, which also allows the model to operate on higher-level features as they progress through deeper layers. 

\textbf{SO(3) Point Clouds.} As point clouds exist in a 3-D Euclidean space, the rotation of objects should not change what those objects are. This is called the rotational invariance characteristics of some point cloud models and is an active area of research. Early research relies on data augmentation to achieve rotational invariance; however, it should be noted that covering all possible rotations is not achievable. Subsequent ones rely on modules satisfying one of the two characteristics: rotational invariant \citep{SRINet, RI_Framework, kDisGNN} or rotational equivariant \citep{EquivariantPointNetwork,Cormorant,VectorNeurons,LearntEquivariance}. Equivariant-based methods in point cloud, as a generalization to invariant-based ones, feature diverse mechanisms satisfying the equivariant constraint: $f(r(x, \alpha)) = r(f(x), \alpha)$, where $f$, $r$, and $\alpha$ is the equivariant mapping, the rotation function, and the rotation angle, respectively. As an early work, similar to the existing equivariant literature in other data structures like images, \citep{EquivariantPointNetwork} develops a spherical point convolution operation. \citep{Cormorant, TensorFieldNetworks, SphericalCNN} propose learning on Fourier domain by utilizing spherical tensors and Fast Fourier Transform on $SO(3)$ group. This allows them to retain perfect information of the whole point cloud on each layer, sequentially without a complicated multi-path model structure. \citep{VectorNeurons} extends each neural node to have 3-D representation, with different linear projection and non-linearity. Their method is elegant and simple enough to apply to any other point cloud framework. It sees performance degradation in \citep{PointNet} but retains full performance on \citep{MVCNN}. In rotational-invariant literature, Euclidean distance is the most commonly used feature \citep{LearningFromProteinStructurewithGeometricVectorPerceptrons,Provably_strong_GNN,kDisGNN}. The reason: it is simple and it contains all geometric information of any point cloud \citep{equivariantGNN}. \citep{kDisGNN} further shows that GNN is incapable of distinguishing certain point cloud cases given distance matrix and proves that a k-tuple-distance is capable of capturing any k-order geometric features. In line with other rotational-invariant via GNN methods, we show that rotational invariance-via Euclidean features can be efficiently applied to transformers in the form of {relative} positional encodings, with significant architectural changes. In addition, we provide an empirical analysis of the ability of our transformer {on performance and efficiency on popular point cloud, sequence, and graph benchmarks} with relative positional encodings.

\textbf{Graphs.} Graphs are used to represent relative data with a variety of use cases, e.g., small atomic structures \citep{moleculenet,atom3d} for drug discovery, knowledge graphs \citep{Hogan_2021, knowledge_g2, kg3} for reliable information retrieval, etc. Modern graph pattern recognition relies on deep learning with the emergence of {Messaging Passing} Neural Network (MPNN) \citep{10.5555/3305381.3305512}. It is a learning paradigm where the computation of each node feature relies solely on its locality. Graph Convolutional Network (GCN) \citep{4700287, GCN} is the earliest work of this kind, where each node aggregates the feature of adjacency nodes using a permutation invariant operator. \citep{10.1063/1.5024797, hy2019covariant} extends the GCN model to higher-order while respecting the permutation symmetry by proposing the use of covariant tensor operators. Graph Attention Network (GAT) \citep{GAT} improves GCN by introducing a mechanism to select important nodes. These two are the classical graph neural networks and they suffer from overfocusing on locality; which prevents them from drawing meaningful long-range relationships \citep{LongRangeGraphBenchmark, 10.1063/5.0152833, Sequoia}. Researchers have designed two mechanisms to tackle this with MPNN: virtual nodes \citep{virtualnode} and k-hop neighborhood \citep{k_hop_NN}. Virtual nodes are designed to be nodes that connect to all nodes. This design provides information shortcuts for any two distant nodes, allowing global information to be propagated throughout all graph nodes. This concept is later explored and proved to be equivalent to low-rank-approximation-based transformer networks \citep{OnTheConnectionMPNNGraphTransformer}. K-hop MPNN aggregates information not only from the adjacency nodes but also the set of nodes that can be reached from the central node by following a path with a maximum of k edges. This increases the local receptive field of each graph node, which alleviates the mentioned problem, similar to the large convolutional kernel trend in computer vision \citep{LargeConvolutionalKernel}. Transformer\citep{AttentionAllYouNeed} operating on the complete graph domain~\citep{GeometricDeepLearningReview}, overcomes the mentioned problem purely through architectural design. While transformers are useful in extremely-large-scale learning tasks like language modeling, their performance is no good compared to traditional graph transformers in graph datasets, which often have few graphs with large numbers of nodes. Therefore, a considerable body of literature is devoted to making better handcrafted encodings as graph inductive bias~\citep{Graphormer,GNNLearnableStructuralPositionalRepresentations,GraphGPS}. Graphormer~\citep{Graphormer} is one of the first graph transformer. It introduces several types of encodings: centrality encoding (how important the nodes are) and spatial encoding (how important the locality of each node is), which resembles how humans infer useful information from graphs. GraphGPS~\citep{GraphGPS} designs a general pipeline for graph modeling by combining many prominent techniques into one: positional encoding (local PE, global PE, and relative PE), structural encoding, and multiple graph block types (MPNN layers~\citep{GINE,GatedGCN}, attention layers~\citep{AttentionAllYouNeed,performer, BigBird}). Furthermore, \citep{10.1063/5.0152833} proposes a multiscale version of graph transformer that learns to construct a hierarchy of coarsening graphs and a new form of positional encoding via graph wavelet transform \citep{pmlr-v196-hy22a}. Our work focuses more on the scalability of the quadratic complexity graph transformer and the representative power of our {relative} information aggregation. This problem has been recently addressed by~\citep{Exphormer} where the attention mechanism is sparsified by virtual node and expander graphs. Learnable sparsity for sub-quadratic graph transformer is a relatively new technique, recently explored by~\citep{Sequoia}. While the idea seems new and intriguing, it suffers from low convergence rate from stacked Gumbel-Softmax sampling layers to smoothen the discrete hierarchy. Here, we designed a more optimization-friendly differentiable sampler that can be stacked onto each other and a much simpler sparsity pattern (sampling versus hierarchy).

\section{Background} \label{sec:background}



{Variables (scalar, vector, matrix, and tensor) are all implemented as \textbf{tensors}. In practice, we use lots of tensor operators, namely, matrix multiplication, tensor concatenation, tensor stack, common activation functions ($\text{ReLU}$, $\text{Softplus}$, $\text{exp}$), slicing, sorting along dimension, gather, transpose, and permute. These mentioned operators should be explained in common deep learning frameworks' documentation, namely, PyTorch, TensorFlow, ... Stating this, however, we try to limit the usage of tensors in our mathematical equations and resort to scalars, vectors, and matrices whenever possible. In all of our equations, there are two occasions we used $\operatorname{Stack}$ function: to define the sampling operator and to illustrate the learnable sampling with replacement; therefore, we provide the definition of $\operatorname{Stack}$ function here. The $\operatorname{Stack}$ function receives $n$ $d$-dimensional vectors and returns a matrix having shape $n \times d$. The elements of the resulting matrix are expressed as follows:}

\begin{equation}
    {\operatorname{Stack}(\mathbf{v_1}, \mathbf{v_2}, \dots, \mathbf{v_n})[i, j] = \mathbf{v_i}[j]}
\end{equation}

The self-attention mechanism is a core component of transformer models to capture long-range dependencies. Let $\mathbf{X}=\left\{\mathbf{x}_1, \mathbf{x}_2, \ldots, \mathbf{x}_n\right\}$ $ \in \mathbb{R}^{n \times d}$ represent a sequence of $n$ $d$-dimensional vectors, $\mathbf{Q}$, $\mathbf{K}$, and $\mathbf{V}$ vectors representing query, key, and value vectors are created using three learnable linear transformation: $\text{Linear}(\mathbf{X}): \mathbb{R}^d \rightarrow \mathbb{R}^d$. Then, using the $\mathbf{Q}$, $\mathbf{K}$, and $\mathbf{V}$ vectors, it creates another sequence by aggregating tokens satisfying certain criteria, expressed as follows:
\begin{equation} \label{eq:self_attention}
\operatorname{Attn}(\mathbf{X})=\operatorname{Prob}\left(\operatorname{Score}(\mathbf{Q}, \mathbf{K})\right) \mathbf{V},
\end{equation}
where $\operatorname{Prob}$ and $\operatorname{Score}$ are attention non-linearity and attention score functions, respectively. {In practice, $\mathbf{Q}, \mathbf{K}, \mathbf{V}$ are all tensors with the dimension of batch and attention heads; however, it is processed the same way concurrently across all batch elements and attention heads. Therefore, it is safe to formulate as one attention head, where $\mathbf{Q}, \mathbf{K}, \mathbf{V},$ and $\operatorname{Prob}(\operatorname{Score}(\mathbf{Q}, \mathbf{K}))$ as matrices.} 

{$\mathbf{Q}, \mathbf{K}, \mathbf{V}$ are matrices of size $n \times d$. $n, d$ in the following equations are the number of tokens and the number of token dimensions, respectively. The function $\operatorname{Prob}: \mathbb{R}^{n \times n} \rightarrow \mathbb{R}^{n \times n}$ and $\operatorname{Score}: \mathbb{R}^{n \times d} \times \mathbb{R}^{n \times d} \rightarrow \mathbb{R}^{n \times n}$ can be any matrix(ces) to matrix transformation (linear or non-linear). This definition provides a generalization on what a transformer network can be}. The earliest transformer \citep{AttentionAllYouNeed} uses softmax and scaled dot product functions as $\operatorname{Prob}$ and $\operatorname{Score}$. Replacements of $\operatorname{Prob}$ or $\operatorname{Score}$ compared to the vanilla transformer \citep{TransformerAreRNN_LinearTransformer,PRIMER_Search_for_efficient_transformer_LM_relu2, ReplaceSoftmaxWithReLU,ReluAndSoftmaxTransformer_relusum} are created to address the problem of quadratic complexity, over-centralized attention, and ease of training or just motivated purely from empirical results. In our work, we introduce changes to both {$\operatorname{Prob}$ and $\operatorname{Score}$} to take care of the mentioned problems and further improve the theoretical expressivity of transformers, regarding relative positional encodings.


\section{Method} \label{sec:method}
\subsection{Overview}

\begin{figure*}[t]
\begin{center}
\includegraphics[width = 1.01\textwidth]{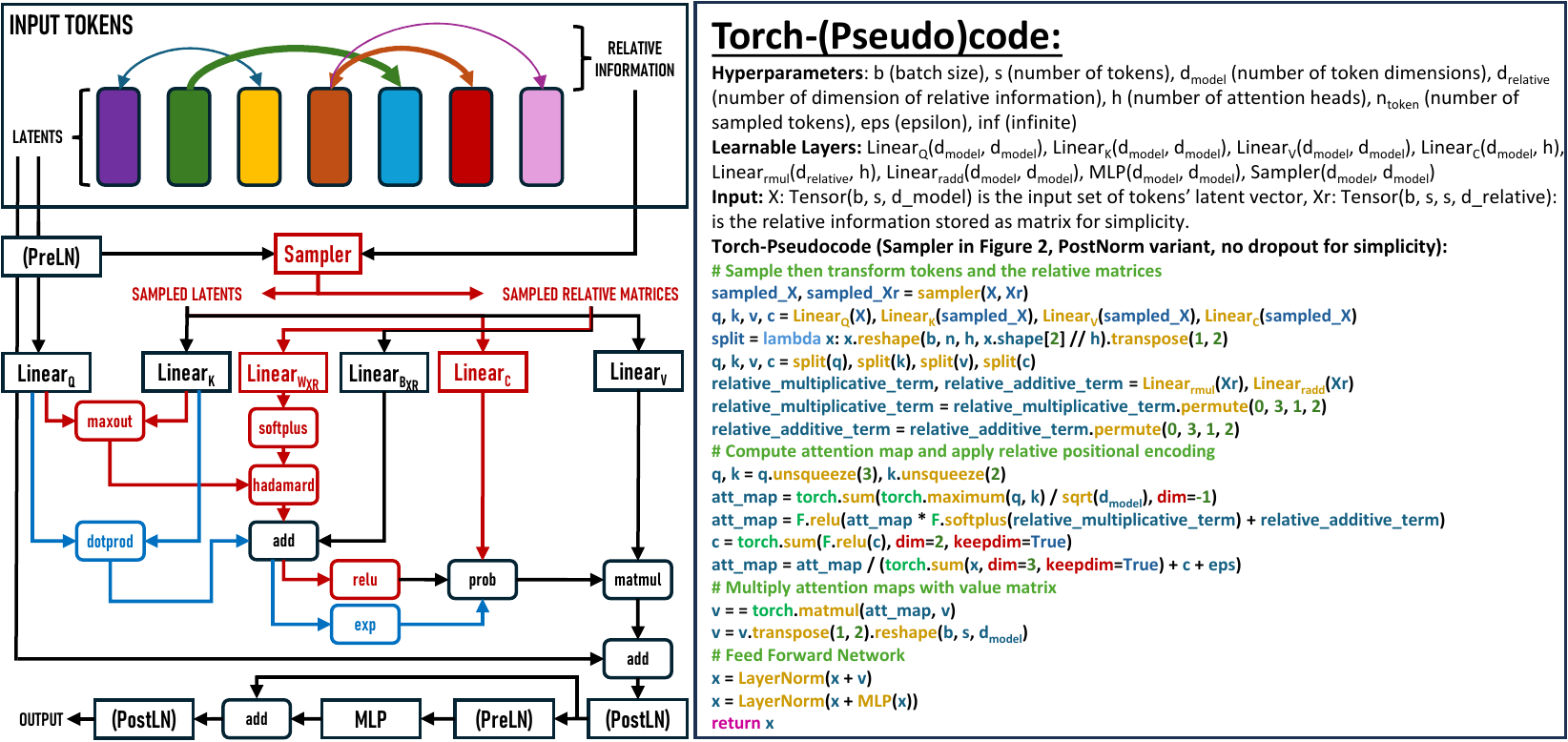}
\caption{{{Our proposed sampling-based transformer variant. (PreLN) indicates LayerNorm if Pre-LN Transformer variant, otherwise, identity. (PostLN) indicates LayerNorm if Post-LN Transformer variant, otherwise, identity. Red (our model) and blue (vanilla transformer) highlight the difference between our model and vanilla transformer (with attention bias). Sampler is described in Figure~\ref{fig:dswr}.}}}

\label{fig:samsa}
\end{center}
\end{figure*}

Our method is a composition of neural-based sampling without replacement method, modifications to the self-attention mechanism with rigorous theoretical justifications, and supporting layers. Similar to Linformer \citep{wang2020linformer} applying low-rank approximation to key and value vectors, we delve into the sampling procedure for key and value vectors, elaborated in Section~\ref{sec:samsa}. Next, Section~\ref{sec:SFT} describes our easy-to-optimize attention score combined with relative positional encodings via leaky probability formulation. Finally, the layer ends with a feed-forward network, transforming the vector representation of the tokens. 

The method is summarized in {Figure~\ref{fig:samsa}.}


\subsection{Differentiable Sampling Without Replacement} \label{sec:samsa}
We formally define the operator that samples $k$ of $n$ real $m$-dimensional vectors as a function $\operatorname{Sam}(\mathbf{X}): \mathbb{R}^{n\times m} \rightarrow \mathbb{R}^{k\times m}$. {Sampling is a linear transformation, the resulting sampled vectors can be defined by a matrix multiplication between the selection matrix $\mathbf{P}$ and the input token matrix $\mathbf{X}$}: $\mathbf{P}\mathbf{X}$. {However, not every linear operation is a sampling operation; there are certain properties that $\mathbf{P}$ has to be met, so that, the linear transformation can be a sampling operation. That is: for the linear transformation to be an operator that samples $k$ of $n$ real $m$-dimensional vectors} $\mathbf{P}$ {must be representable in form of} a stack of $k$ one-hot vectors defined as follows: 
{
\begin{equation} \label{eq:example_of_P}
\mathbf{P} =
\operatorname{Stack}_{j=1}^{k}(\operatorname{onehot}(i_j, n), \text{dim}=0) = 
\left [
\begin{array}{ccccc}
\operatorname{onehot}(i_0, n) \\
\operatorname{onehot}(i_1, n) \\
\vdots \\
\operatorname{onehot}(i_{k-1}, n) \\
\end{array}
\right ]
.
\end{equation}
}
{In the above equation, $(i_0, i_1, \dots, i_{k-1})$ can be thought as the indices of the token matrix. In line with digital circuit~\citep{DigitalDesignAndComputerArchitecture}, machine learning, and its programming language's conventions~\citep{Pytorch}, we define $\operatorname{onehot}$ is a function that outputs a vector (one-dimensional tensor) containing binary values in \{0,1\}. Our definition of onehot function takes two arguments: index $i$ and length of the output vector $n$. The $i^{th}$ value of this output vector is one and the rest are zeros. For clarity, we give an example (index counts from 1): $\text{onehot}(2, 3) = (0, 1, 0)$. There are two types of simple sampling mechanisms: sampling with replacement and sampling without replacement. For an operator to be a sampling without replacement operator, the indices must be unique, i.e. $i_j \neq i_k \;\forall\; j \neq k$.}

We sample the vectors based on importance scores: the vectors' importance scores represent the likeliness of being picked. {This implies that our method does not need to specify the number of sampled tokens, which is more versatile than learned projection matrices explored by previous method~\citep{wang2020linformer}. The number of parameters of our importance score-based sampling is also smaller than projection matrices: $d \times 1$ versus $k \times n$, where $d, n, k$ are the number of token dimensions, tokens, and sampled tokens (or projected dimensions), respectively. However, setting a constant number of sampled tokens allows better parallelization, supported by PyTorch's compile method that makes code into a singular optimized kernel. }The importance scores are computed via a learnable linear transformation: $\operatorname{Linear}: \mathbb{R}^m \rightarrow \mathbb{R}$. We use $\operatorname{Gumbel}$ distribution to include randomness to the importance scores, similar to \citep{GumbelSoftmax}. The Gumbel distribution probability density function (PDF) and importance score computation is given as follows:
\begin{align*} \label{eq:gumbel_distribution}
    \varphi(x) &= e^{-(x + e^{-x})}, \\
    \boldsymbol{z} &= \operatorname{Linear}_{\mathbf{S}}(\mathbf{X}) = \mathbf{X}W_S + B_S.
\end{align*}
where $\varphi:\mathbb{R}\to\mathbb{R}$ is the PDF of the Gumbel distribution.

{To estimate gradients from non-differentiable choosing operator, Gumbel-Softmax~\citep{GumbelSoftmax} has used reparameterization trick. In the forward pass, it returns hard-choose results using $\operatorname{argmax}$, and in the backward pass, it uses a differentiable proxy for the gradient, as described in Equations~\ref{eq:hard_gumbel_sampling} and \ref{eq:gumbel_sampling}, respectively. In Equation~\ref{eq:gumbel_sampling}, $\tau$ controls the trade-off between the gradient variance and the optimizability of hardmax function. As $\tau$ transits from $0^{+}$ to $+\infty$, samples from Equation~\ref{eq:gumbel_sampling} transits from a categorical distribution (accurate gradients but vanishing) to a uniform distribution (strong gradient signal but inaccurate gradients). While this causes discrepancies between the forward and backward pass~\cite{GumbelSoftmax}, it provides a cheap and reliable enough method to optimize for categorical stochastic variables.} The procedure to choose one vector from value matrix is given in the following equations:
\begin{align} 
    \mathbf{V} &= \X W_V+B_V\\
    {
    s(\mathbf{V}, \boldsymbol{z}, \boldsymbol{g})} &\;{= \mathbf{V}[\operatorname{argmax}_i(\boldsymbol{z}[i] + \boldsymbol{g}[i])].}\label{eq:hard_gumbel_sampling}
    \\
    {
    \nabla s(\mathbf{V}, \boldsymbol{z}, \boldsymbol{g})} &  {=\nabla\left(\sum_{i=1}^{n}\frac{\exp\left(\frac{\boldsymbol{z}[i] + \boldsymbol{g}[i]}{\tau}\right)}{\sum_{j=1}^n \exp\left(\frac{\boldsymbol{z}[j] + \boldsymbol{g}[j]}{\tau}\right)}\mathbf{V}[i]\right)}\label{eq:gumbel_sampling}, 
\end{align}
{where $s$, $\mathbf{V}$, $\boldsymbol{z}$, $\boldsymbol{g}$ and $\tau$ are the sampling function, the value matrix, the score vectors, Gumbel sample vector, and the temperature scaling respectively; $\nabla$ denotes the gradient operator.}

The problem is that Gumbel-Softmax gives off very small gradients due to its exponential nature. This is fine for its intended usage: to be put at the last layer or in a few specialized modules, but not a vital component in repeating sequential model cells. {We can linearize (or polynomialize to be precise) the probability formula by raising the Softplus-ed score to the power of $\tau^{-1}$, $\tau \in \mathbb{R}^{+}$ instead of dividing the score by $\tau$ to control the trade-off. The following equations give the formulation of reparameterization trick via Softplus:}
\begin{align}
    \operatorname{Softplus}(x) &= \log(1 + \exp(x)), \label{eq:Softplus} 
    \\
    {
    s(\mathbf{V}, \boldsymbol{z}, \boldsymbol{g})} &\;{= \mathbf{V}[\operatorname{argmax}_i(\boldsymbol{z}[i] + \boldsymbol{g}[i])].} \label{eq:Softplus_sampling}
    \\
    {
    \nabla s(\mathbf{V}, \boldsymbol{z}, \boldsymbol{g})} &\;{=  \nabla\left(\sum_{i=1}^n{\frac{\operatorname{Softplus}\left(\boldsymbol{z}[i] + \boldsymbol{g}[i]\right) ^ {\tau^{-1}}}{\sum_{j=1}^n\operatorname{Softplus}\left(\boldsymbol{z}[j] + \boldsymbol{g}[j]\right) ^ {\tau^{-1}}}}\mathbf{V}[i]\right).}
    \label{eq:Softplus_sampling_gradient}
\end{align}

From here, we can derive the trivial sampling with replacement procedure by repeatedly applying Equation~\ref{eq:gumbel_sampling} or Equation~\ref{eq:Softplus_sampling}, as follows:
\begin{align} \label{eq:sampling_with_replacement}
    {\boldsymbol{z}} &{= \operatorname{Linear}_{\mathbf{S}}(\mathbf{X}) = \mathbf{X}W_S + B_S.} \\
    {\operatorname{Sam}(\mathbf{X}, k)} &{= \operatorname{Stack}_{i=1}^k\left({s(\mathbf{X}, \boldsymbol{z}, \boldsymbol{g}_i)}, \text{dim} = 0\right).}
\end{align}

{where each vector $\boldsymbol{g}_i$ contains $n$ 
independent samples from the Gumbel distribution. $s$ is a stochastic function returning a (different) token each time it is called (in forward pass), defined in Equation~\ref{eq:hard_gumbel_sampling} or Equation~\ref{eq:Softplus_sampling}. Stacking the sampled vectors along the first dimension creates the sampled token matrix. While we do not use sampling with replacement in our method, for training stability, we found that this method requires gradients passing through the sampling function to be multiplied by $\frac{1}{k}$.}

The drawback of sampling with replacement is repeating sampled vectors. {Essentially, this causes the real sampling rate to be low despite even at a high number of samples, greatly limits model capacity.  Recognizing this limitation of sampling with replacement, we construct a data-driven differentiable sampling without replacement that is parallelizable, in contrast to existing sequential ones}. Our method sorts the vectors by importance scores, selects $k$ vectors with the highest scores in set $\mathcal{S}_1$ and $k$ random vectors in set $\mathcal{S}_2$, satisfying this constraint: $\mathcal{S}_1 \cap \mathcal{S}_2 = \emptyset$, concatenates set $\mathcal{S}_1$ and set $\mathcal{S}_2$ into set of $k$ binary bins $\mathcal{S}$, and finally selecting $k$ vectors using Equation~\ref{eq:Softplus_sampling} and Equation~\ref{eq:Softplus_sampling_gradient}. While the sorting procedure is not differentiable, the whole procedure ensures the important tokens receive higher scores than the unimportant ones iteratively through a Softplus probability bin and gradient descent process. {In practice, we find the resulting sampled tokens to be duplets (linear combinations of a token from $\mathcal{S}_1$ and a token from $\mathcal{S}_2$) is good enough and much easier to optimize. Therefore, we accept the noise from the components from $\mathcal{S}_2$ instead of true sampling with reparameterization trick, which results in a lower convergence rate and requires intensive hyperparameter tuning of the $\tau$ parameter. $\tau$ is set as 1.0 throughout all experiments.} The algorithm is described in Figure~\ref{fig:dswr}.

{With relative positional encodings, sparse and dense, the method to sample is similar. To inject relative positional encoding densely (e.g. injecting Euclidean Distance Matrix information onto attention maps; injecting token positional encodings), it is as simple as concatenating the embeddings reserved for relative positional encoding (e.g. point cloud coordinates) with the input of sampler function. This is possible if assuming there exists a function that linearly or non-linearly decomposes the relative positional encoding matrix with shape $(n, n)$ into two matrices with shape $(n, d)$. The sampler relies solely on the contextual meaning of token embeddings and does not depend on the relative positional information. This is by design and based on an assumption (which should be observed in Figure~\ref{fig-rank-progression} and Figure~\ref{fig:cosine-similarity}): relative positional information gradually transmits to token embeddings through transformer layers. Sparse relative positional encoding; however, is much more complicated to be implemented for parallel processing. We provide a linear computational complexity algorithm that can be run on CPU (or PyTorch Sparse Tensor, but not efficient enough to justify):}
\begin{itemize}
    \item {Step 1: Sample from the token embeddings using pseudocode provided in Figure~\ref{fig:dswr}.}
    \item {Step 2: Get the indices and probability score (the weight of binary linear combinations) of the top-k tokens and the randomly sampled tokens.}
    \item {Step 3: Construct two relative positional matrices $\mathbf{A_{top}}, \mathbf{A_{rand}}$ with shape $(n, k)$ where $n, k$ are number of tokens and number of sampled tokens, respectively.}
    \item {Step 4: Fill $\mathbf{A_{top}}, \mathbf{A_{rand}}$ with $\mathbf{X_R}[\text{indices}]$. $\mathbf{X_R}$ here is a sparse matrix data structure.}
    \item {Step 5: Transform each relative positional matrix through needed learnable layers provided in Section~\ref{sec:SFT}. It is done this way to avoid the layer transforming the relative features ended up learning the noisy relative features.}
    \item {Step 6: Multiplied the two matrices $\mathbf{A_{top}}, \mathbf{A_{rand}}$ with their corresponding probability score provided by Step 2.}
    \item {Step 7: Return $\mathbf{A_{top}} + \mathbf{A_{rand}}$}
\end{itemize}

{In practice, we treat the sparse relative positional matrix $\mathbf{X_R}$ as a dense matrix/tensor for parallelizability (and to be compiled to a singular kernel via compile from PyTorch method). This is a limitation of our work in engineering.}

\begin{figure*}[t]
\begin{center}
\includegraphics[width = 1.01\textwidth]{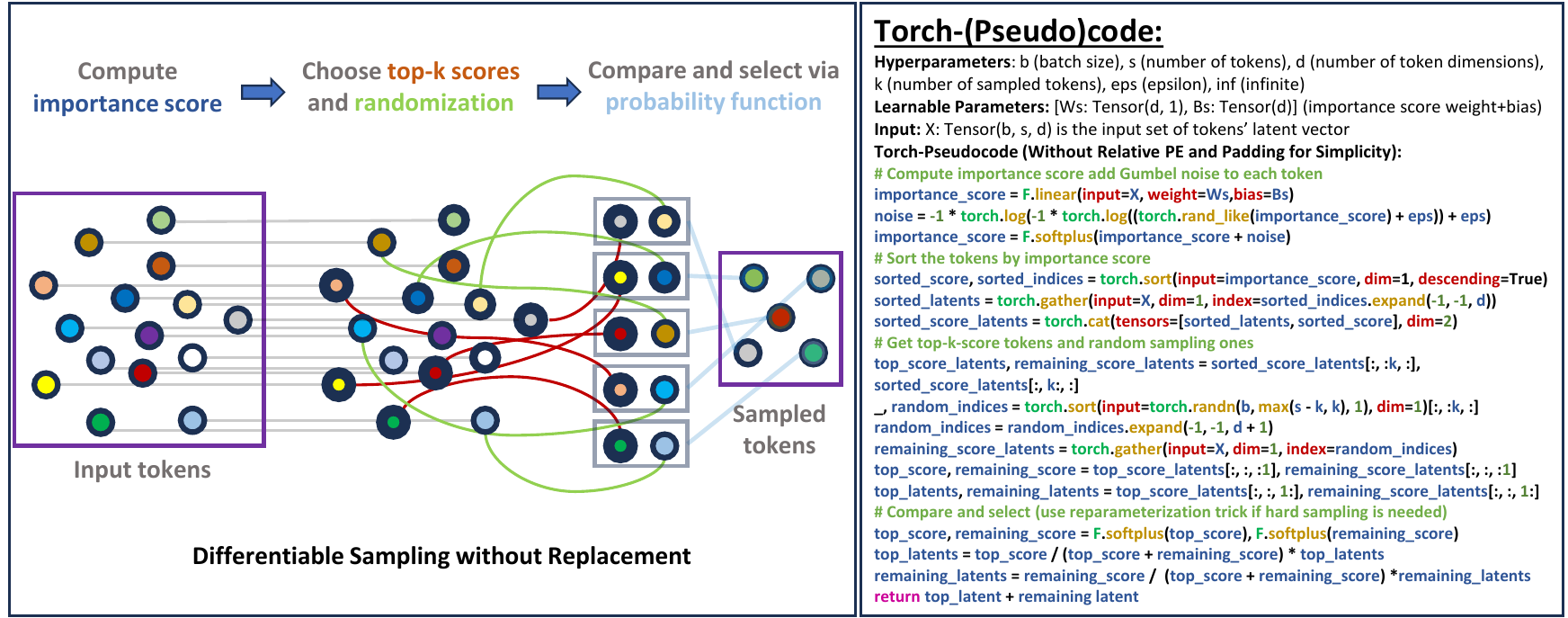}
\caption{{Our proposed differentiable sampling without replacement.}}

\label{fig:dswr}
\end{center}
\end{figure*}


\subsection{Attention Matrix Construction} \label{sec:SFT}
To recall, any generic self-attention is expressed as follows:
\begin{align*}
&\mathbf{Q} = \operatorname{Linear}_{\mathbf{Q}}(\mathbf{X}) = \mathbf{X} W_Q + B_Q, \\
&\mathbf{K} = \operatorname{Linear}_{\mathbf{K}}(\mathbf{X}) = \mathbf{X} W_K + B_K, \\
&\mathbf{V} = \operatorname{Linear}_{\mathbf{V}}(\mathbf{X}) = \mathbf{X} W_V + B_V, \\
&\operatorname{Attn}(\mathbf{X})=\mathbf{X}+\operatorname{Prob}\left(\operatorname{Score}(\mathbf{Q}, \mathbf{K})\right) \mathbf{V}.
\end{align*}

{In vanilla transformer \citep{AttentionAllYouNeed}, function $\operatorname{Prob}$ (probability function / attention matrix scale normalizer based on number of tokens) and $\operatorname{Score}$ (attention nonlinearity) are softmax and scaled-dot product. We provide formulate as one attention head with reasons given in Section~\ref{sec:background}. $\mathbf{Q}, \mathbf{K}, \mathbf{V}$ are matrices of size $n \times d$. $n, d$ in the following equations are the number of tokens and the number of token dimensions, respectively. The equations are expressed as follows:}

\begin{align} 
    {\operatorname{Prob}(\mathbf{A})[i, j]} &{= \dfrac{\exp(\mathbf{A}[i, j])}{\sum_{k=1}^{n}\exp(\mathbf{A}[i, k]) + \epsilon},}  \label{eq:common_attention_eq_f}\\
    {\operatorname{Score}(\mathbf{Q}, \mathbf{K})[i, j]} &{= \frac{\sum_{k=1}^{d}(\mathbf{Q}[i, k] * \mathbf{K}[j, k])}{\sqrt{d}},} \label{eq:common_attention_eq_g}\\
    {\operatorname{Attn}(\mathbf{X})}&{\;=\X +\operatorname{Prob}(\operatorname{Score}\left(\mathbf{Q},\K\right)) \mathbf{V}}, \label{eq:common_attention_eq}
\end{align}
with Equation~\ref{eq:common_attention_eq_f} and Equation~\ref{eq:common_attention_eq_g} constructs Equation~\ref{eq:common_attention_eq} using Equation~\ref{eq:self_attention}. Subsequent attention construction may only specify the function $\operatorname{Prob}$ and the function $\operatorname{Score}$.

It is well-known that transformer is hard to optimize \citep{UnderstandingTheDifficultyOfTrainingTransformers}. We propose a conjecture that the ease of network training is strongly related to the componentwise convexity and the backpropagated gradient scale. This is based on a stark difference between transformers and network architectures: convolutional neural networks and multi-layer perceptrons are all pseudoconvex but self-attention modules are not. {Here, we reformulate the attention matrix such that for a single layer, the output is pseudoconvex with respect to the weights}. 

We discovered the pairwise Maxout attention nonlinearity (derived from the Maxout activation \citep{maxout_networks}) is convex. When combined with the attention module (including the feed-forward network), the whole transformer layer is pseudoconvex, with informal proof in Section~\ref{sec:theoretical_analysis} and formal one {in Appendix}~\ref{appendix:convex SFT}. The following equations express leaky factor, ReLU-probability function, and the pairwise Maxout attention nonlinearity {(takes element-wise maximum, sum over the values, and scale by $\sqrt{d}$)}, respectively:
\begin{align*}
    \mathbf{C} &= \operatorname{Linear}_{\mathbf{C}}(\mathbf{X}) = \mathbf{X} W_C + B_C, \\
    {\operatorname{Score}(\mathbf{Q}, \mathbf{K})[i, j]} &{= \frac{\sum_{k=1}^{d}\max(\mathbf{Q}[i, k], \mathbf{K}[j, k])}{\sqrt{d}},} \\
    {\operatorname{Prob}(\mathbf{A},\mathbf{C})[i,j]} &{= \dfrac{\operatorname{ReLU}(\mathbf{A}[i, j])}{\sum_{r=1}^n\operatorname{ReLU}(\mathbf{A}[i, r]) + \operatorname{Softplus}(\mathbf{C}[r, 1]) + \epsilon}} \\
    {\operatorname{Attn}(\X) }&{=\X + \operatorname{Prob}(\operatorname{Score}(\Q,\K),\mathbf{C})\V .}
\end{align*}

While the leaky factor is initially introduced for numerical stability, it allows better relative information aggregation. In practice, we use a linear combination for relative positional encoding alongside the leaky probability function. The relative positional encoding matrix receives different linear transformations projecting $n$-dimensional-vectors into $h$-dimensional-vectors for each layer. With $\mathbf{X_R}$ being the relative positional encoding matrix, the following equations express how we incorporate relative positional encoding into attention scores:
\begin{align*}
    {\mathbf{R_{\text{mul}}}} &{=\operatorname{Softplus}(\operatorname{Linear}_{\mathbf{R_{\text{mul}}}}(\mathbf{X_R})),} \\
    {\mathbf{R_{\text{add}}}} &{= \operatorname{Linear}_{\mathbf{R_{\text{add}}}} (\mathbf{\mathbf{X_R}}),} \\ 
    {\operatorname{Score}(\mathbf{Q}, \mathbf{K})[i, j]} &{= \frac{\sum_{k=1}^{d}\max(\mathbf{Q}[i, k], \mathbf{K}[j, k])}{\sqrt{d}} * \mathbf{R_{\text{mul}}}[i, j] + \mathbf{R_{\text{add}}}[i, j],}\\
    {\operatorname{Prob}(\mathbf{A},\mathbf{C})[i,j]} &{= \dfrac{\operatorname{ReLU}(\mathbf{A}[i, j])}{\sum_{r=1}^n\operatorname{ReLU}(\mathbf{A}[i, r]) + \operatorname{Softplus}(\mathbf{C}[r, 1]) + \epsilon}} \\
    {\operatorname{Attn}(\X) }&{=\X + \operatorname{Prob}(\operatorname{Score}(\Q,\K),\mathbf{C})\V .}
\end{align*}

\textbf{Leakiness injects rank.} For any non-leaky probability function, the self-attention fails to distinguish any two tokens of a rank-1 uniform input, commonly known as the over-smoothing phenomenon \citep{rankcollapse,pureattentionloserank}. This case proves relative-positional-encoding-based (RPE-based) transformers' approximation power is far from being universal \citep{YourTransformerMayNotBePowerful_RelativePE} as opposed to absolute-positional-encoding-based (APE-based) Transformer \citep{APETransformer}. This barrier limits transformers' performances in relative positional data structures such as graphs and point clouds. In contrast, with the addition of the positive leaky component $\mathbf{C}$, we empirically show that the rank progression of token representation through our RPE-based Transformer gradually gains some rank. We will name this phenomenon \textbf{rank injection}. To visualize rank injection, in Figure~\ref{fig-rank-progression}, we use 30 samples of random weights to measure the rank progression of token representation with one random normal relative matrix over the layers.

\begin{figure}[h!]
    \centering
    \includegraphics[width=0.47\textwidth]{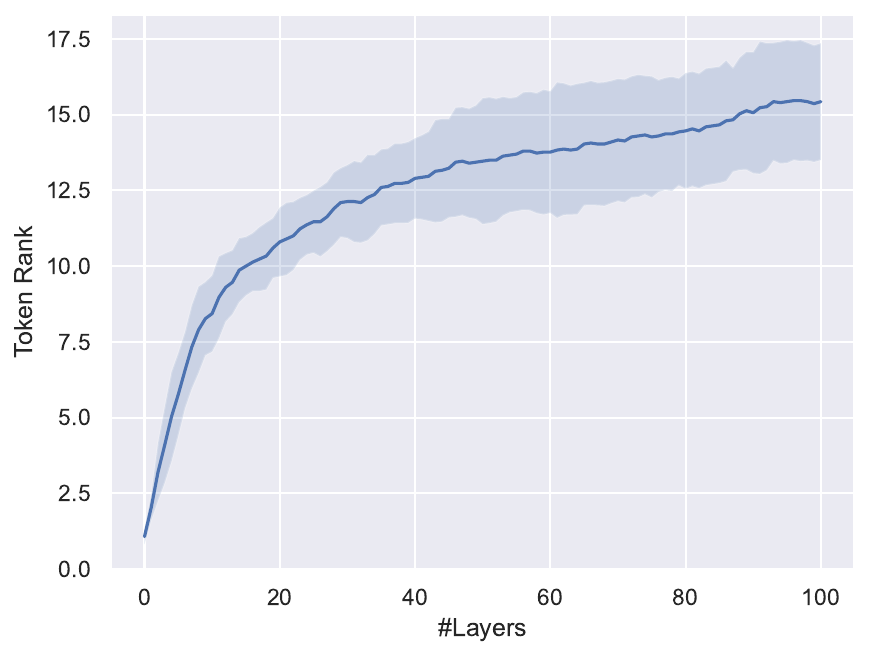}
    \caption{Rank progression of token representation with 256 tokens and embedding size 512 through 100 randomly initialized single head SFT layers with leaky probability function and pairwise maxout attention nonlinearity.}
    \label{fig-rank-progression}
\end{figure}

\subsection{Theoretical Analysis Summary} \label{sec:theoretical_analysis}

In order to theoretically justify the effectiveness of our proposed model, we will analyze its convexity power. Even though rigorously, it is not possible for probability distribution functions to exhibits actual convexity \citep{rationalnotconvex}, a weaker yet significant function class that also draw heavy attention in optimization theory \citep{liu2012one} \citep{qin2016one} are \textbf{pseudoconvex functions}, defined as follows:

\begin{definition}
    Let $\nabla$ denote the gradient operator and $\mathcal{S}\subset\mathbb{R}^n$ is a convex open set, a function $f: \mathcal{S}\to\mathbb{R}$ is said to be pseudoconvex in $\mathcal{S}$ if for any $x_1,x_2\in\mathcal{S}$ such that $\langle\nabla f(x_1), x_2-x_1\rangle\geq 0$, then {$f(x_2)\geq f(x_1)$}.
\end{definition}

Pseudoconvex functions hold a very important lemma, which states that \textbf{every stationary point is also a global minimizer}. To visualize this concept, we sketched two functions with convex and pseudoconvex properties respectively in Figure~\ref{pseudoconveximage}. In application to deep learning, pseudoconvexity not only tackle tricky challenges in {optimization} such as saddle points and local minimas but also improve the overall performance of artificial neural networks. 

\begin{figure}[h]
    \centering
    \begin{subfigure}

         \includegraphics[width=0.7\textwidth]{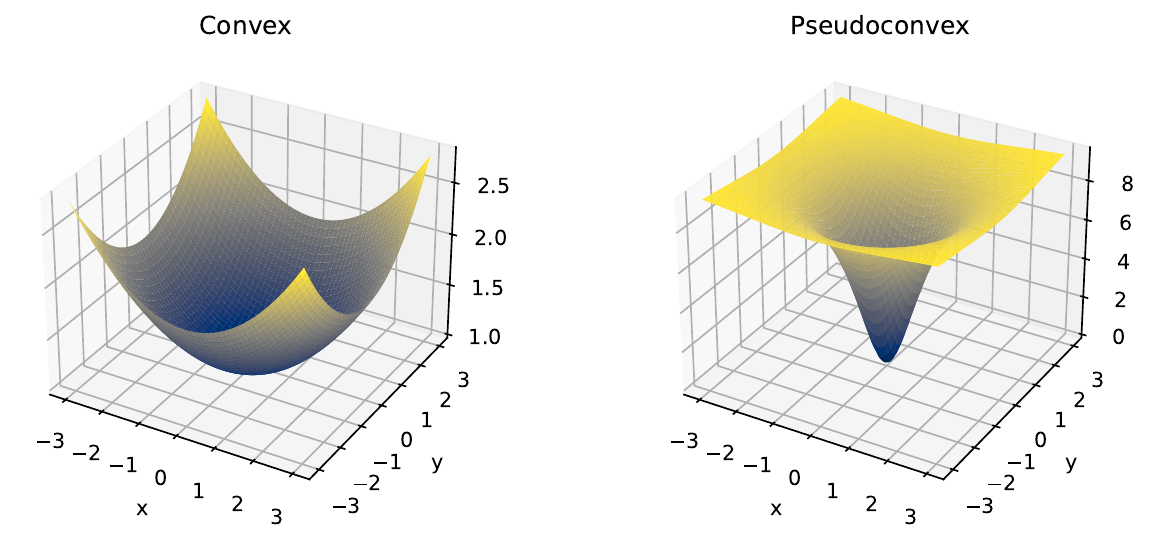}
    \end{subfigure}
    \begin{subfigure}
        
         \includegraphics[width=0.7\textwidth]{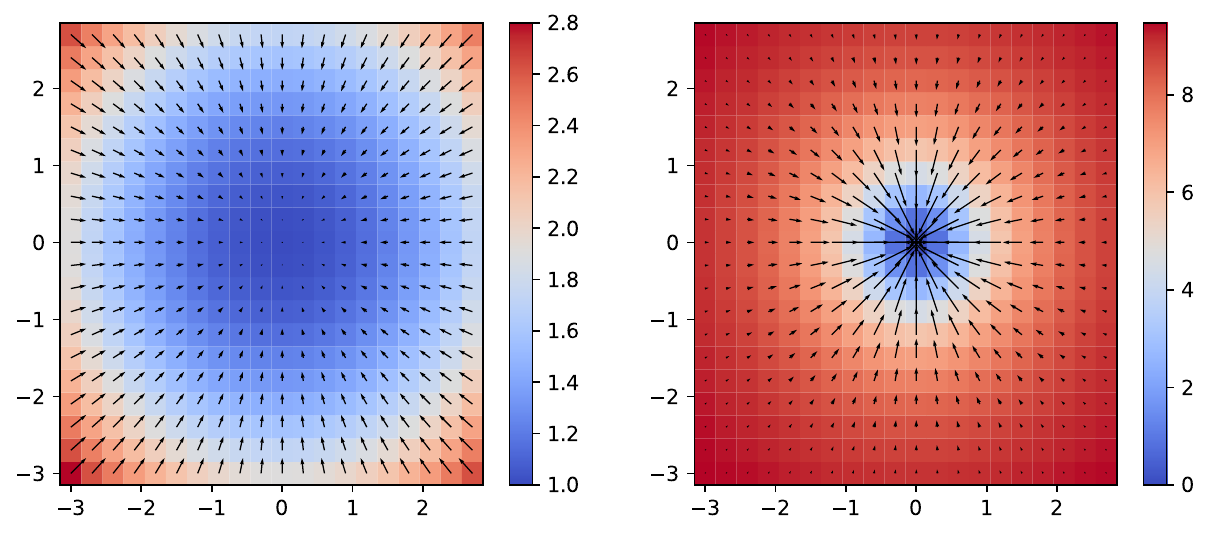}
         \label{fig:convex}
    \end{subfigure}
    \caption{Comparison between a convex function $z = \frac{x^2+y^2+10}{10}$ (top-left) and a pseudoconvex function $z = \frac{10(x^2+y^2)}{x^2+y^2+1}$ (top-right) and their corresponding heatmaps with gradient vector fields (bottoms). The pseudoconvex function greatly resemble its convex counterpart regarding the search for the local minima.}
    \label{pseudoconveximage}
\end{figure}

In summary of our theoretical analysis, we will first show the convex inefficiency of two well-known attention settings, namely the vanilla attention \citep{AttentionAllYouNeed} and ReLU-based attention with dot product \citep{ReluAndSoftmaxTransformer_relusum} by handpicking some representative counterexamples. The details of the statements and corresponding proofs are provided at Appendix Section~\ref{appendix:noncvxsettings}.

Next, we will rigorously show that \textbf{SFT is pseudoconvex with both linear and GeLU activation in FFN layer}. Informally, we present the result as follows.
\begin{theorem}
\label{informalpseudo}
    \textbf{(informal)} The SFT layer with linear or GeLU FFN activation and no sampling is componentwise pseudoconvex with respect to certain combinations of weights.
\end{theorem}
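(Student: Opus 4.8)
The plan is to decompose the SFT layer into a composition of maps and argue pseudoconvexity componentwise, exploiting the fact that composition of an affine map with a pseudoconvex function is pseudoconvex, and that sums and quotients of well-behaved convex/concave functions can be pseudoconvex under sign constraints. First I would fix all weights except one block (say $W_V$, or one row of $W_Q$, or $W_C$) and treat the layer output $\operatorname{Attn}(\mathbf{X})[i,j]$ as a scalar function of that block; this reduces the theorem to a finite family of scalar-valued optimization claims, which is what "componentwise" means here. The key structural observations I would lean on are: (i) the pairwise Maxout score $\operatorname{Score}(\mathbf{Q},\mathbf{K})[i,j] = \tfrac{1}{\sqrt d}\sum_k \max(\mathbf{Q}[i,k],\mathbf{K}[j,k])$ is convex in $\mathbf{Q},\mathbf{K}$ since $\max$ is convex and convexity is preserved under nonnegative sums and affine precomposition ($\mathbf{Q}=\mathbf{X}W_Q+B_Q$); (ii) $\operatorname{ReLU}$ of a convex function is convex, and $\operatorname{Softplus}$ is convex, so both numerator and denominator in $\operatorname{Prob}(\mathbf{A},\mathbf{C})$ are convex (with the denominator positive thanks to the leaky $\epsilon$ term); (iii) a ratio of a nonnegative convex function over a positive convex function is not convex in general, but is pseudoconvex — this is the classical fact about linear-fractional-type and, more broadly, "convex-over-concave"/"convex-over-affine-with-sign" functions, which I would invoke in the appropriate generalized form.

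The main steps, in order, would be: (1) state the closure lemmas I need — affine precomposition preserves pseudoconvexity; a nonnegative convex function divided by a positive concave (in particular affine) function is pseudoconvex; and a sum of a pseudoconvex function and something that stays monotone/consistent in the relevant direction remains pseudoconvex along that coordinate (this last one is delicate and is why the statement restricts to "certain combinations of weights"). (2) Verify convexity of each elementary piece: $\mathbf{Q},\mathbf{K},\mathbf{V},\mathbf{C}$ are affine in their weights; $\max(\cdot,\cdot)$ composed with affine is convex; the $\mathbf{R}_{\mathrm{mul}},\mathbf{R}_{\mathrm{add}}$ factors are constants with respect to the attention weights (they depend only on $\mathbf{X_R}$ and their own linear layers), and $\mathbf{R}_{\mathrm{mul}}\ge 0$ because of the Softplus, so multiplying the convex score by a nonnegative constant and adding a constant preserves convexity. (3) Assemble $\operatorname{Prob}(\operatorname{Score})\mathbf{V}$: with $\mathbf{V}$ held fixed and varying the score-producing weights, each entry is (nonneg convex)$/$(positive convex) times a constant, hence pseudoconvex; with the score-weights held fixed and $\mathbf{V}$ varying, the map is affine, hence pseudoconvex. (4) Add the residual $\mathbf{X}$ (a constant shift, harmless) and then compose with the FFN: for linear FFN activation the FFN is affine and composition with an affine map on the outside preserves pseudoconvexity; for GeLU I would use that GeLU is monotone (its derivative is nonnegative for the relevant range, or more carefully that GeLU is "increasing enough" that precomposition by it preserves pseudoconvexity along a coordinate where the inner function is pseudoconvex and the relevant gradient inner-product sign is controlled) — this is the step that forces the "certain combinations of weights" hedge and is explicitly the informal claim, with the formal version deferred to Appendix~\ref{appendix:convex SFT}.

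The hard part will be step (4) together with the precise formulation of which weight blocks can be varied jointly: pseudoconvexity, unlike convexity, is \emph{not} preserved under arbitrary composition or under addition in general, so I cannot simply say "each piece is pseudoconvex, therefore the composition is." I would need a lemma of the form "if $g$ is pseudoconvex and $h$ is nondecreasing and differentiable with $h'>0$ wherever $\nabla g \neq 0$ along the coordinate in question, then $h\circ g$ is pseudoconvex," and then check GeLU satisfies the hypothesis on the range of pre-activations that actually occur — or, failing a clean global statement, restrict to the weight directions along which monotonicity is guaranteed, which is exactly what "componentwise pseudoconvex with respect to certain combinations of weights" encodes. A secondary obstacle is making the "convex-over-positive-convex is pseudoconvex" step airtight: the cleanest route is to verify the first-order pseudoconvexity inequality directly from the quotient rule for the gradient, showing $\langle\nabla f(x_1), x_2-x_1\rangle \ge 0 \Rightarrow f(x_2)\ge f(x_1)$ by clearing the positive denominator, which reduces to an inequality between convex functions evaluated at $x_1$ and $x_2$ — routine but needs the nonnegativity of the numerator and positivity of the denominator, both of which the leaky $\epsilon$ and the $\operatorname{ReLU}/\operatorname{Softplus}$ choices were designed to guarantee.
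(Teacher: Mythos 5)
Your high-level outline (freeze all but one weight block, reduce each entry to a scalar fractional form, invoke a fractional-programming pseudoconvexity lemma, then treat the FFN activation separately) matches the paper's strategy, but two of your load-bearing steps do not go through as stated. First, in step (3) you apply the ratio lemma to a denominator $\sum_r\operatorname{ReLU}(\mathbf{A}[i,r])+\operatorname{Softplus}(\mathbf{C}[r,1])+\epsilon$ that is convex, not concave or affine; the classical result (and the lemma you yourself state in step (1)) requires the denominator to be positive and \emph{affine} (or concave), and a nonnegative-convex over positive-convex ratio is not pseudoconvex, or even quasiconvex, in general. The paper's fix is exactly the move you are missing: it restricts to the orthants $\mathcal{O}_{\mathcal{I}}$ and $\mathcal{O}_{\mathbf{Q}-\mathbf{K}}\cap\mathcal{O}_{\mathbf{Q}+\mathbf{K}}$ on which every $\operatorname{ReLU}$ and every $\max(\mathbf{Q}[i,k],\mathbf{K}[j,k])$ is resolved to a single branch, so that both numerator and denominator become \emph{affine} in the weights and the Cambini--Martein lemma (convex over positive affine) applies; the resulting claim is pseudoconvexity per orthant, which is part of what ``certain combinations of weights'' is hedging. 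Relatedly, the $W_C,B_C$ block cannot be handled by that lemma at all, since $\operatorname{Softplus}$ makes the denominator genuinely nonaffine in $W_C$; the paper proves that block by a direct first-order verification on the form $C_1/(C_2+\operatorname{Softplus}(\mathbf{X}[i]W_C+b_C))+C_3$, and your proposal does not address it.

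Second, your treatment of GeLU via a monotone-composition lemma fails because GeLU is not monotone: it decreases on part of the negative axis before its unique minimum, so the hypothesis ``$h$ nondecreasing with $h'>0$'' is false precisely where it matters. The paper's route is different: within an orthant each entry of the pre-activation is a ratio of affine functions, hence both pseudoconvex and pseudoconcave, so the preimage of any interval under it is convex; since GeLU is quasiconvex (unimodal), the sublevel set of the composition is the intersection of a sublevel and a superlevel set of the inner map and is therefore convex, giving quasiconvexity of the composition; this is then upgraded to pseudoconvexity via Ivanov's criterion (a quasiconvex function whose stationary points are all global minimizers is pseudoconvex), using that the fractional inner map has no stationary points and GeLU has a unique minimizer. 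To salvage your step (4) you need this pseudolinearity-plus-unimodality argument rather than monotonicity.
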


We provide the full theorem and proof in Appendix Section~\ref{appendix:convex SFT}.






In compensation to the non-decreasing gradient behaviour of pseudoconvex functions comparing to convex counterparts, we also derived an adaptive lower bound and global upper bound of the expectation of the Frobenius norm of the gradients of SFT, which is often referred as gradient norms for short, and analyze their complexity with respect to the number of tokens. The boundedness of the gradient norms should somewhat represent their magnitude and address the robustness of SFT against challenges such as {sharp/flat points}, i.e. places at which the gradients are too small/large.

\begin{theorem}
    \label{SFTgradinformal}
    \textbf{(informal)} Let $\boldsymbol{Sa} = \boldsymbol{Sa}(W_Q,W_K,W_V)$ be the SFT attention layer output in Algorithm~\ref{alg:sampling_transformer}, then $\mathbb{E}\left\lVert\frac{\partial \boldsymbol{Sa}}{\partial W_V}\right\rVert_F^2$ and $\mathbb{E}\left\lVert\frac{\partial \boldsymbol{Sa}}{\partial W_Q}\right\rVert_F^2$ has complexity $\Theta(n)$ and $\Theta(1)$ respectively, where $n$ is the number of tokens.
\end{theorem}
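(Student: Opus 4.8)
The plan is to write the SFT attention output $\boldsymbol{Sa}$ explicitly as a function of the three weight matrices, differentiate entrywise, and then estimate the Frobenius norm of each Jacobian by tracking how many of the $n$ tokens each derivative ``sees.'' The starting point is the layer formula
\[
\boldsymbol{Sa} = \operatorname{Prob}\!\left(\operatorname{Score}(\mathbf{Q},\mathbf{K}),\mathbf{C}\right)\mathbf{V},
\]
with $\mathbf{Q} = \mathbf{X}W_Q + B_Q$, $\mathbf{K} = \mathbf{X}W_K + B_K$, $\mathbf{V} = \mathbf{X}W_V + B_V$, the pairwise-maxout score, and the leaky ReLU-probability. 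I would first set up distributional assumptions consistent with the rest of the theory section (the tokens $\mathbf{X}$ and the random Gumbel perturbations are drawn from fixed zero-mean distributions, and the weights are at a generic initialization), so that the expectation $\mathbb{E}\|\cdot\|_F^2$ is well-defined; these are presumably the standing assumptions from Algorithm~\ref{alg:sampling_transformer} and the formal statement in the appendix.

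For the $W_V$ derivative: since $\operatorname{Prob}(\operatorname{Score}(\mathbf{Q},\mathbf{K}),\mathbf{C})$ does not depend on $W_V$ at all, $\partial \boldsymbol{Sa}/\partial W_V$ factors as a (fixed, row-stochastic-like) probability matrix $\mathbf{P}$ times $\partial \mathbf{V}/\partial W_V = \mathbf{X}$ acting on the appropriate index. Concretely $\boldsymbol{Sa}[i,\ell] = \sum_r \mathbf{P}[i,r]\,\mathbf{V}[r,\ell] = \sum_r \mathbf{P}[i,r]\sum_p \mathbf{X}[r,p] W_V[p,\ell]$, so $\partial \boldsymbol{Sa}[i,\ell]/\partial W_V[p,\ell'] = \delta_{\ell\ell'}\sum_r \mathbf{P}[i,r]\mathbf{X}[r,p]$. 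Summing squares over all four indices $i,\ell,p,\ell'$, the $\delta_{\ell\ell'}$ collapses one sum and leaves a factor of $d$ (the number of value columns), times $\sum_{i,p}\big(\sum_r \mathbf{P}[i,r]\mathbf{X}[r,p]\big)^2 = \|\mathbf{P}\mathbf{X}\|_F^2$. Taking expectations, I then need $\mathbb{E}\|\mathbf{P}\mathbf{X}\|_F^2 = \Theta(n)$: the upper bound follows because each row of $\mathbf{P}$ has entries that are nonnegative and sum to at most $1$ (leaky denominator only makes them smaller), so $\|\mathbf{P}\mathbf{X}\|_F^2 \le \|\mathbf{X}\|_F^2 = \Theta(nd)$ in expectation by row-wise Jensen/convexity; the matching lower bound $\Omega(n)$ comes from the fact that $\mathbf{P}$ has $n$ rows, each producing a convex combination of i.i.d. bounded-variance vectors whose squared norm is bounded below in expectation (nondegeneracy of $\mathbf{X}$ and the fact that the probability weights cannot all concentrate to zero). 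That gives $\Theta(n)$.

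For the $W_Q$ derivative: here $\mathbf{V}$ is held fixed and the dependence flows only through $\operatorname{Score}(\mathbf{Q},\mathbf{K})$ inside $\operatorname{Prob}$. The key structural point is that $\operatorname{Score}(\mathbf{Q},\mathbf{K})[i,j]$ depends on $\mathbf{Q}$ only through row $i$, via $\sum_k \max(\mathbf{Q}[i,k],\mathbf{K}[j,k])$, whose gradient with respect to $\mathbf{Q}[i,k]$ is the indicator $\mathbf{1}[\mathbf{Q}[i,k] > \mathbf{K}[j,k]]$ (a $0/1$ subgradient of maxout), scaled by $1/\sqrt{d}$ and by $\mathbf{R}_{\mathrm{mul}}[i,j]$. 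Then $\partial \operatorname{Prob}/\partial \mathbf{A}$ is the Jacobian of a normalized ReLU map, which is again bounded entrywise (ReLU has derivative in $\{0,1\}$, and the normalization by the leaky denominator keeps the quotient rule terms $O(1)$ after using that the denominator is bounded below by $\epsilon$ plus positive terms). Chaining $\partial \boldsymbol{Sa}/\partial W_Q = (\partial \boldsymbol{Sa}/\partial \mathbf{A})(\partial \mathbf{A}/\partial \mathbf{Q})(\partial \mathbf{Q}/\partial W_Q)$, I expect each fixed entry $\partial \boldsymbol{Sa}[i,\ell]/\partial W_Q[p,q]$ to be nonzero only for $i$ tied to a single row, so summing squared entries over $i,\ell,p,q$ yields a quantity that does \emph{not} grow with $n$ — the $n$ tokens' row-$i$ derivatives each land in a disjoint block of $W_Q$-rows, so there is no accumulation across $i$; what remains is an $O(d^2)\cdot O(1)$ expectation, i.e. $\Theta(1)$ in $n$. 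The lower bound $\Omega(1)$ is just nondegeneracy: with positive probability the ReLU and maxout indicators are active and $\mathbf{V}$ is nonzero.

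\textbf{Main obstacle.} The delicate part is the $W_V$ lower bound and, more importantly, making the $W_Q$ analysis rigorous across the two nondifferentiable maps ($\max$ inside $\operatorname{Score}$ and $\operatorname{ReLU}$ inside $\operatorname{Prob}$): one must argue that the measure-zero non-differentiability set contributes nothing to the expectation, that the chosen subgradients are the ones the algorithm actually uses, and — most subtly — that the quotient-rule Jacobian of the leaky normalization does not secretly introduce an $n$-dependent factor through its denominator sum $\sum_r \operatorname{ReLU}(\mathbf{A}[i,r]) + \operatorname{Softplus}(\mathbf{C}[r,1])$, which itself scales like $n$; I would handle this by showing numerator and denominator growth cancel (each off-diagonal Jacobian entry carries a $1/(\text{denominator})$ that is $\Theta(1/n)$, and there are $\Theta(n)$ of them, giving $\Theta(1)$ contribution), so that the probability block contributes only an $O(1)$ multiplicative constant rather than a power of $n$. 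Once that cancellation is pinned down, the two complexity claims follow by the index-counting described above.
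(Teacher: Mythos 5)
Your reduction for $W_V$ is the same as the paper's: $\bigl\lVert\partial \boldsymbol{Sa}/\partial W_V\bigr\rVert_F^2 = d\sum_i\lVert \mathbf{R}_i\mathbf{X}\rVert_F^2$ with $\mathbf{R}_i$ the $i$-th probability row, and your upper bound via row-(sub)stochasticity matches the paper's norm-product bound $dn\max_i\lVert\mathbf{X}[i]\rVert_F^2=O(n)$. But there are two genuine gaps. First, your stated mechanism for the $W_Q$ bound is wrong: you claim the $n$ tokens' row-$i$ derivatives ``each land in a disjoint block of $W_Q$-rows, so there is no accumulation across $i$.'' $W_Q\in\mathbb{R}^{d\times d_a}$ is shared across all tokens and its rows are indexed by the feature dimension, not the token index; every token's query row $\mathbf{Q}[i]=\mathbf{X}[i]W_Q$ depends on all of $W_Q$, so the sum over $i$ genuinely has $n$ nonzero terms. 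The actual reason the total is $\Theta(1)$ is the one you defer to your ``main obstacle'' paragraph: in the paper each per-token term $\lVert\mathbf{X}[:,k]^{\top}\mathbf{Z}_i\rVert_F^2$ carries a normalization denominator raised to the fourth power, $\bigl(\sum_k d_{ik}e_{ik}\mathbf{X}[i]W_D+\cdots+\mathbf{C}[i]\bigr)^4$, which scales like $n^4$ against an $O(n^3)$ numerator, making each term $\Theta(1/n)$ before summing over $i$. So the cancellation you flag as a difficulty is not a technicality to be checked at the end — it is the entire argument, and your index-counting in the main body would give the wrong answer without it.

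Second, your lower bounds are not established. For $W_V$ you assert that each row of $\mathbf{P}\mathbf{X}$ is ``a convex combination of \ldots vectors whose squared norm is bounded below in expectation''; this is false in general, since a convex combination of tokens can cancel (e.g.\ centered $\mathbf{X}$ with near-uniform attention weights drives $\lVert\mathbf{R}_i\mathbf{X}\rVert$ to zero), so nondegeneracy of $\mathbf{X}$ alone does not give $\Omega(n)$. The paper avoids this by a different and much heavier route: it fixes $\mathbf{X}$, takes the expectation over He-initialized weights $W_Q,W_K\sim\mathcal{N}(0,\sigma^2)$ entrywise, linearizes the $\max$ and $\operatorname{ReLU}$ with binary gates $e_{ij},d_{ij}$ to obtain a ratio of traces in $W_QW_Q^{\top}$ and $W_KW_K^{\top}$, confines those products to a hypercube, replaces the ratio by its componentwise convex envelope (citing a known envelope for bivariate rational functions), and only then applies Jensen — yielding an \emph{adaptive} lower bound $f_V(W_{QQ}^L,W_{QQ}^U,W_{KK}^L,W_{KK}^U)=\Omega(n)$ rather than a universal constant. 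Your proposal also places the randomness on $\mathbf{X}$ and the Gumbel noise rather than on the weights, so even the probability space underlying $\mathbb{E}\lVert\cdot\rVert_F^2$ differs from the formal statement. Without the convexification step (or some substitute), the lower-bound halves of both $\Theta(n)$ and $\Theta(1)$ remain unproved.
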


Since the full proof for Theorem~\ref{SFTgradinformal} is quite convoluted, we would like to sketch the proof as follows:
\begin{itemize}
    \item Adaptive lower bound:
    \begin{itemize}
        \item\textbf{Step 1:} Derive a fractional function form $f(\boldsymbol{W})/g(\boldsymbol{W})$ where $\boldsymbol{W}$ represents the set of parameters,
        \item\textbf{Step 2:} Convexify the fractional form via the convex envelope derived by \citep{convexify},
        \item\textbf{Step 3:} Evaluate the lower bound using the Jensen inequality,
        \item\textbf{Step 4:} Make any simplification if necessary.
    \end{itemize}
    \item Global upper bound: employ the norm product inquality and the Jensen inequality for concave functions.
\end{itemize}

The full statement and proof is at the Appendix Section~\ref{appendix:SFTgrad}.

\subsection{Complexity Analysis}

Consider the input token $\X\in\mathbb{R}^{n\times d}$, query and key weights $W_Q,W_K\in\mathbb{R}^{d\times d_a}$.

The computational cost for each step is as follows:
\begin{itemize}
    \item Linear transformations for $\Q,\K$ and $\mathbf{V}$: $O(nd^2)$,
    \item Importance score of the queries and keys: $O(nd)$,
    \item Selecting top-$k$ importance score: $O(n\log n)$ with sorting method or $O(n)$ with quickselect algorithm,
    \item Attention between $n$ queries and top-$k$ most important keys: $O(nkd_a)$,
    \item FFN network: $O(nd^2)$.
\end{itemize}

Given these complexities and considering the long-ranged data structures which can implies $n\gg d$, the asymptotic computational complexity of one single SFT layer is $O(n)$. The asymptotic space complexity of one single SFT layer is also $O(n)$.
\section{Experiments} \label{sec:experiments}

\subsection{Objectives}
The core of any foundational model is its efficacy in modeling diverse data modalities. And, the fundamental difference between data modalities is the different types of relationships between tokens. Therefore, we conduct extensive experiments to measure the ability of our attention mechanism to model \underline{dense low-rank} (point clouds) and \underline{sparse high-rank} relationships (graphs and sequences). As we propose our pseudoconvex mechanism, we also conduct a comparative experiment to measure the real runtime of our transformer formulation against the vanilla transformer. 

In short, we answer three following questions:
\begin{itemize}
    \item How effectively can leakiness encode relative information into the tokens' representation?
    \item How effectively can our sampling-based global attention model long-range interaction?
    \item How significant does our pseudoconvex formulation aid model convergence rate?
\end{itemize}

\begin{figure}[t]
    \centering
    \includegraphics[width=0.67\textwidth]{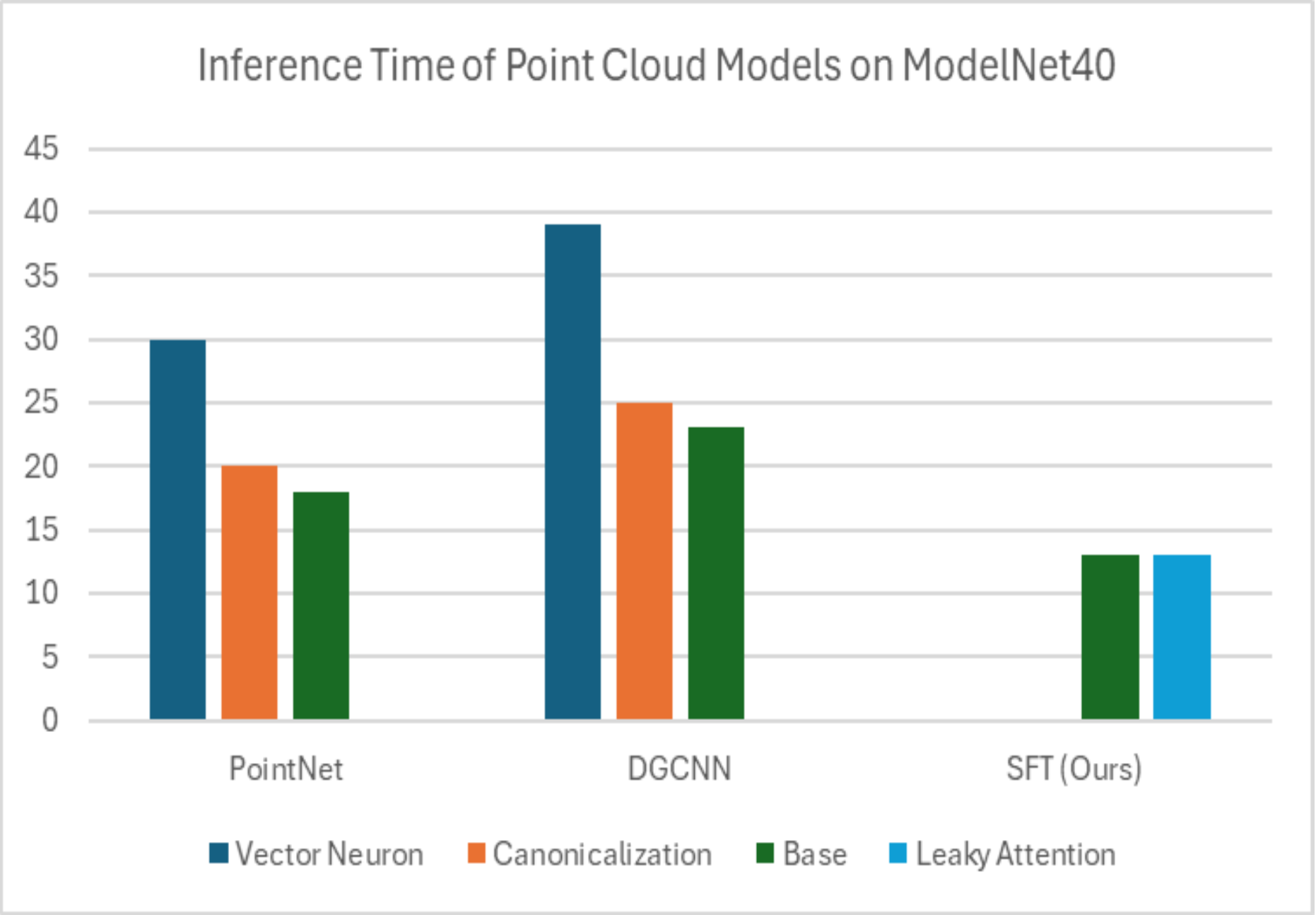}
    \caption{Inference time (in seconds) of the networks for ModelNet40 classification test split in 1 A100 GPU and 8 CPUs with a batch size of 32. The Vector Neuron~\citep{VectorNeurons} and Canonicalization~\citep{LearntEquivariance} framework is applied in PointNet~\citep{PointNet} and DGCNN~\citep{DynamicGraphCNN}. The leaky attention function is applied to our SFT model. Our model in this plot does not use {compile method from PyTorch} to speed up for fair comparison {(when using, it halves the inference time on A100)}. The results of others are taken from~\citep{LearntEquivariance}.}
    \label{fig-rotational-invariant-efficiency}
\end{figure}

\subsection{Relative Learning}
We conduct experiments of relative learning on two particular archetypes: Dense Low-Rank and Sparse High-Rank because most data that exists in reality falls in either of the two mentioned groups.

\subsubsection{Dense Low-Rank relative Learning}\label{sec:DenseLowRank}
To measure the ability of our model in modeling dense low-rank relationship, we test our model on two point cloud tasks involving two datasets: Object Classification on ModelNet40 \citep{Modelnet40}, Part Segmentation on ShapeNetPart \citep{Shapenet}; with/without rotational invariant constraint. The motivation we choose point cloud to experiment Dense Low-Rank relative Learning is simple: rank of Euclidean Distance Matrix of $p$-dimensional points is at most $p+2$~\citep{EDMRank} and point cloud coordinates can be reconstructed from their Euclidean Distance Matrix through Multidimensional Scaling algorithm.

Our experiments on rotational invariant constraint specifically remove all of point cloud coordinates in input tokens, which forces our model to use only relative information to discriminate. By making our model rely solely on relative information, the experiment validates the generalizability of information obtained through rank injection phenomena. We further compare it against other dedicated rotational invariant models in Table~\ref{tab-relative-positional-modeling} and it shows comparable performance to these dedicated rotational invariant methods, while can be inferenced \textbf{hundreds of percents} faster (computational time measured in Figure~\ref{fig-rotational-invariant-efficiency}). Note that, our relative positional encoding scheme \textbf{can be dropped into any point cloud transformer}, making it rotational invariant with neglectable additional cost (because all of the point cloud transformers already feature relative positional encoding).

We also conduct experiments on point cloud datasets without rotational invariant constraints to measure how effective our method as a whole is in modeling non-sequential data in Table~\ref{tab-non-sequential}. It shows competitive results against other point cloud models. This is because our model does not include any data-modality-specific inductive bias (like the usage of kNN for locality heuristic) and is susceptible to overfitting in scenarios where data is not abundant.

\begin{table}[h]
\caption{Experimental results to measure the effectiveness of leaky probability function in terms of transferring relative information to tokens' representation. The table shows shape classification results on ModelNet40 dataset~\citep{Modelnet40} and part segmentation results on ShapeNetPart dataset~\citep{Shapenet} under rotational invariant constraint with popular baselines in rotational invariance/equivariant literature. z/z, z/SO(3), SO(3)/SO(3) signifies the model trained with 3D coordinates (no rotational invariant constraint), the model trained with rotational invariant features/trained using rotational equivariant transformation, and trained with random rotation data augmentation, respectively.}
\label{tab-relative-positional-modeling}
            \centering
\begin{small}
\begin{tabular}{l|cccc}
\toprule
\multirow{2}{*}{\textbf{Method}} & \multicolumn{2}{c}{\textbf{ModelNet40}}      & \multicolumn{2}{c}{\textbf{ShapeNetPart}}                \\ 
\cmidrule(r){2-5}
                        & \textbf{z/z}${}^\uparrow$ & \textbf{z/SO(3)}${}^\uparrow$ & \textbf{z/SO(3)}${}^\uparrow$ & \textbf{SO(3)/SO(3)}${}^\uparrow$\\
\midrule
SFCNN~\citep{SFCNN} & {91.4} & 84.8  & - & -\\
TFN~\citep{TensorFieldNetworks} & 88.5 & 85.3 & 76.8 & 76.2\\
RI-Conv~\citep{Rotational_Invariant_Convolution} & 86.5 & 86.4 & 75.3 & 75.3 \\
SPHNet~\citep{SPHNet} & 87.7 & 86.6 & - & -\\
ClusterNet~\citep{ClusterNet} & 87.1 & 87.1  & - & -\\
GC-Conv~\citep{GC_Conv} & 89.0 & 89.1 & 77.2 & 77.3 \\
RI-Framework~\citep{RI_Framework} & 89.4 & 89.4 & 79.2 & 79.4 \\
VN-PointNet~\citep{VectorNeurons} & 77.5 & 77.5 & 72.4 & 72.8\\
VN-DGCNN~\citep{VectorNeurons} & 89.5 & {89.5} & {81.4} & {81.4}\\
CN(NL)-PointNet~\citep{LearntEquivariance} & 79.9 & 79.6 & 73.5 & 73.6\\
CN(NL)-DGCNN~\citep{LearntEquivariance} & 88.7 & 88.8 & 78.4 & 78.5\\
\midrule
SFT (Ours)             & 91.1 & 87.2 & 78.3 & - \\
\bottomrule
\end{tabular}
\end{small}
\end{table}

\setlength{\tabcolsep}{9pt}
\begin{table}[h!]
\caption{Experimental results to measure the effectiveness of our sparse attention on non-sequential data structure, particularly spatial data structure. The table shows shape classification results on ModelNet40 dataset~\citep{Modelnet40} and part segmentation results on ShapeNetPart dataset~\citep{Shapenet} along with popular baselines in the point cloud analysis literature.}
\label{tab-non-sequential}
            \centering
\begin{small}
\begin{tabular}{l|cccc}
\toprule
\multirow{2}{*}{\textbf{Method}} & \multicolumn{2}{c}{\textbf{ModelNet40}}      & \multicolumn{2}{c}{\textbf{ShapeNetPart}}                \\ 
\cmidrule(r){2-5}
                        & \textbf{mAcc}${}^\uparrow$ & \textbf{OA}${}^\uparrow$ & \textbf{c. IoU}${}^\uparrow$ & \textbf{i. IoU}${}^\uparrow$ \\
\midrule
PointNet~\citep{PointNet}               & 86.2              & 89.2            & 80.4                   & 83.7                   \\
Set Transformer~\citep{SetTransformer}         & -                 & 90.4            & -                      & -                      \\
PointNet++~\citep{PointNet++}              & -                 & 91.9            & 81.9                   & 85.1                   \\
SpecGCN~\citep{wang2018local}                 & -                 & 92.1            & -                      & -                      \\
PointCNN~\citep{PointCNN}                & 88.1              & 92.2            & 84.6                   & 86.1                   \\
DGCNN~\citep{DynamicGraphCNN}                   & 90.2              & 92.2            & 82.3                   & 85.1                   \\
PointWeb~\citep{PointWeb}                & 89.4              & 92.3            & -                      & -                      \\
SpiderCNN~\citep{SpiderCNN}               & -                 & 92.4            & 81.7                   & 85.3                   \\
PointConv~\citep{PointConv}               & -                 & 92.5            & 82.8                   & 85.7                   \\
Point2Sequence~\citep{Point2Sequence}          & 90.4              & 92.6            & -                      & 85.2                   \\
KPConv~\citep{KPConv}                  & -                 & 92.9            & {85.1}                   & 86.4                   \\
InterpCNN~\citep{InterpolatedConvolutionalNetworks}               & -                 & 93.0            & 84.0                   & 86.3                   \\
Point Transformer~\citep{PointTransformer}       & {90.6}              & {93.7}            & 83.7                   & {86.6}                   \\
Sequoia~\citep{Sequoia}       & 88.4              & {92.0}            & 80.6                   & 83.8                   \\
\midrule
SFT (Ours)       & 88.2 & 91.1 & 81.3 & 84.5             \\
\bottomrule
\end{tabular}
\end{small}
\end{table}

\subsubsection{Sparse High-Rank relative Learning}\label{sec:SparseHighRank}
To measure the ability of our model in modeling sparse high-rank relationship, we test our model on two data modalities: sequence and graph. For sequential tasks, we choose Long-Range-Arena Benchmark~\citep{longrangearena}, a standard to measure the effectiveness of efficient transformer schemes in sequential modeling. For graph tasks, we choose three datasets: Peptides-func~\citep{Peptideds}, Peptides-struct~\citep{Peptideds}, and PascalVOC-sp~\citep{PascalVOC} in Long-Range-Graph-Benchmark~\citep{LongRangeGraphBenchmark}, which is the equivalent of Long-Range-Arena-Benchmark for graph transformers. 

The reason we chose these two data modalities to represent sparse high-rank relative learning is intuitive: both the causal relationship of sequences and graph adjacency matrix is very sparse yet high rank. The competitive experimental results of our model in Long-Range-Arena-Benchmark and Long-Range-Graph-Benchmark are shown in Table~\ref{tab-lra} and Table~\ref{tab-lrgb}, respectively.

In sequential tasks, our model has competitive results against many other efficient transformers and the full attention transformer. However, it should be note that we did not reach the performance level of models designed specifically for sequential tasks like MEGA~\citep{MEGA} and S4~\citep{S4}. The inductive bias of EMA {(Exponential Moving Average)} in MEGA and SSM cannot be applied to other data structures like point clouds and graphs.

In graphs tasks, our model has relatively good results in peptides dataset compared to other efficient transformers, full attention transformer, and superior results to all classical message passing neural networks. However, since our transformer construction for graph is simple and does not rely on local information aggregation like GraphGPS~\citep{GraphGPS}, it does not perform well on PascalVOC-sp dataset. This shows a limitation on how our method's relative information aggregation.

\setlength{\tabcolsep}{9pt}
\begin{table*}[h!]
\caption{(Long Range Graph Benchmark) Performance on Peptide and Computer-Vision Graph datasets. The performance of our model on peptide-func, peptide-struct, and the computer vision-based PascalVOC datasets is measured using Average Precision (AP), Mean Absolute Error (MAE), and F1-score metrics, respectively.}
\label{tab-lrgb}
\begin{center}
\begin{tabular}{l|ccc}
\toprule
\textbf{Method} & \textbf{Pept-func} ${}^\uparrow$ & \textbf{Pept-struct ${}^\downarrow$} & \textbf{PasVOC-sp ${}^\uparrow$} \\
\midrule
GCN \citep{GCN} & 0.5930 & 0.3496 & 0.1268 \\
GCNII \citep{GCNII} & 0.5543 & 0.3471 & 0.1698 \\
GINE \citep{GINE} & 0.5498 & 0.3547 & 0.1265 \\
GatedGCN \citep{GatedGCN} & 0.5864 & 0.3420 & 0.1265 \\
GatedGCN (RWPE) \citep{GatedGCN} & 0.6069 & 0.3357 & 0.1265 \\
\midrule
Transformer (LapPE) \citep{AttentionAllYouNeed} & 0.6326 & 0.2529 & 0.2694 \\
Transformer (RWPE) \citep{AttentionAllYouNeed} & 0.6502 & 0.2620 & 0.2718 \\
SAN (LapPE) \citep{SAN} & 0.6384 & 0.2683 & 0.3230 \\
SAN (RWPE) \citep{SAN} & 0.6562 & 0.2545 & 0.3216 \\
GPS \citep{GPS} & 0.6535 & 0.2500 & 0.3748 \\
Exphormer \citep{Exphormer} & 0.6527 & 0.2481 & 0.3975 \\
GPS-Sequoia-RWPE \citep{Sequoia} & 0.6755 & 0.2453 & 0.3379 \\
\midrule
SFT (Ours) & 0.6674 & 0.2661 & 0.1961 \\
SFT-RWPE (Ours) & 0.6884 & 0.2655 & 0.2181\\
\bottomrule
\end{tabular}
\end{center}
\end{table*}

\setlength{\tabcolsep}{9pt}
\begin{table*}[h!]
\caption{(Long Range Arena) Accuracy on the full suite of long range arena (LRA) tasks. The performance of our model on peptide-func, peptide-struct, and the two computer vision-based datasets is measured using Average Precision (AP), Mean Absolute Error (MAE), and F1-score metrics, respectively.}
\label{tab-lra}
\begin{center}
\begin{tabular}{l|ccccc}
\toprule
\textbf{Method} & \textbf{ListOps} & \textbf{Text} & \textbf{Retrieval} & \textbf{Image} & \textbf{Pathfinder} \\
\midrule
BigBird~\citep{BigBird} & 36.05 & 64.02 & 59.29 & 40.83 & 74.87 \\
Reformer~\citep{kitaev2020reformer} & 37.27 & 56.10 & 53.40 & 38.07 & 68.50 \\
Performer~\citep{performer} & 18.01 & 65.40 & 53.82 & 42.77 & 77.05 \\
Linformer~\citep{wang2020linformer} & 35.70 & 53.94 & 52.27 & 38.56 & 76.34 \\
Luna-256~\citep{ma2021luna} & 37.98 & 65.78 & 79.56 & 47.86 & 78.55 \\
Transformer~\citep{AttentionAllYouNeed} & 36.37 & 64.27 & 57.46 & 42.44 & 71.40 \\
Sequoia~\citep{Sequoia} & 37.70 & 75.10 & 67.04 & 49.88 & 87.30 \\
\midrule
S4~\citep{S4} & 88.65 & 76.02 & 87.09 & 86.09 & 86.05 \\
MEGA~\citep{MEGA} & 63.14 & 90.43 & 91.25 & 90.44 & 96.01 \\
\midrule
SFT-Relative (Ours)     & 39.95 & 64.54 & 71.33 & 47.65 & 78.39 \\
\bottomrule
\end{tabular}
\end{center}
\end{table*}

\subsubsection{Performance Gap between {Relative} and Absolute Positional Information Aggregation}
Additionally, we compare two variants of our models (with/without absolute position) on both Dense-Low-Rank and Sparse-High-Rank experiments. We notice a significant performance gap on all of the experimented datasets: ModelNet40, ShapeNetPart, Peptides-func, Peptides-struct, PascalVOC-sp; shown in Table~\ref{tab-performance-gap}. It suggested that our relative modeling performance is inferior to introducing relations directly into token embeddings, despite our formulation allowing more complicated constraints to be implemented. 

\setlength{\tabcolsep}{3.0pt} 
\begin{table} 
\caption{Performance Gap between Relative and Absolute Positional Information Aggregation}
\label{tab-performance-gap}
\begin{center}
\begin{tabular}{l|ccccc}
\toprule
 & ModelNet40${}^\uparrow$ & ShapeNetPart${}^\uparrow$ & Peptides-func${}^\uparrow$ & Peptides-struct${}^\downarrow$ & PascalVOC-sp${}^\uparrow$ \\
\midrule
\textbf{relative} & 87.2 & 78.3 & 0.67 & 0.2661 & 0.20 \\
\textbf{Absolute}   & 91.1 & 81.3 & 0.69 & 0.2655 & 0.22 \\
\midrule
\textbf{Performance Gap} & 3.9 & 3.0 & 0.02 & -0.001 & 0.02 \\
\bottomrule     
\end{tabular}
\end{center}
\end{table}

We suspect this is related to how we construct relative information since we only use two linear layers to transform relative information. We have experimented with multi-layer perceptron to transform relative information, but it proved to be too slow in practice. This is because transforming $k * n$ vectors (or an entire attention map), where $k$ and $n$ is the number of sampled tokens and the number of tokens, respectively, is an extremely costly operation. We believe efficient relative information transformation can be an interesting open question for future research.



\subsection{Learning Curve Analysis}
We conduct learning curve analysis to see the effect of our pseudoconvex formulation to the convergence rate as well as the effect of sampling rate on the performance of our model. Since learning curve can be too noisy to interpret, we include curves of cumulative maximum attained performance for easier interpretation.

\subsubsection{Convergence Speed}
To verify our claim on the effectiveness of our componentwise pseudoconvex transformer module, we conduct comparative experiments between three models: the model using Leaky-Relu Probability function with Maxout Attention score (1), the model using Softmax Probability function with Scaled-Dot Attention score (2), and the vanilla transformer without relative information (3). The models are trained on ModelNet40 dataset. The comparison between (1) and (2) shows the effectiveness of the pseudoconvex formulation against the traditional softmax-dot-product. The comparison between (1, 2) and (3) shows the necessity to include relative information. The experiments are shown in Figure~\ref{fig:learning-curve}. The higher convergence rate of our model verifies the effectiveness of our pseudoconvex formulation against the vanilla attention formulation, and the better performance of the two models using relative information against vanilla attention verifies the usefulness of relative information.

\begin{figure}[t]
    \centering
    \includegraphics[width = 0.7\textwidth]{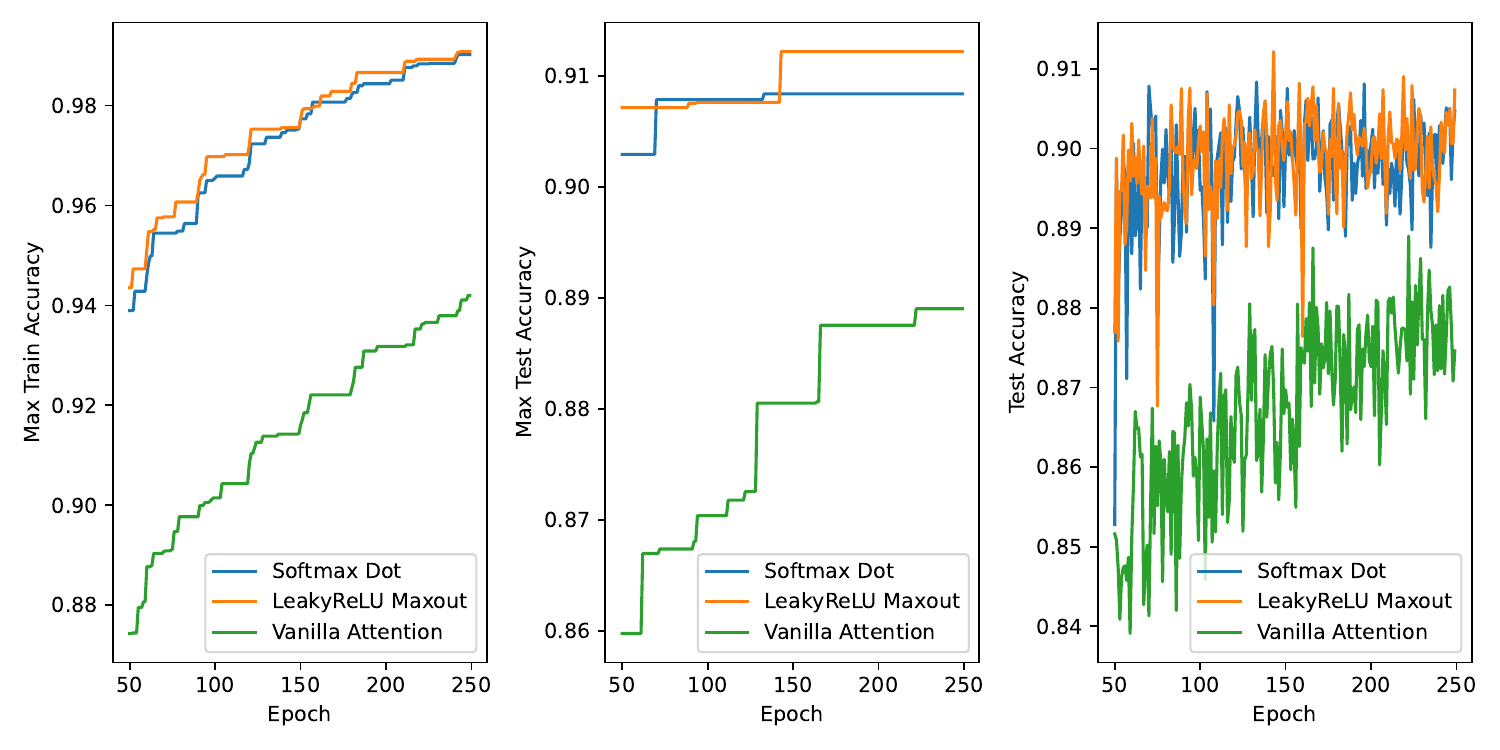}
    \caption{Comparison between learning curves of SFT with Leaky Relu-based + Maxout (orange), with softmax + dot product (blue) and vanilla transformer without relative information (green) on ModelNet40 throught 256 epochs. The actual test accuracy curve is shown (right) while the max accuracy through all epochs are also visualized during training (left) and testing (middle).}
    \label{fig:learning-curve}
\end{figure}

\subsubsection{Performance-Efficiency Tradeoff on Sampling Rate}
To examine the performance-efficiency tradeoff, we trained our model with different sampling rates: 0.4\%, 6.25\%, 12.5\%, 25\%, and 50\%, measured both of their train and test accuracy per epoch, and measured the runtime of models of different sampling rates. 

In Figure~\ref{fig:tradeoff}, we show the result of SFT performance with five different sampling rates. Our results suggest that higher sampling rate is likely to increase the convergence speed and performance. However, it should be noted that the increment of sampling rate from 25\% to 50\% does not result in better test accuracy even has a clear higher sampling rate and an extremely low sampling rate (0.4\%) can achieve reasonable performance.

To precisely measure the runtime of models, we run with one CPU thread. SFT layers with various numbers of sampled points are compared against the vanilla transformer (using the built-in implementation in PyTorch to measure). The input is a batch of 4 1024-length sequences. The result is shown in Table~\ref{tab-complexity-memory}, which shows that our model is more efficient than the vanilla transformer when the sampling percentage is less than 50\%. It should be note that this does not exclude the computation of MLP module within the transformer module and the built-in vanilla transformer does not support relative information. The computational cost of modules is discussed in Section~\ref{sec:computational-cost-breakdown}.

\begin{figure}
    \centering
    \includegraphics[width = 0.7\textwidth]{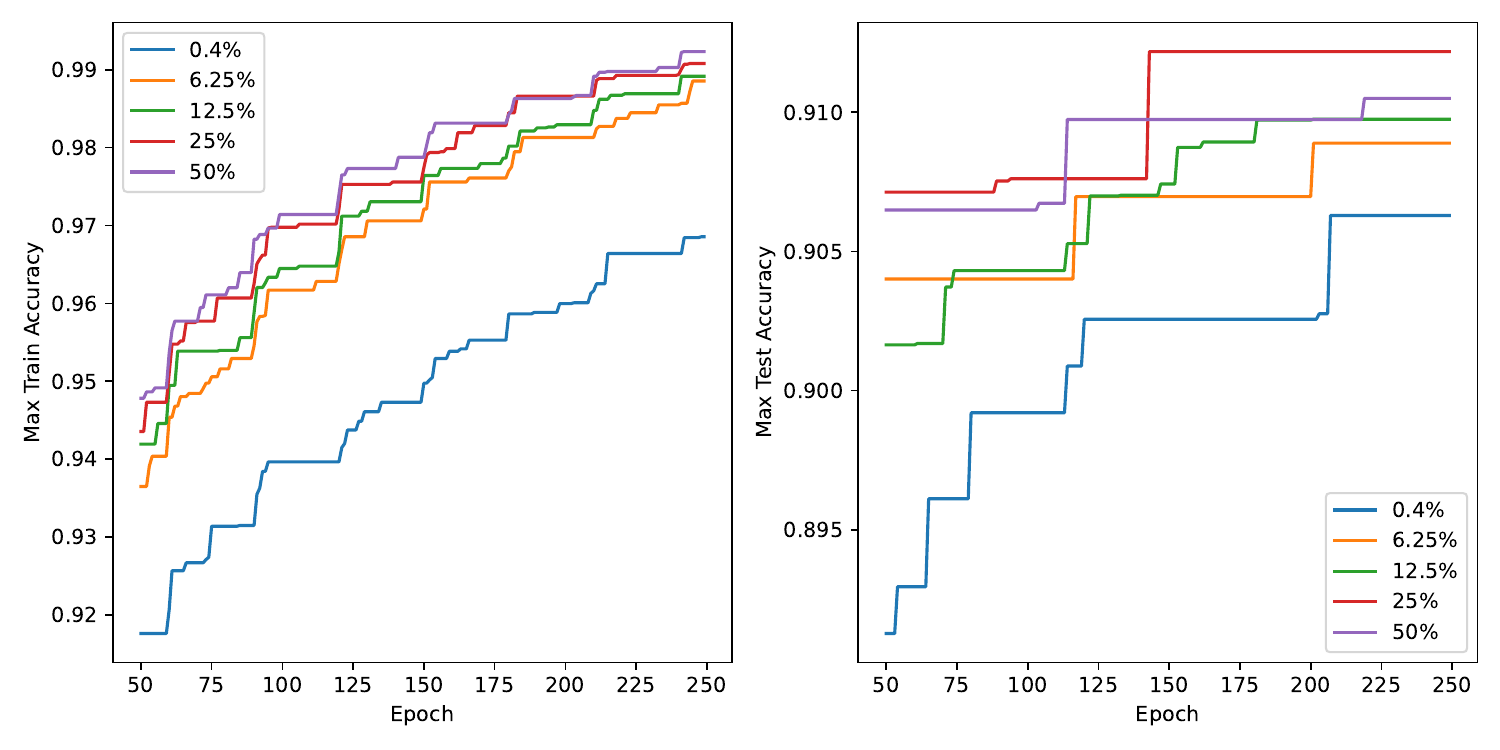}
    \caption{The influence of sampling rate onto the performance of SFT at five different sampling rates. The accuracy during training (left) and testing (right) are taken by max through 256 epochs.}
    \label{fig:tradeoff}
\end{figure}

\setlength{\tabcolsep}{3.0pt} 
\begin{table} 
\caption{Model runtime measured in seconds (10 runs averaged).}
\label{tab-complexity-memory}
\begin{center}
\begin{tabular}{c|ccccc|c}
\toprule
\multicolumn{1}{l|}{} & \multicolumn{5}{c|}{\textbf{SFT (Ours)}}        & \textbf{Transformer} \\
\midrule
\textbf{\#Sampling}           & 32   & 64   & 128  & 256 & 512 & 1024        \\
\textbf{\%Sampling}           & 3.13\% & 6.25\% & 12.5\% & 25.0\% & 50.0\% & 100\%         \\
\midrule
\textbf{Runtime}${}^\downarrow$ & 0.30s & 0.32s & 0.34s & 0.46s & 0.64s & 0.64s\\
\bottomrule     
\end{tabular}
\end{center}
\end{table}

\subsection{Computational Cost Breakdown}\label{sec:computational-cost-breakdown}

To provide a more comprehensive insight on the computational cost of submodules of an SFT layer, we measure the computational cost of each SFT submodule (measured in GFLOPS) with results in Table~\ref{tab-computational-cost}. The layer processes 1024 256-dimensional tokens. 

From Table~\ref{tab-computational-cost}, it can be inferred that the sampling cost is negligible compared to other submodules. However, even with only two linear layers, the model is computational time for relative positional encoding module still has high computational cost. Without relative positional encoding, the model can be even faster; however, the application of our model is hindered (e.g. no longer being able to process graph data modality, performance drop on point cloud data modality).

\setlength{\tabcolsep}{5.0pt} \renewcommand{\arraystretch}{1.5}
\begin{table} 
\caption{Submodule SFT Computational Cost}
\label{tab-computational-cost}
\begin{center}
\begin{tabular}{l|l|llll|lll|l}
\toprule
\multirow{2}{*}{Sampling rate} &
  \multirow{2}{*}{Sampling} &
  \multicolumn{4}{c|}{QKVC} &
  \multicolumn{3}{c|}{MultiHeadAttention} &
  \multicolumn{1}{c}{\multirow{2}{*}{MLP}} \\ \cline{3-9}
       &        & Q     & K     & V     & C      & Relative & W\_cat & Remainder & \multicolumn{1}{c}{} \\ \midrule
100\%  & 0.0037 & 0.268 & 0.268 & 0.268 & 0.016  & 0.268    & 0.268  & 0.044     & 2.2                  \\
50\%   & 0.0037 & 0.268 & 0.134 & 0.134 & 0.0084 & 0.134    & 0.268  & 0.022     & 2.2                  \\
25\%   & 0.0037 & 0.268 & 0.067 & 0.067 & 0.0042 & 0.067    & 0.268  & 0.009     & 2.2                  \\
12.5\% & 0.0037 & 0.268 & 0.034 & 0.034 & 0.0021 & 0.034    & 0.268  & 0.004     & 2.2                  \\
6.25\% & 0.0037 & 0.268 & 0.017 & 0.017 & 0.0011 & 0.017    & 0.268  & 0.002     & 2.2    \\
\bottomrule
\end{tabular}
\end{center}
\end{table}

\subsection{Rank Injection Phenomena}
In Figure~\ref{fig-rank-progression}, we have explored how rank of input token embeddings gradually increases under random initialization. However, at a computationally feasible number of layers (1 - 20), the rank of input token embeddings is unfortunately still small and does not carry enough information to distinguish between different tokens. Here, we revisit the phenomena from a different perspective: a fully trained model. We measure cosine similarity between tokens and matrix rank layer by layer to show how information flows from relative positional encoding to token embeddings. The model and task we chose is rotational invariant point cloud classification on ModelNet40, which features 1024 256-dimensional tokens.

\begin{figure}[t]
    \centering
    \includegraphics[width = 0.7\textwidth]{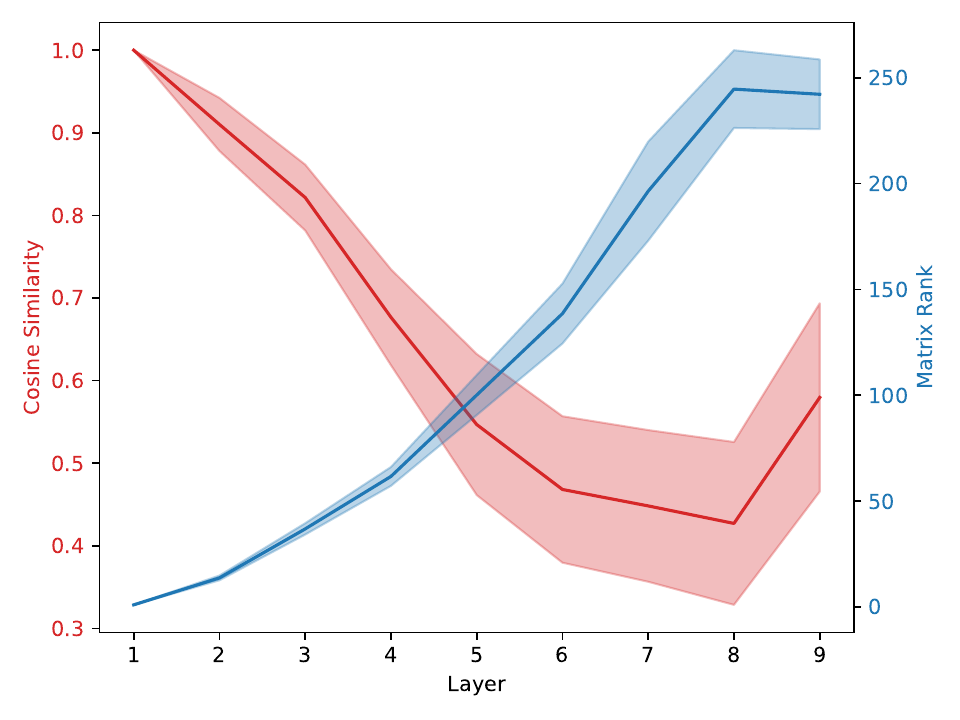}
    \caption{Similarity and Rank of Token Embeddings through layers of 32 Randomly Selected Data-Points}
    \label{fig:cosine-similarity}
\end{figure}

From Figure~\ref{fig:cosine-similarity}, it shows token embeddings start to be dissimilar around $5^{\text{th}}$ layer. Similarly, token embeddings' matrix rank rises through layer transformations, \textbf{near to the maximum possible rank} (the number of dimensions of token embeddings is 256). This shows the model can generate complex spatial relationships between tokens since the input relative positional encoding is low-rank~\citep{EDMRank}.

\section{Conclusion} \label{sec:conclusion}
In this paper, we made three contributions: a data-driven differentiable sampling without replacement method, self-attention formulation with pseudoconvexity and bounded gradient norm, and leaky probability function for relative positional encoding. Two of our three contributions aim at lowering the complexity of training transformers while the last one significantly improves the expressivity of transformer regarding relative positional encoding. We have empirically demonstrated the usefulness of the three modules and show at least one application: zero-additional computing cost to achieve rotational invariant - a desired property for multiple point cloud problems (e.g., molecular modeling). By easing the hyperparameter tuning pain, it may pave the way for very deep transformer networks to be trained at small facilities. In short, we make transformers more powerful and easier to train.

\textbf{Limitations and Future Works.} We list the limitations and the prospects of our work as the following:

\begin{itemize}
    \item We have yet to incorporate dedicated point cloud modules into our model as well as use the leaky positional encoding for prominent state-of-the-art point cloud transformers.

    \item Our model's efficiency bottleneck lies on its relative positional encoding computation, where at full attention, the relative positional encoding takes as much as 50\% of multi-head attention computational cost. Future work would be to incorporate more sophisticated \textcolor{red}{relative} positional encoding that works well on all data modalities (e.g., point clouds, sequences, images, graphs, etc.).

    \item We have not yet experimented on heterogeneous datasets. Since SFT is a model that has shown effectiveness on a wide range of data modalities (point clouds, sequences, graphs), learning to process tokens of different types of data modalities is an interesting question. Processing tokens of different types of data modalities would provide unprecedented generative ability of models since it allows more precise information mediums such as graphs to describe relationships, tables to describe statistical data, and point clouds to describe 3D structures.
\end{itemize}

\bibliography{paper}

\begin{thebibliography}{130}
\providecommand{\natexlab}[1]{#1}
\providecommand{\url}[1]{\texttt{#1}}
\expandafter\ifx\csname urlstyle\endcsname\relax
  \providecommand{\doi}[1]{doi: #1}\else
  \providecommand{\doi}{doi: \begingroup \urlstyle{rm}\Url}\fi

\bibitem[Ainslie et~al.(2020)Ainslie, Ontanon, Alberti, Cvicek, Fisher, Pham,
  Ravula, Sanghai, Wang, and
  Yang]{EncodingLongandStructuredInputsinTransformers}
Joshua Ainslie, Santiago Ontanon, Chris Alberti, Vaclav Cvicek, Zachary Fisher,
  Philip Pham, Anirudh Ravula, Sumit Sanghai, Qifan Wang, and Li~Yang.
\newblock {ETC}: Encoding long and structured inputs in transformers.
\newblock In Bonnie Webber, Trevor Cohn, Yulan He, and Yang Liu (eds.),
  \emph{Proceedings of the 2020 Conference on Empirical Methods in Natural
  Language Processing (EMNLP)}, pp.\  268--284, Online, November 2020.
  Association for Computational Linguistics.
\newblock \doi{10.18653/v1/2020.emnlp-main.19}.
\newblock URL \url{https://aclanthology.org/2020.emnlp-main.19}.

\bibitem[Anderson et~al.(2019)Anderson, Hy, and Kondor]{Cormorant}
Brandon Anderson, Truong~Son Hy, and Risi Kondor.
\newblock Cormorant: Covariant molecular neural networks.
\newblock In H.~Wallach, H.~Larochelle, A.~Beygelzimer, F.~d\textquotesingle
  Alch\'{e}-Buc, E.~Fox, and R.~Garnett (eds.), \emph{Advances in Neural
  Information Processing Systems}, volume~32. Curran Associates, Inc., 2019.
\newblock URL
  \url{https://proceedings.neurips.cc/paper_files/paper/2019/file/03573b32b2746e6e8ca98b9123f2249b-Paper.pdf}.

\bibitem[Armeni et~al.(2016)Armeni, Sener, Zamir, Jiang, Brilakis, Fischer, and
  Savarese]{s3dis}
Iro Armeni, Ozan Sener, Amir~R. Zamir, Helen Jiang, Ioannis Brilakis, Martin
  Fischer, and Silvio Savarese.
\newblock 3d semantic parsing of large-scale indoor spaces.
\newblock In \emph{Proceedings of the IEEE International Conference on Computer
  Vision and Pattern Recognition}, 2016.

\bibitem[Beltagy et~al.(2020)Beltagy, Peters, and Cohan]{Beltagy2020Longformer}
Iz~Beltagy, Matthew~E. Peters, and Arman Cohan.
\newblock Longformer: The long-document transformer.
\newblock \emph{arXiv:2004.05150}, 2020.

\bibitem[Bresson \& Laurent(2018)Bresson and Laurent]{GatedGCN}
Xavier Bresson and Thomas Laurent.
\newblock Residual gated graph convnets, 2018.

\bibitem[Bronstein et~al.(2021)Bronstein, Bruna, Cohen, and
  Velickovic]{GeometricDeepLearningReview}
Michael~M. Bronstein, Joan Bruna, Taco Cohen, and Petar Velickovic.
\newblock Geometric deep learning: Grids, groups, graphs, geodesics, and
  gauges.
\newblock \emph{CoRR}, abs/2104.13478, 2021.
\newblock URL \url{https://arxiv.org/abs/2104.13478}.

\bibitem[Brossard et~al.(2021)Brossard, Frigo, and Dehaene]{GINE}
Rémy Brossard, Oriel Frigo, and David Dehaene.
\newblock Graph convolutions that can finally model local structure, 2021.

\bibitem[Cai et~al.(2023)Cai, Hy, Yu, and
  Wang]{OnTheConnectionMPNNGraphTransformer}
Chen Cai, Truong~Son Hy, Rose Yu, and Yusu Wang.
\newblock On the connection between {MPNN} and graph transformer.
\newblock In Andreas Krause, Emma Brunskill, Kyunghyun Cho, Barbara Engelhardt,
  Sivan Sabato, and Jonathan Scarlett (eds.), \emph{Proceedings of the 40th
  International Conference on Machine Learning}, volume 202 of
  \emph{Proceedings of Machine Learning Research}, pp.\  3408--3430. PMLR,
  23--29 Jul 2023.
\newblock URL \url{https://proceedings.mlr.press/v202/cai23b.html}.

\bibitem[Cambini \& Martein(2008)Cambini and Martein]{cambini2008}
A.~Cambini and L.~Martein.
\newblock \emph{Generalized Convexity and Optimization: Theory and
  Applications}.
\newblock Lecture Notes in Economics and Mathematical Systems. Springer Berlin
  Heidelberg, 2008.
\newblock ISBN 9783540708759.
\newblock URL \url{https://books.google.com.vn/books?id=P_h3jgEACAAJ}.

\bibitem[Carion et~al.(2020)Carion, Massa, Synnaeve, Usunier, Kirillov, and
  Zagoruyko]{DETR}
Nicolas Carion, Francisco Massa, Gabriel Synnaeve, Nicolas Usunier, Alexander
  Kirillov, and Sergey Zagoruyko.
\newblock End-to-end object detection with transformers.
\newblock In Andrea Vedaldi, Horst Bischof, Thomas Brox, and Jan-Michael Frahm
  (eds.), \emph{Computer Vision -- ECCV 2020}, pp.\  213--229, Cham, 2020.
  Springer International Publishing.
\newblock ISBN 978-3-030-58452-8.

\bibitem[Chang et~al.(2015)Chang, Funkhouser, Guibas, Hanrahan, Huang, Li,
  Savarese, Savva, Song, Su, et~al.]{Shapenet}
Angel~X Chang, Thomas Funkhouser, Leonidas Guibas, Pat Hanrahan, Qixing Huang,
  Zimo Li, Silvio Savarese, Manolis Savva, Shuran Song, Hao Su, et~al.
\newblock Shapenet: An information-rich 3d model repository.
\newblock \emph{arXiv preprint arXiv:1512.03012}, 2015.

\bibitem[Charles et~al.(2017)Charles, Su, Kaichun, and Guibas]{PointNet}
R.~Qi Charles, Hao Su, Mo~Kaichun, and Leonidas~J. Guibas.
\newblock Pointnet: Deep learning on point sets for 3d classification and
  segmentation.
\newblock In \emph{2017 IEEE Conference on Computer Vision and Pattern
  Recognition (CVPR)}, pp.\  77--85, 2017.
\newblock \doi{10.1109/CVPR.2017.16}.

\bibitem[Chen et~al.(2019)Chen, Li, Xu, Chen, Wang, and Lin]{ClusterNet}
Chao Chen, Guanbin Li, Ruijia Xu, Tianshui Chen, Meng Wang, and Liang Lin.
\newblock Clusternet: Deep hierarchical cluster network with rigorously
  rotation-invariant representation for point cloud analysis.
\newblock In \emph{2019 IEEE/CVF Conference on Computer Vision and Pattern
  Recognition (CVPR)}, pp.\  4989--4997, 2019.
\newblock \doi{10.1109/CVPR.2019.00513}.

\bibitem[Chen et~al.(2021)Chen, Liu, Chen, Li, and
  Hill]{EquivariantPointNetwork}
Haiwei Chen, Shichen Liu, Weikai Chen, Hao Li, and Randall Hill.
\newblock Equivariant point network for 3d point cloud analysis.
\newblock In \emph{2021 IEEE/CVF Conference on Computer Vision and Pattern
  Recognition (CVPR)}, pp.\  14509--14518, 2021.
\newblock \doi{10.1109/CVPR46437.2021.01428}.

\bibitem[Chen et~al.(2020)Chen, Wei, Huang, Ding, and Li]{GCNII}
Ming Chen, Zhewei Wei, Zengfeng Huang, Bolin Ding, and Yaliang Li.
\newblock Simple and deep graph convolutional networks, 2020.

\bibitem[Child et~al.(2019)Child, Gray, Radford, and
  Sutskever]{GPT3_GeneratingLongSequenceSparseTrans}
Rewon Child, Scott Gray, Alec Radford, and Ilya Sutskever.
\newblock Generating long sequences with sparse transformers.
\newblock \emph{URL https://openai.com/blog/sparse-transformers}, 2019.

\bibitem[Choromanski et~al.(2021)Choromanski, Likhosherstov, Dohan, Song, Gane,
  Sarlos, Hawkins, Davis, Mohiuddin, Kaiser, Belanger, Colwell, and
  Weller]{performer}
Krzysztof~Marcin Choromanski, Valerii Likhosherstov, David Dohan, Xingyou Song,
  Andreea Gane, Tamas Sarlos, Peter Hawkins, Jared~Quincy Davis, Afroz
  Mohiuddin, Lukasz Kaiser, David~Benjamin Belanger, Lucy~J Colwell, and Adrian
  Weller.
\newblock Rethinking attention with performers.
\newblock In \emph{International Conference on Learning Representations}, 2021.
\newblock URL \url{https://openreview.net/forum?id=Ua6zuk0WRH}.

\bibitem[Choudhary et~al.(2021)Choudhary, Luthra, Mittal, and Singh]{kg3}
Shivani Choudhary, Tarun Luthra, Ashima Mittal, and Rajat Singh.
\newblock A survey of knowledge graph embedding and their applications, 2021.

\bibitem[Cohen et~al.(2018)Cohen, Geiger, Köhler, and Welling]{SphericalCNN}
Taco~S. Cohen, Mario Geiger, Jonas Köhler, and Max Welling.
\newblock Spherical {CNN}s.
\newblock In \emph{International Conference on Learning Representations}, 2018.
\newblock URL \url{https://openreview.net/forum?id=Hkbd5xZRb}.

\bibitem[Deng et~al.(2021)Deng, Litany, Duan, Poulenard, Tagliasacchi, and
  Guibas]{VectorNeurons}
Congyue Deng, Or~Litany, Yueqi Duan, Adrien Poulenard, Andrea Tagliasacchi, and
  Leonidas Guibas.
\newblock Vector neurons: A general framework for so(3)-equivariant networks.
\newblock In \emph{2021 IEEE/CVF International Conference on Computer Vision
  (ICCV)}, pp.\  12180--12189, 2021.
\newblock \doi{10.1109/ICCV48922.2021.01198}.

\bibitem[Devlin et~al.(2019)Devlin, Chang, Lee, and Toutanova]{BERT}
Jacob Devlin, Ming-Wei Chang, Kenton Lee, and Kristina Toutanova.
\newblock {BERT}: Pre-training of deep bidirectional transformers for language
  understanding.
\newblock In Jill Burstein, Christy Doran, and Thamar Solorio (eds.),
  \emph{Proceedings of the 2019 Conference of the North {A}merican Chapter of
  the Association for Computational Linguistics: Human Language Technologies,
  Volume 1 (Long and Short Papers)}, pp.\  4171--4186, Minneapolis, Minnesota,
  June 2019. Association for Computational Linguistics.
\newblock \doi{10.18653/v1/N19-1423}.
\newblock URL \url{https://aclanthology.org/N19-1423}.

\bibitem[Ding et~al.(2022)Ding, Zhang, Han, and Ding]{LargeConvolutionalKernel}
Xiaohan Ding, Xiangyu Zhang, Jungong Han, and Guiguang Ding.
\newblock Scaling up your kernels to 31×31: Revisiting large kernel design in
  cnns.
\newblock In \emph{2022 IEEE/CVF Conference on Computer Vision and Pattern
  Recognition (CVPR)}, pp.\  11953--11965, 2022.
\newblock \doi{10.1109/CVPR52688.2022.01166}.

\bibitem[Dong et~al.(2023)Dong, Cordonnier, and Loukas]{pureattentionloserank}
Yihe Dong, Jean-Baptiste Cordonnier, and Andreas Loukas.
\newblock Attention is not all you need: Pure attention loses rank doubly
  exponentially with depth, 2023.

\bibitem[Dosovitskiy et~al.(2021)Dosovitskiy, Beyer, Kolesnikov, Weissenborn,
  Zhai, Unterthiner, Dehghani, Minderer, Heigold, Gelly, Uszkoreit, and
  Houlsby]{VisionTransformer}
Alexey Dosovitskiy, Lucas Beyer, Alexander Kolesnikov, Dirk Weissenborn,
  Xiaohua Zhai, Thomas Unterthiner, Mostafa Dehghani, Matthias Minderer, Georg
  Heigold, Sylvain Gelly, Jakob Uszkoreit, and Neil Houlsby.
\newblock An image is worth 16x16 words: Transformers for image recognition at
  scale.
\newblock In \emph{International Conference on Learning Representations}, 2021.
\newblock URL \url{https://openreview.net/forum?id=YicbFdNTTy}.

\bibitem[Dwivedi et~al.(2022{\natexlab{a}})Dwivedi, Luu, Laurent, Bengio, and
  Bresson]{GNNLearnableStructuralPositionalRepresentations}
Vijay~Prakash Dwivedi, Anh~Tuan Luu, Thomas Laurent, Yoshua Bengio, and Xavier
  Bresson.
\newblock Graph neural networks with learnable structural and positional
  representations.
\newblock In \emph{International Conference on Learning Representations},
  2022{\natexlab{a}}.
\newblock URL \url{https://openreview.net/forum?id=wTTjnvGphYj}.

\bibitem[Dwivedi et~al.(2022{\natexlab{b}})Dwivedi, Rampášek, Galkin, Parviz,
  Wolf, Luu, and Beaini]{LongRangeGraphBenchmark}
Vijay~Prakash Dwivedi, Ladislav Rampášek, Mikhail Galkin, Ali Parviz, Guy
  Wolf, Anh~Tuan Luu, and Dominique Beaini.
\newblock Long range graph benchmark.
\newblock In \emph{Thirty-sixth Conference on Neural Information Processing
  Systems Datasets and Benchmarks Track}, 2022{\natexlab{b}}.
\newblock URL \url{https://openreview.net/forum?id=in7XC5RcjEn}.

\bibitem[Everingham et~al.(2010)Everingham, Gool, Williams, Winn, and
  Zisserman]{PascalVOC}
Mark Everingham, Luc Gool, Christopher~K. Williams, John Winn, and Andrew
  Zisserman.
\newblock The pascal visual object classes (voc) challenge.
\newblock \emph{Int. J. Comput. Vision}, 88\penalty0 (2):\penalty0 303–338,
  jun 2010.
\newblock ISSN 0920-5691.
\newblock \doi{10.1007/s11263-009-0275-4}.
\newblock URL \url{https://doi.org/10.1007/s11263-009-0275-4}.

\bibitem[Gerken et~al.(2023)Gerken, Aronsson, Carlsson, Linander, Ohlsson,
  Petersson, and Persson]{GeometricDeepLearningEquivariantReview}
Jan~E. Gerken, Jimmy Aronsson, Oscar Carlsson, Hampus Linander, Fredrik
  Ohlsson, Christoffer Petersson, and Daniel Persson.
\newblock Geometric deep learning and equivariant neural networks.
\newblock \emph{Artificial Intelligence Review}, 56\penalty0 (12):\penalty0
  14605--14662, Dec 2023.
\newblock ISSN 1573-7462.
\newblock \doi{10.1007/s10462-023-10502-7}.
\newblock URL \url{https://doi.org/10.1007/s10462-023-10502-7}.

\bibitem[Gilmer et~al.(2017)Gilmer, Schoenholz, Riley, Vinyals, and
  Dahl]{10.5555/3305381.3305512}
Justin Gilmer, Samuel~S. Schoenholz, Patrick~F. Riley, Oriol Vinyals, and
  George~E. Dahl.
\newblock Neural message passing for quantum chemistry.
\newblock In \emph{Proceedings of the 34th International Conference on Machine
  Learning - Volume 70}, ICML'17, pp.\  1263–1272. JMLR.org, 2017.

\bibitem[Gong et~al.(2021)Gong, Chung, and Glass]{AudioSpectrogramTransformer}
Yuan Gong, Yu-An Chung, and James Glass.
\newblock {AST: Audio Spectrogram Transformer}.
\newblock In \emph{Proc. Interspeech 2021}, pp.\  571--575, 2021.
\newblock \doi{10.21437/Interspeech.2021-698}.

\bibitem[Goodfellow et~al.(2013)Goodfellow, Warde-Farley, Mirza, Courville, and
  Bengio]{maxout_networks}
Ian Goodfellow, David Warde-Farley, Mehdi Mirza, Aaron Courville, and Yoshua
  Bengio.
\newblock Maxout networks.
\newblock In Sanjoy Dasgupta and David McAllester (eds.), \emph{Proceedings of
  the 30th International Conference on Machine Learning}, volume~28 of
  \emph{Proceedings of Machine Learning Research}, pp.\  1319--1327, Atlanta,
  Georgia, USA, 17--19 Jun 2013. PMLR.
\newblock URL \url{https://proceedings.mlr.press/v28/goodfellow13.html}.

\bibitem[Gower(1985)]{EDMRank}
J.C. Gower.
\newblock Properties of euclidean and non-euclidean distance matrices.
\newblock \emph{Linear Algebra and its Applications}, 67:\penalty0 81--97,
  1985.
\newblock ISSN 0024-3795.
\newblock \doi{https://doi.org/10.1016/0024-3795(85)90187-9}.
\newblock URL
  \url{https://www.sciencedirect.com/science/article/pii/0024379585901879}.

\bibitem[Gu et~al.(2022)Gu, Goel, and Re]{S4}
Albert Gu, Karan Goel, and Christopher Re.
\newblock Efficiently modeling long sequences with structured state spaces.
\newblock In \emph{International Conference on Learning Representations}, 2022.
\newblock URL \url{https://openreview.net/forum?id=uYLFoz1vlAC}.

\bibitem[Gulati et~al.(2020)Gulati, Chiu, Qin, Yu, Parmar, Pang, Wang, Han, Wu,
  Zhang, and Zhang]{Conformer}
Anmol Gulati, Chung-Cheng Chiu, James Qin, Jiahui Yu, Niki Parmar, Ruoming
  Pang, Shibo Wang, Wei Han, Yonghui Wu, Yu~Zhang, and Zhengdong Zhang (eds.).
\newblock \emph{Conformer: Convolution-augmented Transformer for Speech
  Recognition}, 2020.

\bibitem[Guo et~al.(2021)Guo, Wang, Hu, Liu, Liu, and
  Bennamoun]{PointCloudReview2020}
Yulan Guo, Hanyun Wang, Qingyong Hu, Hao Liu, Li~Liu, and Mohammed Bennamoun.
\newblock Deep learning for 3d point clouds: A survey.
\newblock \emph{IEEE Transactions on Pattern Analysis and Machine
  Intelligence}, 43\penalty0 (12):\penalty0 4338--4364, 2021.
\newblock \doi{10.1109/TPAMI.2020.3005434}.

\bibitem[Harris \& Harris(2012)Harris and
  Harris]{DigitalDesignAndComputerArchitecture}
David Harris and Sarah Harris.
\newblock \emph{Digital Design and Computer Architecture, Second Edition}.
\newblock Morgan Kaufmann Publishers Inc., San Francisco, CA, USA, 2nd edition,
  2012.
\newblock ISBN 0123944244.

\bibitem[He et~al.(2015)He, Zhang, Ren, and Sun]{heinit}
Kaiming He, Xiangyu Zhang, Shaoqing Ren, and Jian Sun.
\newblock Delving deep into rectifiers: Surpassing human-level performance on
  imagenet classification, 2015.

\bibitem[He et~al.(2016)He, Zhang, Ren, and Sun]{ResNet}
Kaiming He, Xiangyu Zhang, Shaoqing Ren, and Jian Sun.
\newblock Deep residual learning for image recognition.
\newblock In \emph{2016 IEEE Conference on Computer Vision and Pattern
  Recognition (CVPR)}, pp.\  770--778, 2016.
\newblock \doi{10.1109/CVPR.2016.90}.

\bibitem[Hogan et~al.(2021)Hogan, Blomqvist, Cochez, D’amato, Melo,
  Gutierrez, Kirrane, Gayo, Navigli, Neumaier, Ngomo, Polleres, Rashid, Rula,
  Schmelzeisen, Sequeda, Staab, and Zimmermann]{Hogan_2021}
Aidan Hogan, Eva Blomqvist, Michael Cochez, Claudia D’amato, Gerard~De Melo,
  Claudio Gutierrez, Sabrina Kirrane, José Emilio~Labra Gayo, Roberto Navigli,
  Sebastian Neumaier, Axel-Cyrille~Ngonga Ngomo, Axel Polleres, Sabbir~M.
  Rashid, Anisa Rula, Lukas Schmelzeisen, Juan Sequeda, Steffen Staab, and
  Antoine Zimmermann.
\newblock Knowledge graphs.
\newblock \emph{ACM Computing Surveys}, 54\penalty0 (4):\penalty0 1–37, July
  2021.
\newblock ISSN 1557-7341.
\newblock \doi{10.1145/3447772}.
\newblock URL \url{http://dx.doi.org/10.1145/3447772}.

\bibitem[Hy \& Kondor(2022)Hy and Kondor]{pmlr-v196-hy22a}
Truong~Son Hy and Risi Kondor.
\newblock Multiresolution matrix factorization and wavelet networks on graphs.
\newblock In Alexander Cloninger, Timothy Doster, Tegan Emerson, Manohar Kaul,
  Ira Ktena, Henry Kvinge, Nina Miolane, Bastian Rieck, Sarah Tymochko, and Guy
  Wolf (eds.), \emph{Proceedings of Topological, Algebraic, and Geometric
  Learning Workshops 2022}, volume 196 of \emph{Proceedings of Machine Learning
  Research}, pp.\  172--182. PMLR, 25 Feb--22 Jul 2022.
\newblock URL \url{https://proceedings.mlr.press/v196/hy22a.html}.

\bibitem[Hy et~al.(2018)Hy, Trivedi, Pan, Anderson, and
  Kondor]{10.1063/1.5024797}
Truong~Son Hy, Shubhendu Trivedi, Horace Pan, Brandon~M. Anderson, and Risi
  Kondor.
\newblock {Predicting molecular properties with covariant compositional
  networks}.
\newblock \emph{The Journal of Chemical Physics}, 148\penalty0 (24):\penalty0
  241745, 06 2018.
\newblock ISSN 0021-9606.
\newblock \doi{10.1063/1.5024797}.
\newblock URL \url{https://doi.org/10.1063/1.5024797}.

\bibitem[Hy et~al.(2019)Hy, Trivedi, Pan, Anderson, and
  Kondor]{hy2019covariant}
Truong~Son Hy, Shubhendu Trivedi, Horace Pan, Brandon~M Anderson, and Risi
  Kondor.
\newblock Covariant compositional networks for learning graphs.
\newblock In \emph{Proc. Int. Workshop on Mining and Learning with Graphs
  (MLG)}, 2019.

\bibitem[Ivanov(2001)]{Ivanov2001}
Vsevolod Ivanov.
\newblock First order characterizations of pseudoconvex functions.
\newblock \emph{Serdica Mathematical Journal}, 27\penalty0 (3):\penalty0
  203--218, 2001.
\newblock URL \url{http://eudml.org/doc/11534}.

\bibitem[Jang et~al.(2017)Jang, Gu, and Poole]{GumbelSoftmax}
Eric Jang, Shixiang Gu, and Ben Poole.
\newblock Categorical reparameterization with gumbel-softmax.
\newblock In \emph{International Conference on Learning Representations}, 2017.
\newblock URL \url{https://openreview.net/forum?id=rkE3y85ee}.

\bibitem[Jin et~al.(2024)Jin, Wang, Ma, Chu, Zhang, Shi, Chen, Liang, Li, Pan,
  and Wen]{TimeLLM}
Ming Jin, Shiyu Wang, Lintao Ma, Zhixuan Chu, James~Y. Zhang, Xiaoming Shi,
  Pin-Yu Chen, Yuxuan Liang, Yuan-Fang Li, Shirui Pan, and Qingsong Wen.
\newblock Time-{LLM}: Time series forecasting by reprogramming large language
  models.
\newblock In \emph{The Twelfth International Conference on Learning
  Representations}, 2024.
\newblock URL \url{https://openreview.net/forum?id=Unb5CVPtae}.

\bibitem[Jing et~al.(2021)Jing, Eismann, Suriana, Townshend, and
  Dror]{LearningFromProteinStructurewithGeometricVectorPerceptrons}
Bowen Jing, Stephan Eismann, Patricia Suriana, Raphael John~Lamarre Townshend,
  and Ron Dror.
\newblock Learning from protein structure with geometric vector perceptrons.
\newblock In \emph{International Conference on Learning Representations}, 2021.
\newblock URL \url{https://openreview.net/forum?id=1YLJDvSx6J4}.

\bibitem[Kaba et~al.(2023)Kaba, Mondal, Zhang, Bengio, and
  Ravanbakhsh]{LearntEquivariance}
S\'{e}kou-Oumar Kaba, Arnab~Kumar Mondal, Yan Zhang, Yoshua Bengio, and Siamak
  Ravanbakhsh.
\newblock Equivariance with learned canonicalization functions.
\newblock In \emph{Proceedings of the 40th International Conference on Machine
  Learning}, ICML'23. JMLR.org, 2023.

\bibitem[Katharopoulos et~al.(2020)Katharopoulos, Vyas, Pappas, and
  Fleuret]{TransformerAreRNN_LinearTransformer}
Angelos Katharopoulos, Apoorv Vyas, Nikolaos Pappas, and Fran\c{c}ois Fleuret.
\newblock Transformers are rnns: Fast autoregressive transformers with linear
  attention.
\newblock In \emph{Proceedings of the 37th International Conference on Machine
  Learning}, ICML'20. JMLR.org, 2020.

\bibitem[Kipf \& Welling(2017)Kipf and Welling]{GCN}
Thomas~N. Kipf and Max Welling.
\newblock Semi-supervised classification with graph convolutional networks,
  2017.

\bibitem[Kitaev et~al.(2020)Kitaev, Kaiser, and Levskaya]{kitaev2020reformer}
Nikita Kitaev, {\L}ukasz Kaiser, and Anselm Levskaya.
\newblock Reformer: The efficient transformer.
\newblock \emph{arXiv preprint arXiv:2001.04451}, 2020.

\bibitem[Ko et~al.(2021)Ko, Witherell, Lu, Kim, and Rosen]{knowledge_g2}
Hyunwoong Ko, Paul Witherell, Yan Lu, Samyeon Kim, and David~W. Rosen.
\newblock Machine learning and knowledge graph based design rule construction
  for additive manufacturing.
\newblock \emph{Additive Manufacturing}, 37:\penalty0 101620, 2021.
\newblock ISSN 2214-8604.
\newblock \doi{https://doi.org/10.1016/j.addma.2020.101620}.
\newblock URL
  \url{https://www.sciencedirect.com/science/article/pii/S2214860420309921}.

\bibitem[Koutini et~al.(2022)Koutini, Schl{\"{u}}ter, Eghbal{-}zadeh, and
  Widmer]{AudioTransformerPatchout}
Khaled Koutini, Jan Schl{\"{u}}ter, Hamid Eghbal{-}zadeh, and Gerhard Widmer.
\newblock Efficient training of audio transformers with patchout.
\newblock In \emph{Interspeech 2022, 23rd Annual Conference of the
  International Speech Communication Association, Incheon, Korea, 18-22
  September 2022}, pp.\  2753--2757. {ISCA}, 2022.
\newblock \doi{10.21437/Interspeech.2022-227}.
\newblock URL \url{https://doi.org/10.21437/Interspeech.2022-227}.

\bibitem[Kreuzer et~al.(2021)Kreuzer, Beaini, Hamilton, Létourneau, and
  Tossou]{SAN}
Devin Kreuzer, Dominique Beaini, William~L. Hamilton, Vincent Létourneau, and
  Prudencio Tossou.
\newblock Rethinking graph transformers with spectral attention, 2021.

\bibitem[Krizhevsky \& Hinton(2009)Krizhevsky and Hinton]{CIFAR10}
Alex Krizhevsky and Geoffrey Hinton.
\newblock Learning multiple layers of features from tiny images.
\newblock Technical Report~0, University of Toronto, Toronto, Ontario, 2009.
\newblock URL
  \url{https://www.cs.toronto.edu/~kriz/learning-features-2009-TR.pdf}.

\bibitem[Lee et~al.(2019)Lee, Lee, Kim, Kosiorek, Choi, and
  Teh]{SetTransformer}
Juho Lee, Yoonho Lee, Jungtaek Kim, Adam Kosiorek, Seungjin Choi, and Yee~Whye
  Teh.
\newblock Set transformer: A framework for attention-based
  permutation-invariant neural networks.
\newblock In Kamalika Chaudhuri and Ruslan Salakhutdinov (eds.),
  \emph{Proceedings of the 36th International Conference on Machine Learning},
  volume~97 of \emph{Proceedings of Machine Learning Research}, pp.\
  3744--3753. PMLR, 09--15 Jun 2019.
\newblock URL \url{https://proceedings.mlr.press/v97/lee19d.html}.

\bibitem[Li et~al.(2022)Li, Li, Chen, Fu, Cohen-Or, and Heng]{RI_Framework}
Xianzhi Li, Ruihui Li, Guangyong Chen, Chi-Wing Fu, Daniel Cohen-Or, and
  Pheng-Ann Heng.
\newblock A rotation-invariant framework for deep point cloud analysis.
\newblock \emph{IEEE Transactions on Visualization and Computer Graphics},
  28\penalty0 (12):\penalty0 4503–4514, dec 2022.
\newblock ISSN 1077-2626.
\newblock \doi{10.1109/TVCG.2021.3092570}.
\newblock URL \url{https://doi.org/10.1109/TVCG.2021.3092570}.

\bibitem[Li et~al.(2018)Li, Bu, Sun, Wu, Di, and Chen]{PointCNN}
Yangyan Li, Rui Bu, Mingchao Sun, Wei Wu, Xinhan Di, and Baoquan Chen.
\newblock Pointcnn: Convolution on x-transformed points.
\newblock In S.~Bengio, H.~Wallach, H.~Larochelle, K.~Grauman, N.~Cesa-Bianchi,
  and R.~Garnett (eds.), \emph{Advances in Neural Information Processing
  Systems}, volume~31. Curran Associates, Inc., 2018.
\newblock URL
  \url{https://proceedings.neurips.cc/paper_files/paper/2018/file/f5f8590cd58a54e94377e6ae2eded4d9-Paper.pdf}.

\bibitem[Li et~al.(2023)Li, Wang, Huang, and Zhang]{kDisGNN}
Zian Li, Xiyuan Wang, Yinan Huang, and Muhan Zhang.
\newblock Is distance matrix enough for geometric deep learning?
\newblock In \emph{Thirty-seventh Conference on Neural Information Processing
  Systems}, 2023.
\newblock URL \url{https://openreview.net/forum?id=QwQ5HhhSNo}.

\bibitem[Linsley et~al.(2018)Linsley, Kim, Veerabadran, Windolf, and
  Serre]{Pathfinder}
Drew Linsley, Junkyung Kim, Vijay Veerabadran, Charles Windolf, and Thomas
  Serre.
\newblock Learning long-range spatial dependencies with horizontal gated
  recurrent units.
\newblock In S.~Bengio, H.~Wallach, H.~Larochelle, K.~Grauman, N.~Cesa-Bianchi,
  and R.~Garnett (eds.), \emph{Advances in Neural Information Processing
  Systems}, volume~31. Curran Associates, Inc., 2018.
\newblock URL
  \url{https://proceedings.neurips.cc/paper_files/paper/2018/file/ec8956637a99787bd197eacd77acce5e-Paper.pdf}.

\bibitem[Liu et~al.(2023)Liu, Li, Wu, and Lee]{LLAVA}
Haotian Liu, Chunyuan Li, Qingyang Wu, and Yong~Jae Lee.
\newblock Visual instruction tuning, 2023.

\bibitem[Liu et~al.(2020)Liu, Liu, Gao, Chen, and
  Han]{UnderstandingTheDifficultyOfTrainingTransformers}
Liyuan Liu, Xiaodong Liu, Jianfeng Gao, Weizhu Chen, and Jiawei Han.
\newblock Understanding the difficulty of training transformers.
\newblock In Bonnie Webber, Trevor Cohn, Yulan He, and Yang Liu (eds.),
  \emph{Proceedings of the 2020 Conference on Empirical Methods in Natural
  Language Processing (EMNLP)}, pp.\  5747--5763, Online, November 2020.
  Association for Computational Linguistics.
\newblock \doi{10.18653/v1/2020.emnlp-main.463}.
\newblock URL \url{https://aclanthology.org/2020.emnlp-main.463}.

\bibitem[Liu et~al.(2012)Liu, Guo, and Wang]{liu2012one}
Qingshan Liu, Zhishan Guo, and Jun Wang.
\newblock A one-layer recurrent neural network for constrained pseudoconvex
  optimization and its application for dynamic portfolio optimization.
\newblock \emph{Neural Networks}, 26:\penalty0 99--109, 2012.

\bibitem[Liu et~al.(2019)Liu, Han, Liu, and Zwicker]{Point2Sequence}
Xinhai Liu, Zhizhong Han, Yu-Shen Liu, and Matthias Zwicker.
\newblock Point2sequence: Learning the shape representation of 3d point clouds
  with an attention-based sequence to sequence network.
\newblock In \emph{Proceedings of the Thirty-Third AAAI Conference on
  Artificial Intelligence and Thirty-First Innovative Applications of
  Artificial Intelligence Conference and Ninth AAAI Symposium on Educational
  Advances in Artificial Intelligence}, AAAI'19/IAAI'19/EAAI'19. AAAI Press,
  2019.
\newblock ISBN 978-1-57735-809-1.
\newblock \doi{10.1609/aaai.v33i01.33018778}.
\newblock URL \url{https://doi.org/10.1609/aaai.v33i01.33018778}.

\bibitem[Liu et~al.(2022)Liu, Wu, Wang, and Long]{NonStationaryTransformer}
Yong Liu, Haixu Wu, Jianmin Wang, and Mingsheng Long.
\newblock Non-stationary transformers: Exploring the stationarity in time
  series forecasting.
\newblock In Alice~H. Oh, Alekh Agarwal, Danielle Belgrave, and Kyunghyun Cho
  (eds.), \emph{Advances in Neural Information Processing Systems}, 2022.
\newblock URL \url{https://openreview.net/forum?id=ucNDIDRNjjv}.

\bibitem[Luo et~al.(2022)Luo, Li, Zheng, Liu, Wang, and
  He]{YourTransformerMayNotBePowerful_RelativePE}
Shengjie Luo, Shanda Li, Shuxin Zheng, Tie-Yan Liu, Liwei Wang, and Di~He.
\newblock Your transformer may not be as powerful as you expect.
\newblock In S.~Koyejo, S.~Mohamed, A.~Agarwal, D.~Belgrave, K.~Cho, and A.~Oh
  (eds.), \emph{Advances in Neural Information Processing Systems}, volume~35,
  pp.\  4301--4315. Curran Associates, Inc., 2022.
\newblock URL
  \url{https://proceedings.neurips.cc/paper_files/paper/2022/file/1ba5f64159d67775a251cf9ce386a2b9-Paper-Conference.pdf}.

\bibitem[Ma \& Yarats(2021)Ma and Yarats]{UntunedLinearWarmup}
Jerry Ma and Denis Yarats.
\newblock On the adequacy of untuned warmup for adaptive optimization.
\newblock In \emph{Proceedings of the AAAI Conference on Artificial
  Intelligence}, volume~35, pp.\  8828--8836, 2021.

\bibitem[Ma et~al.(2021)Ma, Kong, Wang, Zhou, May, Ma, and
  Zettlemoyer]{ma2021luna}
Xuezhe Ma, Xiang Kong, Sinong Wang, Chunting Zhou, Jonathan May, Hao Ma, and
  Luke Zettlemoyer.
\newblock Luna: Linear unified nested attention.
\newblock \emph{Advances in Neural Information Processing Systems},
  34:\penalty0 2441--2453, 2021.

\bibitem[Ma et~al.(2023)Ma, Zhou, Kong, He, Gui, Neubig, May, and
  Zettlemoyer]{MEGA}
Xuezhe Ma, Chunting Zhou, Xiang Kong, Junxian He, Liangke Gui, Graham Neubig,
  Jonathan May, and Luke Zettlemoyer.
\newblock Mega: Moving average equipped gated attention.
\newblock In \emph{The Eleventh International Conference on Learning
  Representations}, 2023.
\newblock URL \url{https://openreview.net/forum?id=qNLe3iq2El}.

\bibitem[Maas et~al.(2011)Maas, Daly, Pham, Huang, Ng, and Potts]{ImdbReview}
Andrew~L. Maas, Raymond~E. Daly, Peter~T. Pham, Dan Huang, Andrew~Y. Ng, and
  Christopher Potts.
\newblock Learning word vectors for sentiment analysis.
\newblock In Dekang Lin, Yuji Matsumoto, and Rada Mihalcea (eds.),
  \emph{Proceedings of the 49th Annual Meeting of the Association for
  Computational Linguistics: Human Language Technologies}, pp.\  142--150,
  Portland, Oregon, USA, June 2011. Association for Computational Linguistics.
\newblock URL \url{https://aclanthology.org/P11-1015}.

\bibitem[Magnus \& Neudecker(1999)Magnus and Neudecker]{matcal1}
Jan~R. Magnus and Heinz Neudecker.
\newblock \emph{Matrix Differential Calculus with Applications in Statistics
  and Econometrics}.
\newblock John Wiley, second edition, 1999.
\newblock ISBN 0471986321 9780471986324 047198633X 9780471986331.

\bibitem[Mao et~al.(2019)Mao, Wang, and Li]{InterpolatedConvolutionalNetworks}
Jiageng Mao, Xiaogang Wang, and Hongsheng Li.
\newblock Interpolated convolutional networks for 3d point cloud understanding.
\newblock In \emph{2019 IEEE/CVF International Conference on Computer Vision
  (ICCV)}, pp.\  1578--1587, 2019.
\newblock \doi{10.1109/ICCV.2019.00166}.

\bibitem[Maron et~al.(2019)Maron, Ben-Hamu, Serviansky, and
  Lipman]{Provably_strong_GNN}
Haggai Maron, Heli Ben-Hamu, Hadar Serviansky, and Yaron Lipman.
\newblock Provably powerful graph networks.
\newblock In H.~Wallach, H.~Larochelle, A.~Beygelzimer, F.~d\textquotesingle
  Alch\'{e}-Buc, E.~Fox, and R.~Garnett (eds.), \emph{Advances in Neural
  Information Processing Systems}, volume~32. Curran Associates, Inc., 2019.
\newblock URL
  \url{https://proceedings.neurips.cc/paper_files/paper/2019/file/bb04af0f7ecaee4aae62035497da1387-Paper.pdf}.

\bibitem[Maturana \& Scherer(2015)Maturana and Scherer]{VoxNet}
Daniel Maturana and Sebastian Scherer.
\newblock Voxnet: A 3d convolutional neural network for real-time object
  recognition.
\newblock In \emph{2015 IEEE/RSJ International Conference on Intelligent Robots
  and Systems (IROS)}, pp.\  922--928, 2015.
\newblock \doi{10.1109/IROS.2015.7353481}.

\bibitem[Nangia \& Bowman(2018)Nangia and Bowman]{nangia-bowman-2018-listops}
Nikita Nangia and Samuel Bowman.
\newblock {L}ist{O}ps: A diagnostic dataset for latent tree learning.
\newblock In Silvio~Ricardo Cordeiro, Shereen Oraby, Umashanthi Pavalanathan,
  and Kyeongmin Rim (eds.), \emph{Proceedings of the 2018 Conference of the
  North {A}merican Chapter of the Association for Computational Linguistics:
  Student Research Workshop}, pp.\  92--99, New Orleans, Louisiana, USA, June
  2018. Association for Computational Linguistics.
\newblock \doi{10.18653/v1/N18-4013}.
\newblock URL \url{https://aclanthology.org/N18-4013}.

\bibitem[Ngo et~al.(2023)Ngo, Hy, and Kondor]{10.1063/5.0152833}
Nhat~Khang Ngo, Truong~Son Hy, and Risi Kondor.
\newblock {Multiresolution graph transformers and wavelet positional encoding
  for learning long-range and hierarchical structures}.
\newblock \emph{The Journal of Chemical Physics}, 159\penalty0 (3):\penalty0
  034109, 07 2023.
\newblock ISSN 0021-9606.
\newblock \doi{10.1063/5.0152833}.
\newblock URL \url{https://doi.org/10.1063/5.0152833}.

\bibitem[Nikolentzos et~al.(2020)Nikolentzos, Dasoulas, and
  Vazirgiannis]{k_hop_NN}
Giannis Nikolentzos, George Dasoulas, and Michalis Vazirgiannis.
\newblock k-hop graph neural networks.
\newblock \emph{Neural Networks}, 130, 07 2020.
\newblock \doi{10.1016/j.neunet.2020.07.008}.

\bibitem[Noci et~al.(2022)Noci, Anagnostidis, Biggio, Orvieto, Singh, and
  Lucchi]{rankcollapse}
Lorenzo Noci, Sotiris Anagnostidis, Luca Biggio, Antonio Orvieto, Sidak~Pal
  Singh, and Aurelien Lucchi.
\newblock Signal propagation in transformers: Theoretical perspectives and the
  role of rank collapse, 2022.

\bibitem[Paszke et~al.(2019)Paszke, Gross, Massa, Lerer, Bradbury, Chanan,
  Killeen, Lin, Gimelshein, Antiga, Desmaison, Kopf, Yang, DeVito, Raison,
  Tejani, Chilamkurthy, Steiner, Fang, Bai, and Chintala]{Pytorch}
Adam Paszke, Sam Gross, Francisco Massa, Adam Lerer, James Bradbury, Gregory
  Chanan, Trevor Killeen, Zeming Lin, Natalia Gimelshein, Luca Antiga, Alban
  Desmaison, Andreas Kopf, Edward Yang, Zachary DeVito, Martin Raison, Alykhan
  Tejani, Sasank Chilamkurthy, Benoit Steiner, Lu~Fang, Junjie Bai, and Soumith
  Chintala.
\newblock Pytorch: An imperative style, high-performance deep learning library.
\newblock In H.~Wallach, H.~Larochelle, A.~Beygelzimer, F.~d\textquotesingle
  Alch\'{e}-Buc, E.~Fox, and R.~Garnett (eds.), \emph{Advances in Neural
  Information Processing Systems}, volume~32. Curran Associates, Inc., 2019.
\newblock URL
  \url{https://proceedings.neurips.cc/paper_files/paper/2019/file/bdbca288fee7f92f2bfa9f7012727740-Paper.pdf}.

\bibitem[Pham et~al.(2017)Pham, Tran, Dam, and Venkatesh]{virtualnode}
Trang Pham, Truyen Tran, Hoa Dam, and Svetha Venkatesh.
\newblock Graph classification via deep learning with virtual nodes, 2017.

\bibitem[Poulenard et~al.(2019)Poulenard, Rakotosaona, Ponty, and
  Ovsjanikov]{SPHNet}
Adrien Poulenard, Marie-Julie Rakotosaona, Yann Ponty, and Maks Ovsjanikov.
\newblock Effective rotation-invariant point cnn with spherical harmonics
  kernels.
\newblock In \emph{2019 International Conference on 3D Vision (3DV)}, pp.\
  47--56, 2019.
\newblock \doi{10.1109/3DV.2019.00015}.

\bibitem[Qi et~al.(2016)Qi, Su, Nießner, Dai, Yan, and
  Guibas]{VolumetricMulti-viewCNNs}
Charles~R. Qi, Hao Su, Matthias Nießner, Angela Dai, Mengyuan Yan, and
  Leonidas~J. Guibas.
\newblock Volumetric and multi-view cnns for object classification on 3d data.
\newblock In \emph{2016 IEEE Conference on Computer Vision and Pattern
  Recognition (CVPR)}, pp.\  5648--5656, 2016.
\newblock \doi{10.1109/CVPR.2016.609}.

\bibitem[Qi et~al.(2017)Qi, Yi, Su, and Guibas]{PointNet++}
Charles~Ruizhongtai Qi, Li~Yi, Hao Su, and Leonidas~J Guibas.
\newblock Pointnet++: Deep hierarchical feature learning on point sets in a
  metric space.
\newblock In I.~Guyon, U.~Von Luxburg, S.~Bengio, H.~Wallach, R.~Fergus,
  S.~Vishwanathan, and R.~Garnett (eds.), \emph{Advances in Neural Information
  Processing Systems}, volume~30. Curran Associates, Inc., 2017.
\newblock URL
  \url{https://proceedings.neurips.cc/paper_files/paper/2017/file/d8bf84be3800d12f74d8b05e9b89836f-Paper.pdf}.

\bibitem[Qin et~al.(2016)Qin, Yang, Xue, and Song]{qin2016one}
Sitian Qin, Xiudong Yang, Xiaoping Xue, and Jiahui Song.
\newblock A one-layer recurrent neural network for pseudoconvex optimization
  problems with equality and inequality constraints.
\newblock \emph{IEEE transactions on cybernetics}, 47\penalty0 (10):\penalty0
  3063--3074, 2016.

\bibitem[Radev et~al.(2009)Radev, Muthukrishnan, and
  Qazvinian]{ACLAnthologyNetworkCorpus}
Dragomir~R. Radev, Pradeep Muthukrishnan, and Vahed Qazvinian.
\newblock The {ACL} {A}nthology network corpus.
\newblock In Min-Yen Kan and Simone Teufel (eds.), \emph{Proceedings of the
  2009 Workshop on Text and Citation Analysis for Scholarly Digital Libraries
  ({NLPIR}4{DL})}, pp.\  54--61, Suntec City, Singapore, August 2009.
  Association for Computational Linguistics.
\newblock URL \url{https://aclanthology.org/W09-3607}.

\bibitem[Radford et~al.(2019)Radford, Wu, Child, Luan, Amodei, and
  Sutskever]{GPT2}
Alec Radford, Jeff Wu, Rewon Child, David Luan, Dario Amodei, and Ilya
  Sutskever.
\newblock Language models are unsupervised multitask learners.
\newblock 2019.

\bibitem[Ramp\'{a}\v{s}ek et~al.(2022)Ramp\'{a}\v{s}ek, Galkin, Dwivedi, Luu,
  Wolf, and Beaini]{GraphGPS}
Ladislav Ramp\'{a}\v{s}ek, Mikhail Galkin, Vijay~Prakash Dwivedi, Anh~Tuan Luu,
  Guy Wolf, and Dominique Beaini.
\newblock {Recipe for a General, Powerful, Scalable Graph Transformer}.
\newblock \emph{Advances in Neural Information Processing Systems}, 35, 2022.

\bibitem[Rampášek et~al.(2023)Rampášek, Galkin, Dwivedi, Luu, Wolf, and
  Beaini]{GPS}
Ladislav Rampášek, Mikhail Galkin, Vijay~Prakash Dwivedi, Anh~Tuan Luu, Guy
  Wolf, and Dominique Beaini.
\newblock Recipe for a general, powerful, scalable graph transformer, 2023.

\bibitem[Rao et~al.(2019)Rao, Lu, and Zhou]{SFCNN}
Yongming Rao, Jiwen Lu, and Jie Zhou.
\newblock Spherical fractal convolutional neural networks for point cloud
  recognition.
\newblock In \emph{2019 IEEE/CVF Conference on Computer Vision and Pattern
  Recognition (CVPR)}, pp.\  452--460, 2019.
\newblock \doi{10.1109/CVPR.2019.00054}.

\bibitem[Riegler et~al.(2017)Riegler, Ulusoy, and Geiger]{OctNet}
Gernot Riegler, Ali~Osman Ulusoy, and Andreas Geiger.
\newblock Octnet: Learning deep 3d representations at high resolutions.
\newblock In \emph{2017 IEEE Conference on Computer Vision and Pattern
  Recognition (CVPR)}, pp.\  6620--6629, 2017.
\newblock \doi{10.1109/CVPR.2017.701}.

\bibitem[Roy et~al.(2021)Roy, Saffar, Vaswani, and
  Grangier]{RoutingTransformer}
Aurko Roy, Mohammad Saffar, Ashish Vaswani, and David Grangier.
\newblock Efficient content-based sparse attention with routing transformers.
\newblock \emph{Transactions of the Association for Computational Linguistics},
  9:\penalty0 53--68, 2021.
\newblock \doi{10.1162/tacl_a_00353}.
\newblock URL \url{https://aclanthology.org/2021.tacl-1.4}.

\bibitem[Satorras et~al.(2021)Satorras, Hoogeboom, and Welling]{equivariantGNN}
V\'{\i}ctor~Garcia Satorras, Emiel Hoogeboom, and Max Welling.
\newblock E(n) equivariant graph neural networks.
\newblock In Marina Meila and Tong Zhang (eds.), \emph{Proceedings of the 38th
  International Conference on Machine Learning}, volume 139 of
  \emph{Proceedings of Machine Learning Research}, pp.\  9323--9332. PMLR,
  18--24 Jul 2021.
\newblock URL \url{https://proceedings.mlr.press/v139/satorras21a.html}.

\bibitem[Scarselli et~al.(2009)Scarselli, Gori, Tsoi, Hagenbuchner, and
  Monfardini]{4700287}
Franco Scarselli, Marco Gori, Ah~Chung Tsoi, Markus Hagenbuchner, and Gabriele
  Monfardini.
\newblock The graph neural network model.
\newblock \emph{IEEE Transactions on Neural Networks}, 20\penalty0
  (1):\penalty0 61--80, 2009.
\newblock \doi{10.1109/TNN.2008.2005605}.

\bibitem[Shen et~al.(2023)Shen, Guo, Tan, Tang, Wang, and
  Bian]{ReluAndSoftmaxTransformer_relusum}
Kai Shen, Junliang Guo, Xu~Tan, Siliang Tang, Rui Wang, and Jiang Bian.
\newblock A study on relu and softmax in transformer, 2023.

\bibitem[Shi \& Rajkumar(2020)Shi and Rajkumar]{PointGNN}
Weijing Shi and Raj Rajkumar.
\newblock Point-gnn: Graph neural network for 3d object detection in a point
  cloud.
\newblock In \emph{2020 IEEE/CVF Conference on Computer Vision and Pattern
  Recognition (CVPR)}, pp.\  1708--1716, 2020.
\newblock \doi{10.1109/CVPR42600.2020.00178}.

\bibitem[Shilane et~al.(2004)Shilane, Min, Kazhdan, and
  Funkhouser]{princetonshape}
P.~Shilane, P.~Min, M.~Kazhdan, and T.~Funkhouser.
\newblock The princeton shape benchmark.
\newblock In \emph{Proceedings Shape Modeling Applications, 2004.}, pp.\
  167--178, 2004.
\newblock \doi{10.1109/SMI.2004.1314504}.

\bibitem[Shirzad et~al.(2023)Shirzad, Velingker, Venkatachalam, Sutherland, and
  Sinop]{Exphormer}
Hamed Shirzad, Ameya Velingker, Balaji Venkatachalam, Danica~J. Sutherland, and
  Ali~Kemal Sinop.
\newblock Exphormer: Sparse transformers for graphs.
\newblock In \emph{Proceedings of the 40th International Conference on Machine
  Learning}, ICML'23. JMLR.org, 2023.

\bibitem[Singh et~al.(2015)Singh, Chaudhary, Dhanda, Bhalla, Usmani, Gautam,
  Tuknait, Agrawal, Mathur, and Raghava]{Peptideds}
Sandeep Singh, Kumardeep Chaudhary, Sandeep~Kumar Dhanda, Sherry Bhalla,
  Salman~Sadullah Usmani, Ankur Gautam, Abhishek Tuknait, Piyush Agrawal,
  Deepika Mathur, and Gajendra~P.S. Raghava.
\newblock {SATPdb: a database of structurally annotated therapeutic peptides}.
\newblock \emph{Nucleic Acids Research}, 44\penalty0 (D1):\penalty0
  D1119--D1126, 11 2015.
\newblock ISSN 0305-1048.
\newblock \doi{10.1093/nar/gkv1114}.
\newblock URL \url{https://doi.org/10.1093/nar/gkv1114}.

\bibitem[Singh et~al.(2021)Singh, Bachmann, and Hofmann]{matcal2}
Sidak~Pal Singh, Gregor Bachmann, and Thomas Hofmann.
\newblock Analytic insights into structure and rank of neural network hessian
  maps.
\newblock In A.~Beygelzimer, Y.~Dauphin, P.~Liang, and J.~Wortman Vaughan
  (eds.), \emph{Advances in Neural Information Processing Systems}, 2021.
\newblock URL \url{https://openreview.net/forum?id=otDgw7LM7Nn}.

\bibitem[So et~al.(2021)So, Ma\'{n}ke, Liu, Dai, Shazeer, and
  Le]{PRIMER_Search_for_efficient_transformer_LM_relu2}
David So, Wojciech Ma\'{n}ke, Hanxiao Liu, Zihang Dai, Noam Shazeer, and Quoc~V
  Le.
\newblock Searching for efficient transformers for language modeling.
\newblock In M.~Ranzato, A.~Beygelzimer, Y.~Dauphin, P.S. Liang, and J.~Wortman
  Vaughan (eds.), \emph{Advances in Neural Information Processing Systems},
  volume~34, pp.\  6010--6022. Curran Associates, Inc., 2021.
\newblock URL
  \url{https://proceedings.neurips.cc/paper_files/paper/2021/file/2f3c6a4cd8af177f6456e7e51a916ff3-Paper.pdf}.

\bibitem[Su et~al.(2015)Su, Maji, Kalogerakis, and Learned-Miller]{MVCNN}
Hang Su, Subhransu Maji, Evangelos Kalogerakis, and Erik Learned-Miller.
\newblock Multi-view convolutional neural networks for 3d shape recognition.
\newblock In \emph{2015 IEEE International Conference on Computer Vision
  (ICCV)}, pp.\  945--953, 2015.
\newblock \doi{10.1109/ICCV.2015.114}.

\bibitem[Sun et~al.(2019)Sun, Lian, and Xiao]{SRINet}
Xiao Sun, Zhouhui Lian, and Jianguo Xiao.
\newblock Srinet: Learning strictly rotation-invariant representations for
  point cloud classification and segmentation.
\newblock In \emph{Proceedings of the 27th ACM International Conference on
  Multimedia}, MM '19, pp.\  980–988, New York, NY, USA, 2019. Association
  for Computing Machinery.
\newblock ISBN 9781450368896.
\newblock \doi{10.1145/3343031.3351042}.
\newblock URL \url{https://doi.org/10.1145/3343031.3351042}.

\bibitem[Tawarmalani \& Sahinidis(2002)Tawarmalani and Sahinidis]{convexify}
Mohit Tawarmalani and Nikolaos Sahinidis.
\newblock \emph{Convexification and global optimization in continuous and
  mixed-integer nonlinear programming. Theory, algorithms, software, and
  applications}, volume~65.
\newblock 01 2002.
\newblock ISBN 978-1-4419-5235-6.
\newblock \doi{10.1007/978-1-4757-3532-1}.

\bibitem[Tay et~al.(2021)Tay, Dehghani, Abnar, Shen, Bahri, Pham, Rao, Yang,
  Ruder, and Metzler]{longrangearena}
Yi~Tay, Mostafa Dehghani, Samira Abnar, Yikang Shen, Dara Bahri, Philip Pham,
  Jinfeng Rao, Liu Yang, Sebastian Ruder, and Donald Metzler.
\newblock Long range arena : A benchmark for efficient transformers.
\newblock In \emph{International Conference on Learning Representations}, 2021.
\newblock URL \url{https://openreview.net/forum?id=qVyeW-grC2k}.

\bibitem[Tay et~al.(2022)Tay, Dehghani, Bahri, and
  Metzler]{efficienttransformersurvey}
Yi~Tay, Mostafa Dehghani, Dara Bahri, and Donald Metzler.
\newblock Efficient transformers: A survey.
\newblock \emph{ACM Comput. Surv.}, 55\penalty0 (6), dec 2022.
\newblock ISSN 0360-0300.
\newblock \doi{10.1145/3530811}.
\newblock URL \url{https://doi.org/10.1145/3530811}.

\bibitem[Thomas et~al.(2019)Thomas, Qi, Deschaud, Marcotegui, Goulette, and
  Guibas]{KPConv}
Hugues Thomas, Charles~R. Qi, Jean-Emmanuel Deschaud, Beatriz Marcotegui,
  François Goulette, and Leonidas Guibas.
\newblock Kpconv: Flexible and deformable convolution for point clouds.
\newblock In \emph{2019 IEEE/CVF International Conference on Computer Vision
  (ICCV)}, pp.\  6410--6419, 2019.
\newblock \doi{10.1109/ICCV.2019.00651}.

\bibitem[Thomas et~al.(2018)Thomas, Smidt, Kearnes, Yang, Li, Kohlhoff, and
  Riley]{TensorFieldNetworks}
Nathaniel Thomas, Tess Smidt, Steven Kearnes, Lusann Yang, Li~Li, Kai Kohlhoff,
  and Patrick Riley.
\newblock Tensor field networks: Rotation-and translation-equivariant neural
  networks for 3d point clouds.
\newblock \emph{arXiv preprint arXiv:1802.08219}, 2018.

\bibitem[Townshend et~al.(2022)Townshend, Vögele, Suriana, Derry, Powers,
  Laloudakis, Balachandar, Jing, Anderson, Eismann, Kondor, Altman, and
  Dror]{atom3d}
Raphael J.~L. Townshend, Martin Vögele, Patricia Suriana, Alexander Derry,
  Alexander Powers, Yianni Laloudakis, Sidhika Balachandar, Bowen Jing, Brandon
  Anderson, Stephan Eismann, Risi Kondor, Russ~B. Altman, and Ron~O. Dror.
\newblock Atom3d: Tasks on molecules in three dimensions, 2022.

\bibitem[Trang et~al.(2024)Trang, Ngo, Sonnery, Vo, Ravanbakhsh, and
  Hy]{Sequoia}
Thuan Nguyen~Anh Trang, Khang~Nhat Ngo, Hugo Sonnery, Thieu Vo, Siamak
  Ravanbakhsh, and Truong~Son Hy.
\newblock Scalable hierarchical self-attention with learnable hierarchy for
  long-range interactions.
\newblock \emph{Transactions on Machine Learning Research}, 2024.
\newblock ISSN 2835-8856.
\newblock URL \url{https://openreview.net/forum?id=qH4YFMyhce}.

\bibitem[Tsurumi(1966)]{rationalnotconvex}
Kazuyuki Tsurumi.
\newblock Some remarks on polynomial and rational convexity.
\newblock \emph{Science Reports of the Tokyo Kyoiku Daigaku, Section A},
  9\penalty0 (209/213):\penalty0 111--115, 1966.
\newblock ISSN 03713539.
\newblock URL \url{http://www.jstor.org/stable/43698682}.

\bibitem[Vaswani et~al.(2017)Vaswani, Shazeer, Parmar, Uszkoreit, Jones, Gomez,
  Kaiser, and Polosukhin]{AttentionAllYouNeed}
Ashish Vaswani, Noam Shazeer, Niki Parmar, Jakob Uszkoreit, Llion Jones,
  Aidan~N. Gomez, \L{}ukasz Kaiser, and Illia Polosukhin.
\newblock Attention is all you need.
\newblock In \emph{Proceedings of the 31st International Conference on Neural
  Information Processing Systems}, NIPS'17, pp.\  6000–6010, Red Hook, NY,
  USA, 2017. Curran Associates Inc.
\newblock ISBN 9781510860964.

\bibitem[Veličković et~al.(2018)Veličković, Cucurull, Casanova, Romero,
  Liò, and Bengio]{GAT}
Petar Veličković, Guillem Cucurull, Arantxa Casanova, Adriana Romero, Pietro
  Liò, and Yoshua Bengio.
\newblock Graph attention networks, 2018.

\bibitem[Wang et~al.(2018)Wang, Samari, and Siddiqi]{wang2018local}
Chu Wang, Babak Samari, and Kaleem Siddiqi.
\newblock Local spectral graph convolution for point set feature learning.
\newblock In \emph{Proceedings of the European conference on computer vision
  (ECCV)}, pp.\  52--66, 2018.

\bibitem[Wang et~al.(2020)Wang, Li, Khabsa, Fang, and Ma]{wang2020linformer}
Sinong Wang, Belinda~Z Li, Madian Khabsa, Han Fang, and Hao Ma.
\newblock Linformer: Self-attention with linear complexity.
\newblock \emph{arXiv preprint arXiv:2006.04768}, 2020.

\bibitem[Wang et~al.(2019)Wang, Sun, Liu, Sarma, Bronstein, and
  Solomon]{DynamicGraphCNN}
Yue Wang, Yongbin Sun, Ziwei Liu, Sanjay~E. Sarma, Michael~M. Bronstein, and
  Justin~M. Solomon.
\newblock Dynamic graph cnn for learning on point clouds.
\newblock \emph{ACM Trans. Graph.}, 38\penalty0 (5), oct 2019.
\newblock ISSN 0730-0301.
\newblock \doi{10.1145/3326362}.
\newblock URL \url{https://doi.org/10.1145/3326362}.

\bibitem[Wortsman et~al.(2023)Wortsman, Lee, Gilmer, and
  Kornblith]{ReplaceSoftmaxWithReLU}
Mitchell Wortsman, Jaehoon Lee, Justin Gilmer, and Simon Kornblith.
\newblock Replacing softmax with relu in vision transformers, 2023.

\bibitem[Wu et~al.(2019)Wu, Qi, and Fuxin]{PointConv}
Wenxuan Wu, Zhongang Qi, and Li~Fuxin.
\newblock Pointconv: Deep convolutional networks on 3d point clouds.
\newblock In \emph{2019 IEEE/CVF Conference on Computer Vision and Pattern
  Recognition (CVPR)}, pp.\  9613--9622, 2019.
\newblock \doi{10.1109/CVPR.2019.00985}.

\bibitem[Wu et~al.(2018)Wu, Ramsundar, Feinberg, Gomes, Geniesse, Pappu,
  Leswing, and Pande]{moleculenet}
Zhenqin Wu, Bharath Ramsundar, Evan~N. Feinberg, Joseph Gomes, Caleb Geniesse,
  Aneesh~S. Pappu, Karl Leswing, and Vijay Pande.
\newblock Moleculenet: A benchmark for molecular machine learning, 2018.

\bibitem[Wu et~al.(2015)Wu, Song, Khosla, Yu, Zhang, Tang, and
  Xiao]{Modelnet40}
Zhirong Wu, S.~Song, A.~Khosla, Fisher Yu, Linguang Zhang, Xiaoou Tang, and
  J.~Xiao.
\newblock 3d shapenets: A deep representation for volumetric shapes.
\newblock In \emph{2015 IEEE Conference on Computer Vision and Pattern
  Recognition (CVPR)}, pp.\  1912--1920, Los Alamitos, CA, USA, jun 2015. IEEE
  Computer Society.
\newblock \doi{10.1109/CVPR.2015.7298801}.
\newblock URL
  \url{https://doi.ieeecomputersociety.org/10.1109/CVPR.2015.7298801}.

\bibitem[Xu et~al.(2018)Xu, Fan, Xu, Zeng, and Qiao]{SpiderCNN}
Yifan Xu, Tianqi Fan, Mingye Xu, Long Zeng, and Yu~Qiao.
\newblock Spidercnn: Deep learning on point sets with parameterized
  convolutional filters.
\newblock In \emph{Computer Vision – ECCV 2018: 15th European Conference,
  Munich, Germany, September 8-14, 2018, Proceedings, Part VIII}, pp.\
  90–105, Berlin, Heidelberg, 2018. Springer-Verlag.
\newblock ISBN 978-3-030-01236-6.
\newblock \doi{10.1007/978-3-030-01237-3_6}.
\newblock URL \url{https://doi.org/10.1007/978-3-030-01237-3_6}.

\bibitem[Yang et~al.(2019)Yang, Zhang, Ni, Li, Liu, Zhou, and
  Tian]{GumbelSubsetSampling}
Jiancheng Yang, Qiang Zhang, Bingbing Ni, Linguo Li, Jinxian Liu, Mengdie Zhou,
  and Qi~Tian.
\newblock Modeling point clouds with self-attention and gumbel subset sampling.
\newblock In \emph{2019 IEEE/CVF Conference on Computer Vision and Pattern
  Recognition (CVPR)}, pp.\  3318--3327, 2019.
\newblock \doi{10.1109/CVPR.2019.00344}.

\bibitem[Ying et~al.(2021)Ying, Cai, Luo, Zheng, Ke, He, Shen, and
  Liu]{Graphormer}
Chengxuan Ying, Tianle Cai, Shengjie Luo, Shuxin Zheng, Guolin Ke, Di~He,
  Yanming Shen, and Tie-Yan Liu.
\newblock Do transformers really perform badly for graph representation?
\newblock In \emph{Thirty-Fifth Conference on Neural Information Processing
  Systems}, 2021.
\newblock URL \url{https://openreview.net/forum?id=OeWooOxFwDa}.

\bibitem[Yu et~al.(2018)Yu, Meng, and Yuan]{MultiviewHarmonizedBilinear}
Tan Yu, Jingjing Meng, and Junsong Yuan.
\newblock Multi-view harmonized bilinear network for 3d object recognition.
\newblock In \emph{2018 IEEE/CVF Conference on Computer Vision and Pattern
  Recognition}, pp.\  186--194, 2018.
\newblock \doi{10.1109/CVPR.2018.00027}.

\bibitem[Yun et~al.(2020)Yun, Bhojanapalli, Rawat, Reddi, and
  Kumar]{APETransformer}
Chulhee Yun, Srinadh Bhojanapalli, Ankit~Singh Rawat, Sashank~J. Reddi, and
  Sanjiv Kumar.
\newblock Are transformers universal approximators of sequence-to-sequence
  functions?, 2020.

\bibitem[Yun et~al.(2019)Yun, Jeong, Kim, Kang, and
  Kim]{GraphTransformerNetwork}
Seongjun Yun, Minbyul Jeong, Raehyun Kim, Jaewoo Kang, and Hyunwoo~J. Kim.
\newblock \emph{Graph Transformer Networks}.
\newblock Curran Associates Inc., Red Hook, NY, USA, 2019.

\bibitem[Zaheer et~al.(2020)Zaheer, Guruganesh, Dubey, Ainslie, Alberti,
  Ontanon, Pham, Ravula, Wang, Yang, and Ahmed]{BigBird}
Manzil Zaheer, Guru Guruganesh, Kumar~Avinava Dubey, Joshua Ainslie, Chris
  Alberti, Santiago Ontanon, Philip Pham, Anirudh Ravula, Qifan Wang, Li~Yang,
  and Amr Ahmed.
\newblock Big bird: Transformers for longer sequences.
\newblock In H.~Larochelle, M.~Ranzato, R.~Hadsell, M.F. Balcan, and H.~Lin
  (eds.), \emph{Advances in Neural Information Processing Systems}, volume~33,
  pp.\  17283--17297. Curran Associates, Inc., 2020.
\newblock URL
  \url{https://proceedings.neurips.cc/paper_files/paper/2020/file/c8512d142a2d849725f31a9a7a361ab9-Paper.pdf}.

\bibitem[Zhang et~al.(2023)Zhang, Wang, Tian, Lu, Zhang, Ning, and
  Bai]{PointCloudClassificationReview2023}
Huang Zhang, Changshuo Wang, Shengwei Tian, Baoli Lu, Liping Zhang, Xin Ning,
  and Xiao Bai.
\newblock Deep learning-based 3d point cloud classification: A systematic
  survey and outlook.
\newblock \emph{Displays}, 79:\penalty0 102456, 2023.
\newblock ISSN 0141-9382.
\newblock \doi{https://doi.org/10.1016/j.displa.2023.102456}.
\newblock URL
  \url{https://www.sciencedirect.com/science/article/pii/S0141938223000896}.

\bibitem[Zhang et~al.(2019)Zhang, Hua, Rosen, and
  Yeung]{Rotational_Invariant_Convolution}
Zhiyuan Zhang, Binh-Son Hua, David~W. Rosen, and Sai-Kit Yeung.
\newblock Rotation invariant convolutions for 3d point clouds deep learning.
\newblock In \emph{2019 International Conference on 3D Vision (3DV)}, pp.\
  204--213, 2019.
\newblock \doi{10.1109/3DV.2019.00031}.

\bibitem[Zhang et~al.(2020)Zhang, Hua, Chen, Tian, and Yeung]{GC_Conv}
Zhiyuan Zhang, Binh-Son Hua, Wei Chen, Yibin Tian, and Sai-Kit Yeung.
\newblock Global context aware convolutions for 3d point cloud understanding.
\newblock In \emph{2020 International Conference on 3D Vision (3DV)}, pp.\
  210--219, 2020.
\newblock \doi{10.1109/3DV50981.2020.00031}.

\bibitem[Zhao et~al.(2019)Zhao, Jiang, Fu, and Jia]{PointWeb}
Hengshuang Zhao, Li~Jiang, Chi-Wing Fu, and Jiaya Jia.
\newblock Pointweb: Enhancing local neighborhood features for point cloud
  processing.
\newblock In \emph{2019 IEEE/CVF Conference on Computer Vision and Pattern
  Recognition (CVPR)}, pp.\  5560--5568, 2019.
\newblock \doi{10.1109/CVPR.2019.00571}.

\bibitem[Zhao et~al.(2021)Zhao, Jiang, Jia, Torr, and Koltun]{PointTransformer}
Hengshuang Zhao, Li~Jiang, Jiaya Jia, Philip Torr, and Vladlen Koltun.
\newblock Point transformer.
\newblock In \emph{2021 IEEE/CVF International Conference on Computer Vision
  (ICCV)}, pp.\  16239--16248, 2021.
\newblock \doi{10.1109/ICCV48922.2021.01595}.

\end{thebibliography}
\bibliographystyle{tmlr}

\appendix
\newpage
\appendix
\onecolumn

{\section{Notations}} \label{sec:notations}
\textbf{Numbers and Arrays}

$\begin{array}{ll}
    a & \text{A scalar} \\
    \boldsymbol{a} & \text{A vector}\\
    \mathbf{A} & \text{A matrix}\\
    \mathbf{A}^{\top}&\text{Tranpose of matrix }\mathbf{A}\\
    n & \text{The length of the input sequence}\\
    d & \text{Embedding dimension}
\end{array}$

\textbf{Sets}

$\begin{array}{ll}
    \mathbb{R}^a & \text{Set of real vectors with length }a \\
    \mathbb{R}^{a\times b} & \text{Set of real matrices with size }a\times b \\ 
    
    [m]&\text{Set of all integers from 1 to }m\\
    
    [a,b]&\text{Closed interval from }a \text{ to }b\\
\end{array}$

\textbf{Indexing}

$\begin{array}{ll}
    \boldsymbol{a}[i] & i^{th}\text{ element of vector }\boldsymbol{a}\text{, with indexing starts from 1} \\
    \mathbf{A}[i,j] & \text{Element } (i,j)\text{ of matrix }\mathbf{A}\\
   \mathbf{A}[:,j] & \text{Column }j\text{ of matrix }\mathbf{A}\\
    \mathbf{A}[i]&\text{Row }i\text{ of matrix }\mathbf{A}\\
    \mathbf{A}[:k]&\text{The first }k\text{ rows of matrix }\mathbf{A}\\
    \mathbf{A}[k:]&\text{All rows from row }k\text{ of matrix }\mathbf{A}
\end{array}$

\textbf{Functions and Operators}

$\begin{array}{ll}
    \mathbf{A} \odot \mathbf{B} & \text{Element-wise Hamadart product between two matrices }\mathbf{A}\text{ and }\mathbf{B} \\
    \mathbf{A} \mathbf{B}&\text{Matrix multiplication of two matrices }\mathbf{A}\text{ and }\mathbf{B}\\
    \|x\|_F & \text{Frobenius norm of }x\\
    \nabla &\text{Gradient operator}\\
    \langle\boldsymbol{a},\boldsymbol{b}\rangle&\text{Scalar product between vector }\boldsymbol{a}\text{ and }\boldsymbol{b}
\end{array}$

\section{Theoretical Analysis} \label{sec:theory}





\subsection{Nonconvexity of Other Attention Settings}
\label{appendix:noncvxsettings}

As mentioned in Section \ref{sec:theoretical_analysis}, the definition of pseudoconvex functions is defined below.
\begin{definition}
    Let $\nabla$ denote the gradient operator and $\mathcal{S}\subset\mathbb{R}^n$ is a convex open set, a function $f: \mathcal{S}\to\mathbb{R}$ is considered pseudoconvex if for any $x_1,x_2\in\mathcal{S}$ such that $\langle\nabla f(x_1), x_2-x_1\rangle\geq 0$ then {$f(x_2)\geq f(x_1)$}.
\end{definition}


In this section, we will show the nonpseudoconvexity of two Transformer variations: the well-known vanilla Transformer \citep{AttentionAllYouNeed} and our ReLU-based attention with dot-product \citep{ReluAndSoftmaxTransformer_relusum}. Specifically, we will point out some simplified counterexamples with well-defined inputs to achieve this goal. Note that these counterexample can easily be generalized into arbitrary settings, but we handpicked them rather for simplicity.

\begin{theorem}
    \label{vanillanotconvex}
    Let $\bold{S}(W_Q,W_K) = \operatorname{SM}(\X W_QW_K^{\top}\X^{\top})\X\overline{W_V} + \X$ be the vanilla attention layer where $\operatorname{SM}$ is the softmax function with respect to query and key, then there exists a setting in which $\bold{S}$ is not pseudoconvex with respect to all pairs of weight entries.
\end{theorem}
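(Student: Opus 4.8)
The plan is to refute the componentwise pseudoconvexity that Theorem~\ref{informalpseudo} establishes \emph{for} the SFT layer, by producing for the vanilla attention layer one explicit setting — a choice of $n$, $\X$, and $\overline{W_V}$ — a single pair of weight entries, and two parameter points that witness a failure of the defining inequality of pseudoconvexity. First I would pass to the smallest informative case: $n=2$ tokens and embedding dimension $d=1$, so that $W_Q=q$ and $W_K=k$ are scalars and ``the pair of weight entries'' is just $(q,k)$. Writing $\X=(x_1,x_2)^{\top}$, the score matrix is $qk\,\X\X^{\top}$, whose first row is $(qk x_1^2,\, qk x_1 x_2)$; applying $\operatorname{SM}$ row-wise makes the $(1,1)$ attention weight the logistic $\sigma\!\big(qk\,x_1(x_1-x_2)\big)$. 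Taking $x_1=1$, $x_2=-1$, and $\overline{W_V}=1$, a one-line computation gives $\bold{S}[1]=2\,\sigma(2qk)$, a strictly increasing affine function of $\sigma(2qk)$.

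Next I would study the smooth scalar map $f(q,k):=\bold{S}[1]=2\sigma(2qk)$ on the convex open set $\mathbb{R}^2$, where the definition of pseudoconvexity applies verbatim. Its gradient is $\nabla f(q,k)=4\,\sigma'(2qk)\,(k,q)$, and $\sigma'>0$ everywhere. Choosing $x_1=(1,1)$ and $x_2=(3,0)$, one checks $\langle\nabla f(x_1),\,x_2-x_1\rangle=4\sigma'(2)\,(1\cdot 2+1\cdot(-1))=4\sigma'(2)>0$, while $f(x_2)=2\sigma(0)=1<2\sigma(2)=f(x_1)$. So $\langle\nabla f(x_1),x_2-x_1\rangle\ge 0$ yet $f(x_2)<f(x_1)$, which contradicts pseudoconvexity; hence $\bold{S}$ is not pseudoconvex with respect to the entries $(q,k)$, proving the theorem. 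Because all the inequalities are strict, the witness survives small perturbations of $x_1,x_2,\X,\overline{W_V}$, so this is a genuine open setting rather than a degenerate coincidence.

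For context I would also record why the obstruction is intrinsic: if one instead restricts to two entries taken from the \emph{same} factor (two coordinates of $W_Q$, or two of $W_K$), the logit is affine in them, so $\bold{S}$ becomes $\sigma$ composed with an affine map, which is monotone on every line and therefore pseudoconvex (indeed pseudolinear). The failure is driven entirely by the multiplicative coupling $W_Q W_K^{\top}$, which turns the logit into a bilinear — hence saddle-shaped — form of a query entry and a key entry.

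The main obstacle is not an estimate but a modeling subtlety: the two-parameter restriction must genuinely keep the product $W_Q W_K^{\top}$ alive. In $d\ge 2$, freezing all but one entry of $W_Q$ and one of $W_K$ can, for an unlucky choice of the frozen entries, kill the surviving cross-term and leave an affine (hence pseudoconvex) logit; this is precisely why I would run the argument at $d=1$, where nothing is frozen. The remaining work — expanding $\operatorname{SM}$ of the $2\times 2$ score, simplifying $\bold{S}[1]$, and checking the two scalar inequalities above — is routine.
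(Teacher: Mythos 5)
Your proof is correct and establishes the theorem as stated, but by a genuinely different route from the paper's. The paper works in a $3\times 3$ setting with $\X=\overline{W_V}=\mathbf{I}_3$ and refutes \emph{quasi}convexity (hence pseudoconvexity, for differentiable functions) by exhibiting non-convex sublevel sets of single output entries, treating two families of pairs separately: two query entries $(w_{Qs},w_{Qr})$ and a query--key pair $(w_{Qs},w_{Kr})$. You instead collapse to $n=2$, $d=1$, reduce the row-wise softmax to a logistic function so that $\bold{S}[1]=2\sigma(2qk)$, and directly produce two parameter points violating the defining implication of pseudoconvexity via an explicit gradient computation. Your arithmetic checks out: $\nabla f=4\sigma'(2qk)\,(k,q)$, the inner product at $(1,1)$ in the direction of $(3,0)$ equals $4\sigma'(2)>0$, yet $f$ drops from $2\sigma(2)$ to $1$. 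Since in your setting $(q,k)$ is the only pair of weight entries, the universally quantified conclusion ("with respect to all pairs") holds there, so the existential statement of the theorem is proved. Your argument is more elementary and self-contained, verifying the gradient-based definition head-on; the paper's argument buys more information, since it shows the failure also occurs for two entries drawn from the \emph{same} weight vector.

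That last point is where your closing remark goes wrong. You claim the obstruction is driven \emph{entirely} by the multiplicative coupling $W_QW_K^{\top}$, and that restricting to two coordinates of $W_Q$ yields $\sigma$ composed with an affine map, hence a pseudoconvex function. That is an artifact of $n=2$: only with two tokens does the softmax row degenerate to a scalar logistic of a single logit difference. For $n\ge 3$ the paper's own first counterexample, $e^{w_{Q1}}/(e^{w_{Q1}}+e^{w_{Q2}}+e^{w_{Q3}})$ with $W_K$ frozen at $(1,1,1)^{\top}$, has non-convex sublevel sets in $(w_{Q2},w_{Q3})$, so the softmax normalization alone already destroys quasiconvexity in the query entries even when the bilinear coupling is never exercised. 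This does not affect the validity of your proof of the stated theorem, but the "intrinsic" explanation should be dropped or qualified.
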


\begin{proof}

        Let $\X\in\mathbb{R}^{3\times 3}$, $W_Q,W_K\in\mathbb{R}^3$ and $\overline{W_V}\in\mathbb{R}^{3\times 3}$. We consider a counterexample by letting $\X=\overline{W_V} =\bold{I}_3$. Calculating the output we have
        $$\bold{S}(W_Q,W_K)[i,j]=\frac{e^{w_{Qi}w_{Kj}}}{\sum_{k=1}^3e^{w_{Qi}w_{Kk}}}\qquad\forall i,j\in\{1,2,3\}.$$

        where $w_{Qi}$ and $w_{Kj}$ is the $i^{th}$ and $j^{th}$ entry of the vectors $W_Q$ and $W_K$ respectively.

        From here, we will prove that $\bold{S}$ is not quasiconvex, thus not pseudoconvex, with respect to all pairs of scalar weight entries.
        \begin{itemize}
            \item $\bold{S}$ is not quasiconvex with respect to the pair $(w_{Qs},w_{Qr})$ for any $s\ne r$ and $s,r\in\{1,2,3\}$.

            Since we are considering convexity within the entries of the query, it is reasonable to fix $W_K=\overline{W_K}=\left(\begin{array}{ccc}
                1 & 1 & 1   
            \end{array}\right)^{\top}$ and consider the first entry $\bold{S}$, which is simply

            $$\bold{s}(w_{Q1},w_{Q2},w_{Q3}) = \frac{e^{w_{Q1}}}{e^{w_{Q1}}+e^{w_{Q2}}+e^{w_{Q3}}}$$

            Without loss of generality let us fix $w_{Q1} = \overline{w_{Q1}}= 0$ and consider the convexity with respect to the pair $(w_{Q2},w_{Q3})$. By taking the sublevel set defined by some level $\alpha\in(0,1)$, one can easily observe that 
            $$L_{\bold{s}}(\alpha):=\left\{(w_{Q2},w_{Q3})\in\mathbb{R}^2\quad\text{s.t}\quad e^{w_{Q2}} + e^{w_{Q3}}\geq\frac{1}{\alpha} -1 \right\}$$

            is evidently not a convex set in two dimensional space. Therefore, $\bold{S}$ is not quasiconvex w.r.t $(w_{Q2},w_{Q3})$.
            \item $\bold{S}$ is not quasiconvex with respect to the pair $(w_{Qs},w_{Kr})$ for any $s,r\in\{1,2,3\}$.

            Without loss of generality, let $w_{Qs}=\overline{w_{Qs}}=w_{Ks} = \overline{w_{Ks}} =0$ for $s\in\{2,3\}$ and consider the function representing the $[2,1]$ location entry of the $\bold{S}(W_Q,W_K)$ output
            $$\bold{s}(w_{Q1},w_{K1})=\frac{1}{e^{w_{Q1}w_{K1}} + 2}.$$

            Similarly, by taking the sublevel set defined by some level $\alpha\in(0,1)$, it is obvious that this set is not a convex set. Hence, the proof is complete.
        \end{itemize}
\end{proof}

Not only vanilla attention fails to exhitbit the pseudoconvex property, but also attention settings with the presence of the dot product.

\begin{theorem}
    Let $\bold{R}(W_Q,W_K)=\operatorname{PR}(\X W_QW_K^{\top}\X^{\top})\X\overline{W_V}+\X$ be the ReLU based attention with dot product where 
    $$\operatorname{PR}(\bold{A})[i,j]=\frac{\ReLU(\bold{A}[i,j])}{\sum_{k=1}^n\ReLU(\bold{A}[i,k])},$$
    then there exists a setting in which $\bold{R}(W_Q,W_K)$ is not pseudoconvex w.r.t all pair of entry weight $(w_{Qs},w_{Kr})$.
\end{theorem}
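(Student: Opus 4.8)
The plan is to imitate the proof of Theorem~\ref{vanillanotconvex}: collapse to a small explicit configuration, write the output entrywise, and — using that a pseudoconvex function on a convex open set is quasiconvex — produce one output coordinate, a level $\alpha$, and two admissible points whose midpoint leaves the corresponding sublevel set. Concretely I would start from $\X=\overline{W_V}=\bold{I}_3$ with $W_Q,W_K\in\mathbb{R}^3$, so that $\bold{A}=W_QW_K^{\top}$ has entries $w_{Qi}w_{Kj}$ and
\[
\bold{R}(W_Q,W_K)[i,j]=\frac{\ReLU(w_{Qi}w_{Kj})}{\sum_{k=1}^3\ReLU(w_{Qi}w_{Kk})}+\delta_{ij}.
\]

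There is one genuinely new feature compared with the softmax case: applied to a rank-one score matrix, $\operatorname{PR}$ is positively homogeneous of degree zero in each query row, so on the cone $\{w_{Qi}>0\}$ the factor $w_{Qi}$ cancels and $\bold{R}[i,\cdot]$ depends there only on $W_K$. This is exactly what would block a one-line argument, and the plan is to break the cancellation in one of two ways. Keeping the $+\epsilon$ stabiliser that the SFT $\operatorname{PR}$ actually carries and freezing $w_{K2}=w_{K3}=0$, the $(1,1)$ entry becomes $1+\tfrac{w_{Q1}w_{K1}}{w_{Q1}w_{K1}+\epsilon}$ on $\{w_{Q1}>0,\ w_{K1}>0\}$; at any level $\alpha\in(1,2)$ its sublevel set there is $\{w_{Q1}w_{K1}\le\kappa\}$ for a suitable $\kappa>0$, the region below a hyperbola, which is not convex — take $(\kappa/a,a)$ and $(\kappa/b,b)$ with $a\ne b$ and apply AM--GM to the midpoint — while the connecting segment stays inside that quadrant. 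If instead one takes the $\epsilon$-free $\operatorname{PR}$ exactly as written, the same homogeneity shows the $\bold{I}_3$ configuration is actually quasiconvex on every convex open piece of its domain, so the fix is to replace $\bold{I}_3$ by a generic $\X$: then perturbing a single key entry $w_{Kr}$ moves every pre-activation $b_k=\langle X_k,W_K\rangle$ affinely (with slope $X_k[r]$, $X_k$ the $k$-th row of $\X$), the target entry becomes a continuous gluing of M\"obius arcs across the $\ReLU$ breakpoints, and picking a column of $\X$ with mixed signs together with suitable frozen coordinates makes that profile rise and then fall — again non-quasiconvex, now in $w_{Kr}$ alone (the entry still being independent of $W_Q$ on the chosen sign cone).

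In either route the closing move is the same: we have exhibited a convex open region of the $(w_{Qs},w_{Kr})$-plane and a level $\alpha$ at which the chosen output entry has a non-convex sublevel set, so $\bold{R}$ is not quasiconvex — a fortiori not pseudoconvex — with respect to that pair. The argument used nothing special about $s$, and running it with the appropriate diagonal entry (resp.\ column of $\X$) for each key index $r$ covers every pair $(w_{Qs},w_{Kr})$.

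I expect the real work to be bookkeeping rather than ideas: organising the $\ReLU$ sign-region casework for the output entry, choosing a level and two witnesses whose connecting segment lies in a single convex open component of the domain — recall the $\epsilon$-free $\operatorname{PR}$ is undefined wherever an entire score row vanishes, in particular on $\{w_{Qs}=0\}$ — and verifying that one choice of $\X$, with frozen coordinates tailored per pair, makes all pairs fail simultaneously. None of this is hard in spirit, but each step needs care to state cleanly.
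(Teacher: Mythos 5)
The paper never actually writes this proof: it declares the argument ``somewhat trivial and similar to the second counterexample in Theorem~\ref{vanillanotconvex}'' and stops. Your central observation is correct and is the real content a proof here needs: with $W_Q,W_K\in\mathbb{R}^3$ the score matrix $\X W_QW_K^{\top}\X^{\top}$ is rank one, so $\ReLU(a_ib_j)=\lvert a_i\rvert\,\ReLU(\operatorname{sign}(a_i)\,b_j)$ and the factor $\lvert a_i\rvert$ cancels between numerator and denominator of $\operatorname{PR}$. The literal transplant of the softmax counterexample fails precisely because the softmax version manufactures its ``$+2$'' from $e^{w_{Q1}\cdot 0}+e^{w_{Q1}\cdot 0}$, whereas $\ReLU(w_{Q1}\cdot 0)=0$ contributes nothing; with $\X=\overline{W_V}=\mathbf{I}_3$ and the analogous freezing the relevant entry is constant or of the form $0/0$. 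So the paper's ``trivial'' is not trivial, and you were right to distrust it.

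Two caveats on your repairs. Route (a) inserts an $\epsilon$ into $\operatorname{PR}$ and therefore proves a statement about a different function than the one in the theorem. The legitimate way to get the same hyperbola is to break the rank-one structure rather than the normalizer: give the query/key a second (frozen) column so that row $1$ of the score reads $(xy,\,1,\,1)$ with $x=W_Q[1,1]$, $y=W_K[1,1]$; then $\operatorname{PR}[1,1]=\frac{xy}{xy+2}$ on the smooth convex open set $\{x>0,\,y>0\}$, and your AM--GM/midpoint argument on the sublevel set $\{xy\le \kappa\}$ applies verbatim. This is almost certainly what ``similar to the second counterexample'' is meant to denote. Route (b) has a subtler problem: the bump $1/(1+\lvert t\rvert)$ is assembled by crossing $\ReLU$ breakpoints, and inside any single linearity region the entry collapses (by the same cancellation) to a ratio of affine functions of $W_K$ with positive denominator, which is pseudoconvex by Lemma~\ref{pcvxlemma} and independent of $W_Q$ --- hence pseudoconvex in every pair $(w_{Qs},w_{Kr})$ there. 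Since the paper's positive result for its own $\operatorname{Prob}$ (Theorem~\ref{probpcvx}) is only claimed orthant-wise, a failure that exists only across kinks, where the gradient in the paper's definition does not exist, does not deliver a like-for-like separation; under the orthant-wise reading the rank-one, $\epsilon$-free configuration is in fact componentwise pseudoconvex and route (b) proves nothing. So make the rank-$\ge 2$ construction the proof and drop, or demote to a remark, both the $\epsilon$ variant and the cross-kink bump.
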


The proof of this theorem is somewhat trivial and similar to the second counterexample in Theorem \ref{vanillanotconvex}. Therefore, we will skip the full proof.

\subsection{Convexity of SFT}

\label{appendix:convex SFT}

Even though a complete convex analysis is not possible due to the structure of the probability distribution function $\operatorname{Prob}$, it is possible to prove the efficiency of this function during the training process via its pseudoconvexity, as all stationary points in a pseudoconvex are also global minimizers. The pseudoconvexity of the rational function class is verified by the following lemma.

\begin{lemma}
\label{pcvxlemma}
    \citep{cambini2008} Let $z(x,c)=\frac{f(x)}{g(x)}$ be the ratio of two differentiable functions $f$ and $g$ defined on an open convex set $\mathcal{S}\subset \mathbb{R}^n$. If $f$ is convex and $g$ is positive and affine, then $z$ is a pseudoconvex function.
\end{lemma}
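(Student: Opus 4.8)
The final statement to prove is Lemma~\ref{pcvxlemma}: if $z(x) = f(x)/g(x)$ with $f$ convex and $g$ positive affine on an open convex set $\mathcal{S}$, then $z$ is pseudoconvex. My plan is to verify the pseudoconvexity definition directly: pick $x_1, x_2 \in \mathcal{S}$ with $\langle \nabla z(x_1), x_2 - x_1 \rangle \geq 0$ and show $z(x_2) \geq z(x_1)$. First I would compute the gradient by the quotient rule, $\nabla z(x_1) = \frac{g(x_1)\nabla f(x_1) - f(x_1)\nabla g(x_1)}{g(x_1)^2}$, and since $g(x_1) > 0$ the hypothesis $\langle \nabla z(x_1), x_2 - x_1\rangle \ge 0$ is equivalent to
\[
g(x_1)\langle \nabla f(x_1), x_2 - x_1\rangle \;\geq\; f(x_1)\langle \nabla g(x_1), x_2 - x_1\rangle.
\]
Because $g$ is affine, $\langle \nabla g(x_1), x_2 - x_1\rangle = g(x_2) - g(x_1)$, which is a genuine finite-difference identity, not an inequality — this is the key structural fact that makes the argument work.

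**Key steps.** Next I would invoke convexity of $f$ in the gradient form: $f(x_2) \geq f(x_1) + \langle \nabla f(x_1), x_2 - x_1\rangle$, hence $\langle \nabla f(x_1), x_2 - x_1\rangle \leq f(x_2) - f(x_1)$. Substituting this bound and the affine identity for $g$ into the rearranged hypothesis, I would obtain
\[
g(x_1)\bigl(f(x_2) - f(x_1)\bigr) \;\geq\; g(x_1)\langle\nabla f(x_1), x_2-x_1\rangle \;\geq\; f(x_1)\bigl(g(x_2) - g(x_1)\bigr),
\]
where the first inequality needs $g(x_1) > 0$ together with the convexity bound. Expanding and cancelling the $-f(x_1)g(x_1)$ terms gives $g(x_1) f(x_2) \geq f(x_1) g(x_2)$, and dividing by the positive quantity $g(x_1) g(x_2)$ yields $z(x_2) = f(x_2)/g(x_2) \geq f(x_1)/g(x_1) = z(x_1)$, which is exactly the conclusion. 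This also implicitly handles the sign of $f(x_1)$ correctly regardless of whether it is positive or negative, since every manipulation only multiplies by provably positive quantities.

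**Main obstacle.** The one genuinely delicate point is the direction of the first inequality in the displayed chain above: multiplying the convexity bound $\langle \nabla f(x_1), x_2-x_1\rangle \le f(x_2)-f(x_1)$ by $g(x_1)$ preserves the inequality only because $g(x_1) > 0$, and one must be careful that the hypothesis provides a lower bound on $g(x_1)\langle\nabla f(x_1),x_2-x_1\rangle$ while convexity provides an upper bound on the same quantity — chaining them in the wrong order would be circular. Once the bookkeeping of which side is an upper versus lower bound is pinned down, the rest is routine algebra. A remark worth including is that affineness of $g$ is essential (not merely convexity), since the step $\langle\nabla g(x_1), x_2-x_1\rangle = g(x_2)-g(x_1)$ is an equality only for affine $g$; for merely convex positive $g$ the quotient need not be pseudoconvex. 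Since this lemma is cited from \citep{cambini2008}, an alternative is simply to reference it, but the direct proof above is short enough to include for completeness.
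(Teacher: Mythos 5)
Your proof is correct. The paper itself does not prove this lemma at all---it is imported verbatim from Cambini and Martein's book via the citation---so your direct verification is strictly more than the paper provides, and it is the standard argument for this result. The logical skeleton is sound: the hypothesis $\langle\nabla z(x_1),x_2-x_1\rangle\ge 0$ rescales (by the positive factor $g(x_1)^2$) to a \emph{lower} bound on $g(x_1)\langle\nabla f(x_1),x_2-x_1\rangle$, the gradient inequality for convex $f$ gives an \emph{upper} bound on the same quantity, and chaining the two in that order yields $g(x_1)f(x_2)\ge f(x_1)g(x_2)$ after the $-f(x_1)g(x_1)$ terms cancel; dividing by $g(x_1)g(x_2)>0$ finishes it. You correctly isolate the two load-bearing hypotheses: affineness of $g$ turns $\langle\nabla g(x_1),x_2-x_1\rangle$ into the exact difference $g(x_2)-g(x_1)$ (for merely convex $g$ one only gets an inequality pointing the wrong way), and positivity of $g$ licenses every multiplication and the final division without sign flips, which also makes the argument indifferent to the sign of $f(x_1)$. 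No gaps.
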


Since $\ReLU$ is indifferentiable, the pseudoconvexity of $\operatorname{Prob}$ should be analyzed in separated orthants $\mathcal{O}_{\mathcal{I}}$ where $I\subset[n]$ is defined via the non-negativity of the input entries. Lemma \ref{pcvxlemma} directly implies that

\begin{theorem}
\label{probpcvx}
    The probability distribution function $\operatorname{Prob}:\mathbb{R}^n\times\mathbb{R^+}\to\mathbb{R}^n$ defined as $$\operatorname{Prob}(\x, c)[i]=\dfrac{\ReLU(\x[i])}{\sum_k\ReLU(\x[k]) + c},$$ where $i\in[n]$ is a pseudoconvex function in orthant $\mathcal{O}_{\mathcal{I}}$ for all $\mathcal{I}\subset[n]$.
\end{theorem}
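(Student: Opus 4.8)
The plan is to reduce the statement to a direct application of Lemma \ref{pcvxlemma}, carefully handling the nondifferentiability of $\ReLU$ by restricting to a fixed orthant. First I would fix an index set $\mathcal{I}\subset[n]$ and define the orthant $\mathcal{O}_{\mathcal{I}}=\{\x\in\mathbb{R}^n : \x[i]\geq 0\ \text{for } i\in\mathcal{I},\ \x[i]\leq 0\ \text{for } i\notin\mathcal{I}\}$ (intersected with $\mathbb{R}^+$ in the $c$ coordinate, which is already open and convex). On the interior of this orthant, $\ReLU(\x[i])=\x[i]$ for $i\in\mathcal{I}$ and $\ReLU(\x[i])=0$ for $i\notin\mathcal{I}$, so every $\ReLU$ term becomes a smooth (indeed affine) function of $(\x,c)$, and the whole map $\operatorname{Prob}(\cdot,\cdot)[i]$ is differentiable there.

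Next I would split into the two cases that the $i$-th coordinate output can take. If $i\notin\mathcal{I}$, then $\operatorname{Prob}(\x,c)[i]\equiv 0$ on the orthant, which is trivially pseudoconvex (it is affine). If $i\in\mathcal{I}$, write $\operatorname{Prob}(\x,c)[i]=f(\x,c)/g(\x,c)$ with numerator $f(\x,c)=\x[i]$ and denominator $g(\x,c)=\sum_{k\in\mathcal{I}}\x[k]+c$. The numerator $f$ is linear, hence convex; the denominator $g$ is affine in $(\x,c)$, and on the interior of the orthant intersected with $\{c>0\}$ it is strictly positive since it is a sum of nonnegative terms $\x[k]$ plus a strictly positive $c$. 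Thus the hypotheses of Lemma \ref{pcvxlemma} are met, and $\operatorname{Prob}(\cdot,\cdot)[i]$ is pseudoconvex on that open convex set. Since the orthant $\mathcal{O}_{\mathcal{I}}$ (with open $c$) is itself convex and the claim holds coordinatewise, the theorem follows for every $\mathcal{I}\subset[n]$.

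I expect the only genuinely delicate point to be the bookkeeping around differentiability and openness: Lemma \ref{pcvxlemma} is stated on an \emph{open} convex set, whereas $\mathcal{O}_{\mathcal{I}}$ as written is closed, so one must either pass to the relative interior or note that pseudoconvexity on a closed orthant follows from pseudoconvexity on its interior together with continuity of $\operatorname{Prob}$ (the defining inequality $\langle\nabla f(x_1),x_2-x_1\rangle\geq 0 \Rightarrow f(x_2)\geq f(x_1)$ extends to boundary points by a limiting argument). A secondary subtlety is confirming the denominator stays positive: this is exactly why the leaky term $c\in\mathbb{R}^+$ was introduced — without it, $g$ could vanish on $\partial\mathcal{O}_{\mathcal{I}}$ (e.g. when $\mathcal{I}=\emptyset$) and the ratio would be ill-defined. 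Everything else is a one-line verification once the orthant decomposition is in place.
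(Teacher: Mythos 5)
Your proposal is correct and follows essentially the same route as the paper's proof: restrict to a fixed orthant so that each $\ReLU$ term becomes affine, observe that each coordinate of $\operatorname{Prob}$ is then either identically zero or a ratio of a linear numerator over a positive affine denominator, and invoke Lemma~\ref{pcvxlemma}. Your additional care about the openness of the orthant and the role of $c>0$ in keeping the denominator positive is a welcome refinement of the paper's terser ``without loss of generality'' argument, but it does not change the underlying method.
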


\begin{proof}
    Without loss of generality, consider the $\Prob$ function in the orthant $\mathcal{O}_1=\mathcal{O}_{[n]\backslash\{1\}}$, which means that for any $\x\in \mathcal{O}_1$
    $$\Prob(\x,c)[i]=\left\{\begin{array}{cc}
        0 &\text{if}\;i=1,\\
        \dfrac{\x[i]}{\sum_k \x[k] +c}&\text{otherwise}.
    \end{array}\right.$$

From here, it is evident that the proof follows Lemma \ref{pcvxlemma} since $\x[k]$ is non-negative for all $k\ne 1$. Therefore, Prob is pseudoconvex in orthant $\mathcal{O}_1$ and, thus, in all orthants of $\mathbb{R}^n$.
\end{proof}

The probability distribution function $\operatorname{Prob}$ can only be pseudoconvex in each individual orthant $\mathcal{O}_{\mathcal{I}}$. However, due to the complete separation property, the stability during training is preserved. Consequently, we derive the pseudoconvexity power of SFT. But first, let us consider a modified version of SFT with linear activation in the FFN layer as follows:

\begin{theorem}
    \label{linFFN}
    The SFT block in Algorithm \ref{alg:sampling_transformer} with FFN linear activation and no sampling has the following componentwise properties:
    \begin{itemize}
        \item Pseudoconvex with respect to $W_Q$, $B_Q$, $W_K$ and $B_K$;
        \item Pseudoconvex with respect to $W_{R_1},B_{R_1},W_{R_2}$ and $B_{R_2}$;
        \item Pseudoconvex with respect to $W_C$, $B_C$.
    \end{itemize}
\end{theorem}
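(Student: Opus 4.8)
## Proof Proposal for Theorem~\ref{linFFN}

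The plan is to reduce each of the three componentwise pseudoconvexity claims to the rational-function criterion of Lemma~\ref{pcvxlemma}, namely that $f(x)/g(x)$ is pseudoconvex whenever $f$ is convex and $g$ is positive and affine. The key structural observation is that the SFT layer output, viewed as a function of any single weight block while all others are held fixed, has the shape $\operatorname{Attn}(\X) = \X + \mathbf{N}\V$ where $\mathbf{N}[i,j] = \operatorname{ReLU}(\mathbf{A}[i,j]) / \big(\sum_r \operatorname{ReLU}(\mathbf{A}[i,r]) + \operatorname{Softplus}(\mathbf{C}[r,1]) + \epsilon\big)$; since with linear FFN activation the feed-forward network is an affine post-composition, it preserves pseudoconvexity and can be ignored after the attention block is handled. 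So it suffices to verify pseudoconvexity entrywise for the attention output, and then invoke the fact (which I would state as a small lemma) that an affine function of a pseudoconvex function is pseudoconvex, and that pseudoconvexity is preserved under restriction to each orthant $\mathcal{O}_{\mathcal{I}}$ with the complete-separation argument already used in the proof of Theorem~\ref{probpcvx}.

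For the first bullet (pseudoconvexity in $W_Q, B_Q, W_K, B_K$): fix a single scalar entry, say in $W_Q$. Then $\mathbf{A}[i,j] = \frac{1}{\sqrt d}\sum_k \max(\mathbf{Q}[i,k], \mathbf{K}[j,k]) \cdot \mathbf{R}_{\mathrm{mul}}[i,j] + \mathbf{R}_{\mathrm{add}}[i,j]$, and $\mathbf{Q}[i,k] = (\X W_Q + B_Q)[i,k]$ is affine in the chosen entry. The pointwise maximum of affine functions is convex, multiplication by the \emph{positive} constant $\mathbf{R}_{\mathrm{mul}}[i,j]$ (positive because of the $\operatorname{Softplus}$) preserves convexity, and adding $\mathbf{R}_{\mathrm{add}}[i,j]$ keeps it affine-plus-convex, hence convex; so each $\mathbf{A}[i,j]$ is convex in the chosen entry. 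Working within a fixed orthant $\mathcal{O}_{\mathcal{I}}$ of the $\operatorname{ReLU}$ arguments, the numerator $\operatorname{ReLU}(\mathbf{A}[i,j])$ is either $0$ or the convex function $\mathbf{A}[i,j]$, and the denominator is a sum of such terms (convex) plus the positive constant $\operatorname{Softplus}(\mathbf{C}[r,1])+\epsilon$ — but Lemma~\ref{pcvxlemma} requires the denominator to be \emph{affine}, not merely convex, so here I would need to be careful: the denominator $\sum_r \operatorname{ReLU}(\mathbf{A}[i,r]) + (\cdots)$ is generally only convex in the $W_Q$ entry, not affine. The fix is to note that $\V = \X W_V + B_V$ does not depend on $W_Q$, and then argue pseudoconvexity of each output entry $(\mathbf{N}\V)[i,\ell] = \sum_j \mathbf{N}[i,j]\V[j,\ell]$ directly; this forces me to either restrict to the regime where the relevant numerators are collinear (so numerator and denominator share the same convex "shape") or to appeal to a slightly stronger version of Lemma~\ref{pcvxlemma} for ratios of convex-over-convex with matching active sets. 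I expect this denominator-affineness gap to be the main obstacle, and I would resolve it by the same per-orthant bookkeeping that Theorem~\ref{probpcvx} uses, restricting further to sub-regions where at most one $\operatorname{ReLU}$ argument in each row is active — the case analysis is tedious but routine.

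For the second bullet ($W_{R_1}, B_{R_1}, W_{R_2}, B_{R_2}$, the relative-encoding layers producing $\mathbf{R}_{\mathrm{mul}}$ and $\mathbf{R}_{\mathrm{add}}$): $\mathbf{R}_{\mathrm{add}} = \operatorname{Linear}_{\mathbf{R}_{\mathrm{add}}}(\mathbf{X_R})$ is affine in its weights, and $\mathbf{R}_{\mathrm{mul}} = \operatorname{Softplus}(\operatorname{Linear}_{\mathbf{R}_{\mathrm{mul}}}(\mathbf{X_R}))$ is a $\operatorname{Softplus}$ of an affine function — convex, and crucially \emph{positive}. Here $\mathbf{A}[i,j]$ becomes affine in the $\mathbf{R}_{\mathrm{add}}$ entries and convex in the $\mathbf{R}_{\mathrm{mul}}$ entries (since it multiplies the nonnegative quantity $\frac{1}{\sqrt d}\sum_k \max(\cdots)$, which must be checked to be $\geq 0$ — true only if the $\max$ terms are nonnegative, otherwise I restrict to the sign-orthant where it holds, or absorb the sign as before), so the same ratio argument applies. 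For the third bullet ($W_C, B_C$): these affect only the denominator, through $\operatorname{Softplus}(\mathbf{C}[r,1])$ with $\mathbf{C} = \X W_C + B_C$ affine, so the denominator becomes $(\text{positive const in } W_C) + \operatorname{Softplus}(\text{affine})$, which is convex and positive in the $W_C$ entry; the numerator $\operatorname{ReLU}(\mathbf{A}[i,j])$ is \emph{constant} in $W_C$. A ratio $c/g(x)$ with $c \geq 0$ constant and $g$ positive convex is pseudoconvex (indeed it is the composition of the affine map $x \mapsto (c, g(x))$-free reciprocal of a convex positive function, which is known to be pseudoconvex — I would cite or quickly verify this), so this bullet follows. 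Finally I would assemble the three bullets, note that the complete-separation property across orthants (already invoked in Theorem~\ref{probpcvx}) lets the per-orthant pseudoconvexity pass to the preserved training stability claim, and that the FFN being affine (linear activation) composes harmlessly on the left.
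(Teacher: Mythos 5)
You correctly identify the central obstacle---Lemma~\ref{pcvxlemma} needs the denominator of the ratio to be \emph{affine} and positive, whereas $\sum_{r}\ReLU(\mathbf{A}[i,r])+\operatorname{Softplus}(\mathbf{C}[r,1])+\epsilon$ is a priori only convex in a $W_Q$ entry---but none of your three proposed escapes closes it. Restricting to orthants of the $\ReLU$ arguments only decides which summands vanish; each surviving term is still $\frac{1}{\sqrt d}\sum_k\max(\mathbf{Q}[i,k],\mathbf{K}[j,k])\cdot\mathbf{R}_{\mathrm{mul}}[i,j]+\mathbf{R}_{\mathrm{add}}[i,j]$, a maximum of affine functions, hence convex but not affine, and keeping ``at most one active $\ReLU$ per row'' does not change that. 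A ``convex-over-convex with matching active sets'' strengthening of Lemma~\ref{pcvxlemma} is not available in the generality you would need, since ratios of convex over positive convex functions are not pseudoconvex in general. The missing idea in the paper's proof is a \emph{second} orthant restriction, on the sign pattern of $\mathbf{Q}-\mathbf{K}$: inside the orthant $\mathcal{O}_{\mathbf{Q}+\mathbf{K}}\cap\mathcal{O}_{\mathbf{Q}-\mathbf{K}}$ every $\max(\mathbf{Q}[i,k],\mathbf{K}[j,k])$ collapses to one of its two affine branches, so $\operatorname{Score}$ itself becomes affine in $(W_Q,B_Q,W_K,B_K)$; combined with the $\ReLU$ orthants this makes both numerator and denominator affine, the denominator positive, and Lemma~\ref{pcvxlemma} applies verbatim. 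The same device is also what your second bullet needs for the $\mathbf{R}_{\mathrm{mul}}$ weights, where without it you again obtain only convexity of the denominator.

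A secondary problem is your third bullet, which rests on the claim that $c/g$ with $c\ge 0$ constant and $g$ positive and convex is pseudoconvex. That is false as stated: $g(x)=x^2+1$ is positive and convex, yet $c/(x^2+1)$ has a stationary point at $x=0$ that is a global \emph{maximizer}, so it cannot be pseudoconvex. The specific instance in the theorem is rescued only because the denominator $C_2+\operatorname{Softplus}(\X[i]W_C+b_C)$ is a monotone function of the single affine form $\X[i]W_C+b_C$, so the whole entry is a monotone function of an affine form and has no stationary points; this is why the paper verifies the definition of pseudoconvexity for $W_C$ and $b_C$ directly from the gradient inequality rather than invoking any composition rule, and you should do the same.
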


\begin{proof}
    Let $\bold{M} = SFT(W_Q, B_Q,W_K,B_K, W_{R_1}, W_{R_2},W_C)$ be the output of a SFT block in Algorithm \ref{alg:sampling_transformer}.

    \begin{itemize}
        \item $\bold{M}$ is pseudoconvex with respect to $W_Q$, $W_K$, $B_Q$ and $B_K$.

         Consider the orthant $\mathcal{O}_{\bold{Q}+\bold{K}}\cap \mathcal{O}_{\bold{Q}-\bold{K}}$ of $\bold{Q}$ and $\bold{K}$, where $\mathcal{O}_{\bold{Q}-\bold{K}}$ and $\mathcal{O}_{\bold{Q}+\bold{K}}$ are the orthants in which the score function $\Score(\bold{Q},\bold{K})$ is differentiable and affine. Thus, it follows rom Theorem \ref{probpcvx} that each entry of the output of the $\Prob$ function can be written in the form $f(W_Q,W_K,B_Q,B_K)/(g(W_Q,W_K,B_Q,B_K) + C)$ where $f$ and $g$ are affine and $g$ is positive.

    As the remaining parts of SFT layer are linear transformations, they do not affect the affinity of $f$. Therefore, Lemma \ref{pcvxlemma} states the pseudoconvexity of the SFT block with respect to the inspected weights.

    \item $\bold{M}$ is pseudoconvex with respect to $W_{R_1}$, $W_{R_2}$, $B_{R_1}$ and $B_{R_2}$.

    This proof is similiar to which of $W_Q$ and $W_K$ and therefore is neglected.

    \item $\bold{M}$ is pseudoconvex with respect to $W_C$ and $B_C$.
    For the sake of simplity, Consider the $i^{th}$ entry of $\bold{C}$, denoted as $\bold{C}[i]=\bold{X}[i]W_C+b_C$ where $\bold{1}_db_C=B_C$, let there exist $W_{C1}$ and $W_{C2}$ such that:
    \begin{equation}
    \label{pscvxcondi}
        \nabla\bold{M}(W_{C1})(W_{C2}-W_{C1})\geq 0.
    \end{equation}

    To prove the pseudoconvex dependency of $\bold{M}$ with respect to $W_C$ and $B_C$, consider the dependencies of $\bold{P}(W_C)$ with respect to $W_C$, and the pseudoconvexity w.r.t $b_C$ is analogous, where
    $$\bold{P}(W_C)=SFT(\overline{W_Q},\overline{B_Q},\overline{W_K},\overline{B_K},\overline{W_{R_1}},\overline{W_{R_2}},W_C,\overline{b_C}),$$

    and the overlined weights are treated as constants. This function can be written in a simpler form regarding each of its entry
    $$\bold{P}(W_C)[i,j]=\frac{C_1}{C_2+\operatorname{SP}(\bold{X}[i]W_C+b_C)}+C_3,$$
    where $C_1,C_2,C_3$ and $C_4$ are all constants, $C_2>0$ and $\operatorname{SP}$ denotes the softplus non-linearity. Returning to (\ref{pscvxcondi}), we have:
    \begin{equation}
    \label{inequpcvx}
        -\frac{C_1}{(C_2+\operatorname{SP}(\bold{X}[i]W_{C1}+C_4))^2}\operatorname{SG}(\bold{X}[i]W_{C1}+C_4)\bold{X}[i](W_{C2}-W_{C1})\geq 0,
    \end{equation}
    given that $\operatorname{SG}$ denotes the sigmoid function $\operatorname{SG}(x)=1/(1+\exp(-x))$. Assume that $C_1>0$ (similar for $C_1<0$), then it follows from \eqref{inequpcvx} that:
    $$\X[i](W_{C2}-W_{C1})\leq 0.$$
    Thus for every $W_{C2}$ and $W_{C1}$ satisfying \eqref{pscvxcondi}, we have:
    $$\bold{P}(W_{C2})\leq\bold{P}(W_{C1}).$$
    By definition, $\bold{M}$ is pseudoconvex with respect to $W_C$ and $b_C$.
    \end{itemize}  
\end{proof}

However, in our proposed model, the FFN activation is GeLU. In order to show the pseudoconvexity of the our original SFT block, we analyze this via its quasiconvexity with the following theorem:

\begin{theorem}
    \label{quasitopseudo}
    \citep{Ivanov2001} A function $f:\mathbb{R}^m\to\mathbb{R}^{n}$ that is quasiconvex is also pseudoconvex if the set of stationary points coincides with the set of global minimizers.
\end{theorem}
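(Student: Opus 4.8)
Since quasi- and pseudoconvexity are only meaningful for scalar targets, I would first reduce to the scalar case: treat $f:\mathcal{S}\to\mathbb{R}$ differentiable on an open convex set $\mathcal{S}\subseteq\mathbb{R}^m$, quasiconvex, with the property that every stationary point is a global minimizer (the reverse inclusion, global minimizers are stationary, being automatic because $\mathcal{S}$ is open), and then apply the conclusion coordinatewise to recover the stated vector-valued version. The overall strategy is a direct contradiction: assume $f$ is not pseudoconvex, so there exist $x_1,x_2\in\mathcal{S}$ with $\langle\nabla f(x_1),x_2-x_1\rangle\geq 0$ but $f(x_2)<f(x_1)$, and force an inconsistency.

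The argument rests on two standard facts about differentiable quasiconvex functions that I would record first. (i) First-order characterization: if $f(y)\leq f(x)$ then $\langle\nabla f(x),y-x\rangle\leq 0$. This follows by restricting $f$ to the segment: $g(t):=f(x+t(y-x))$ satisfies $g(t)\leq\max\{f(x),f(y)\}=f(x)=g(0)$ for $t\in[0,1]$ by quasiconvexity, so $\frac{g(t)-g(0)}{t}\leq 0$ and letting $t\to 0^+$ gives $g'(0)=\langle\nabla f(x),y-x\rangle\leq 0$. (ii) Strict sublevel sets: $S:=\{x\in\mathcal{S}: f(x)<f(x_1)\}$ is convex (again from quasiconvexity) and open (from continuity of $f$); moreover $x_2\in S$ since $f(x_2)<f(x_1)$.

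Now the case split. If $\nabla f(x_1)=0$, then $x_1$ is stationary, hence a global minimizer by hypothesis, which contradicts $f(x_2)<f(x_1)$; so $\nabla f(x_1)\neq 0$. Applying fact (i) with $x=x_1,y=x_2$ yields $\langle\nabla f(x_1),x_2-x_1\rangle\leq 0$, and combined with the standing assumption this forces $\langle\nabla f(x_1),x_2-x_1\rangle=0$. Applying fact (i) once more along every segment emanating from $x_1$: for each $y\in S$ we have $f(y)<f(x_1)$, hence $\langle\nabla f(x_1),y-x_1\rangle\leq 0$, i.e. the linear functional $\ell(y):=\langle\nabla f(x_1),y-x_1\rangle$ is $\leq 0$ on the open convex set $S$ yet attains the value $0$ at the interior point $x_2$. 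But for $\eps>0$ small enough, $x_2+\eps\nabla f(x_1)\in S$ by openness of $S$, while $\ell(x_2+\eps\nabla f(x_1))=\eps\|\nabla f(x_1)\|^2>0$ — a contradiction. Hence $f$ is pseudoconvex, and running this on each coordinate completes the proof.

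I expect the one delicate point to be establishing and correctly invoking the first-order characterization of quasiconvexity, especially keeping strict versus non-strict inequalities straight, since the whole contradiction is obtained by first upgrading $\langle\nabla f(x_1),x_2-x_1\rangle\leq 0$ to $=0$ and then exhibiting a perturbation on which it becomes $>0$. It is worth stressing that the case $\nabla f(x_1)=0$ is exactly where the ``stationary points coincide with global minimizers'' hypothesis is indispensable: dropped, the implication fails, because a quasiconvex function can have a non-minimizing critical point and that alone breaks pseudoconvexity. Beyond this, no heavy machinery is needed — everything reduces to one-variable calculus along line segments together with the openness and convexity of strict sublevel sets.
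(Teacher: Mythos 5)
Your proof is correct. Note that the paper does not actually prove Theorem~\ref{quasitopseudo}: it is imported as a black-box citation to \citep{Ivanov2001}, so there is no in-paper argument to compare against. Your self-contained argument is the standard one and each step checks out: the first-order characterization of differentiable quasiconvex functions ($f(y)\leq f(x)$ implies $\langle\nabla f(x),y-x\rangle\leq 0$, proved by restricting to the segment), the stationary-point hypothesis used exactly once to exclude $\nabla f(x_1)=0$, the forced upgrade of $\langle\nabla f(x_1),x_2-x_1\rangle$ from $\leq 0$ to $=0$, and the final contradiction obtained because the strict sublevel set $S=\{x: f(x)<f(x_1)\}$ is open and convex, so $x_2+\eps\nabla f(x_1)\in S$ for small $\eps>0$ while the linear functional $\ell(y)=\langle\nabla f(x_1),y-x_1\rangle$ is nonpositive on $S$ yet strictly positive at that perturbed point. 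The componentwise reduction is also the right reading of the vector-valued statement, and you correctly observe that the reverse inclusion (minimizers are stationary) is automatic on an open domain. The one thing worth flagging is that your argument quietly assumes $f$ is differentiable throughout $\mathcal{S}$ (needed both for fact (i) and for the notion of stationary point); this is implicit in the paper's definition of pseudoconvexity, so it is not a gap, but it deserves to be stated as a hypothesis.
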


From here, the pseudoconvexity of our model is proved.
\begin{theorem}
    The SFT block with FFN GeLU activation and no sampling exhibits the same properties as in Theorem \ref{linFFN}.
\end{theorem}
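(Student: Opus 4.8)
The plan is to derive this statement from Theorem~\ref{linFFN} by way of Theorem~\ref{quasitopseudo}: since pseudoconvexity implies quasiconvexity, it suffices to show that swapping the linear FFN activation for GeLU (i) keeps each output coordinate of the SFT block componentwise \emph{quasiconvex} inside every orthant, and (ii) does not enlarge the stationary set beyond the set of global minimizers. Theorem~\ref{quasitopseudo} then promotes quasiconvexity back to pseudoconvexity, and the complete-separation-of-orthants argument already used for Theorems~\ref{probpcvx} and~\ref{linFFN} patches the local statements into the global componentwise properties. Throughout, exactly as in the proof of Theorem~\ref{linFFN}, I would restrict to an orthant of $\mathbf{Q}\pm\mathbf{K}$ (so $\operatorname{Score}$ is affine) intersected with a $\operatorname{ReLU}$-pattern orthant $\mathcal{O}_{\mathcal{I}}$ for $\operatorname{Prob}$, and the three weight groups, namely $(W_Q,B_Q,W_K,B_K)$, the relative-encoding weights $(W_{R_1},B_{R_1},W_{R_2},B_{R_2})$, and $(W_C,B_C)$, would be handled by one and the same argument with relabeled roles.

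For step (i), the key structural fact carried over from Theorem~\ref{linFFN} is that on such a region row $i$ of $\operatorname{Attn}(\mathbf{X})$ equals $\mathbf{X}[i,:]+\mathbf{n}_i/D_i$, where $\mathbf{n}_i$ is affine in the inspected weights and $D_i$ is a \emph{single} positive affine denominator shared across the whole row; hence each pre-activation handed to the FFN, being a coordinate of $W_1(\mathbf{X}[i,:]^\top+\mathbf{n}_i/D_i)+b_1$, is again linear-fractional with affine numerator over the same positive denominator, i.e. quasilinear. Next I would record the elementary fact that GeLU is quasiconvex: it is strictly decreasing on $(-\infty,x_0)$ and strictly increasing on $(x_0,\infty)$ for a single $x_0\approx-0.75$, so every one of its sublevel sets is an interval. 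Composing a quasiconvex function with a quasilinear map preserves quasiconvexity, because the sublevel set of $\operatorname{GeLU}\circ\ell$ at level $\alpha$ is $\{\ell\le b_\alpha\}\cap\{\ell\ge a_\alpha\}$ with $[a_\alpha,b_\alpha]=\{t:\operatorname{GeLU}(t)\le\alpha\}$, an intersection of two convex sets. This makes each individual GeLU unit componentwise quasiconvex, and the residual addition and the final linear read-out are absorbed exactly as in the ``remaining linear transformations do not affect affinity'' step of Theorem~\ref{linFFN}.

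The delicate part is step (ii), where GeLU's non-monotonicity bites twice. First, an output coordinate is a \emph{linear combination} of GeLU units (through the second FFN weight $W_2$), and a linear combination of quasiconvex functions need not be quasiconvex, so the argument cannot be closed by closure properties alone. Second, the chain rule factors the gradient of a GeLU unit as $\operatorname{GeLU}'(\ell)\,\nabla\ell$, and since $\operatorname{GeLU}'$ vanishes at the isolated critical value $x_0$, spurious candidate stationary configurations can appear at pre-activations pinned to $x_0$. The plan is to dispatch both issues through Theorem~\ref{quasitopseudo} rather than soft convexity bookkeeping: using the rigid linear-fractional form of the attention output (affine numerators over a common positive denominator) together with the pseudoconvexity of the linear-FFN block from Theorem~\ref{linFFN}, one argues that any stationary point of the GeLU block is in fact a global minimizer, the $x_0$-pinned configurations being either non-generic boundary cases absorbed by the orthant separation or already minimizers, so the hypothesis of Theorem~\ref{quasitopseudo} is met and the stated pseudoconvexity follows. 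I expect this verification, namely showing that the stationary set cannot exceed the global-minimizer set despite the outer linear mixing and the interior zero of $\operatorname{GeLU}'$, to be the main obstacle; the quasiconvexity computation of step (i) is comparatively mechanical.
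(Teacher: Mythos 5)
Your overall route coincides with the paper's: restrict to the orthants where $\operatorname{Score}$ and $\operatorname{ReLU}$ are affine so that the linear-FFN block $\bold{M}$ is linear-fractional in the inspected weights, observe that a fractional function is simultaneously pseudoconvex and pseudoconcave (hence quasiconvex and quasiconcave, so its sublevel and superlevel sets are convex), write the sublevel set of the GeLU-composed map as $L^1_{\bold{M}}(u_1(\alpha))\cap L^2_{\bold{M}}(u_2(\alpha))$ using the interval structure of GeLU's sublevel sets, and then invoke Theorem~\ref{quasitopseudo} together with the fact that the fractional inner map has no stationary points and GeLU has a unique global minimizer. Your step (i) is exactly this computation, and your treatment of the three weight groups by relabeling matches the paper's (the paper handles $W_C,B_C$ by additionally noting $\bold{P}(W_C)$ is pseudoconcave, but the structure is the same).

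The genuine gap is your step (ii), and your proposed resolution does not close it. Theorem~\ref{quasitopseudo} takes quasiconvexity as a \emph{hypothesis} and upgrades it to pseudoconvexity under the stationary-point condition; it cannot be used to manufacture quasiconvexity. You correctly observe that an output coordinate of the true FFN is $W_2\,\sigma(W_1(\cdot)+b_1)+b_2$, a linear combination of quasiconvex units, and that quasiconvexity is not closed under linear combinations --- but then your plan reduces to ``argue stationary points are global minimizers and apply Theorem~\ref{quasitopseudo},'' which is unavailable precisely because the quasiconvexity premise is the thing you just conceded you cannot obtain by closure properties. Establishing that the stationary set equals the minimizer set would, on its own, prove nothing about pseudoconvexity of a non-quasiconvex function. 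It is worth knowing that the paper never confronts this issue: it models the GeLU block as $\bold{G}=\sigma\circ\bold{M}$, i.e.\ with the activation applied \emph{outermost} to the scalar output of the linear-FFN block, so a single GeLU is composed with a single quasilinear map and the interval-sublevel-set argument suffices; there is no second linear mixing layer in the paper's formalization. So your proposal is more faithful to the actual Linear--GeLU--Linear architecture, but under that faithful model the claimed componentwise pseudoconvexity is not established by your argument (nor, strictly, by the paper's). To repair your write-up in the spirit of the paper you would either have to adopt the same $\sigma\circ\bold{M}$ simplification explicitly, or restrict attention to a single post-activation coordinate (equivalently, fix the sign pattern of the relevant row of $W_2$ and treat one unit at a time), at which point your step (i) already finishes the proof.
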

\begin{proof}
    Let $\sigma(\cdot)$ be the shorthand notion of the GeLU non-linearity and $\bold{M}(\cdot)$ be the function of the linear activation FFN SFT block. Then, the function of our proposed SFT block is $\bold{G}:=\sigma\circ\bold{M}$.

    For the sake of simplicity, let $\bold{G}(\cdot)$ and $\bold{M}(\cdot)$ be a single-matrix-input and scalar-output function, which can represent any weight and entry respectively.

    \begin{itemize}
        \item Pseudoconvexity w.r.t the query, key, $\bold{R_1}$ and $\bold{R_2}$ scalers.

        Since the output of $\bold{M}(\cdot)$ is a fractional function with respect to these weights, as illustrated in the proof of Theorem \ref{linFFN}, it has no stationary points. Furthermore, due to the uniqueness of the global minimizer of $\sigma$, one can easily derive that $\bold{G}$, if quasiconvex, is also pseudoconvex by Theorem \ref{quasitopseudo}.

    Now we will prove the quasiconvexity of $\bold{G}(\cdot)$. Given that $\sigma$ is a quasiconvex scalar non-linearity, we consider the sublevel set of $\bold{G}$, defined as
    \begin{align*}
        L_{\bold{G}}(\alpha) &\;= \left\{\bold{W}\text{ s.t }\bold{G}(\bold{W})\leq\alpha\right\}\\
        &\;=\left\{\bold{W}\text{ s.t }u_1(\alpha)\leq\bold{M}(\bold{W})\leq u_2(\alpha)\right\}.
    \end{align*}

    where $u_1(\cdot)$ and $u_2(\cdot)$ represent the lower and upper scalar bound of $\sigma$ with some level input.

    However, as illustrated in Theorem \ref{linFFN}, $\bold{M}(\bold{W})$ is a fractional function with respect to $\bold{W}$ and so it is both pseudoconvex and pseudoconcave. Hence, $\bold{M}$ is also both quasiconvex and quasiconcave, which means that if we let
    \begin{align*}
        L^1_{\bold{M}}(\alpha_1) \;=\left\{\bold{W}\text{ 
 s.t  }\bold{M}(\bold{W})\geq \alpha_1\right\}\text{  and  }L^2_{\bold{M}}(\alpha_2) \;=\left\{\bold{W}\text{  s.t  }\bold{M}(\bold{W})\leq \alpha_2\right\}
    \end{align*}

    be the superlevel and sublevel set of $\bold{M}(\bold{W})$ respectively, then they are convex sets for all $\alpha_1$ and $\alpha_2$. Therefore,
    $$L_{\bold{G}}(\alpha) = L^1_{\bold{M}}(u_1(\alpha))\cap L^2_{\bold{M}}(u_2(\alpha))$$
    is also a convex set and the quasiconvexity of $\bold{G}(\bold{W})$ is derived.
    \item Pseudoconvexity w.r.t the leaky attention weights $W_C$ and $b_C$.

    Consider the function $\bold{P}(W_C)$ defined in Theorem \ref{linFFN}, by following the same proof steps, one can prove that $-\bold{P}(W_C)$ is also pseudoconvex. As a result, $\bold{P}(W_C)$ is pseudoconcave.

    It is also evident that $\bold{P}(W_C)$ has no stationary points. Hence, by replicating the proof of the other weights, it is sufficient to show that the SFT block is also pseudoconvex with respect to $W_C$, and also to $b_C$.
    \end{itemize}

\end{proof}

\subsection{Gradient Analysis of SFT}

\label{appendix:SFTgrad}

To theoretically analyze the effectiveness of SFT against {flat and sharp points}, we evaluate the local adaptive lower bound and a global upper bound for the gradient norm squared of the weights.

\begin{theorem}
    Let $\bold{Sa}=\bold{Sa}(W_Q,W_K,W_V)=\Prob(\Score(\bold{X}W_Q,\bold{X}W_K), \bold{C})\X W_V+ \alpha \X$ be the SFT self-attention layer output with the simplication of $\bold{C}$ being treated as a positive constant matrix. Assume that all weights are initialized via He initialization \citep{heinit} where each entry independently follows the normal distribution $\mathcal{N}(0,\sigma^2)$ and let $W_D = W_Q-W_K$, then for any positive matrices $W^{L}_{QQ},W^{U}_{QQ},W^{L}_{KK},W^U_{KK}, W_{DD}^L,W_{DD}^U$ such that $W^{L}_{QQ}<W^{U}_{QQ}$, $W^{L}_{KK}<W^U_{KK}$, $W_{DD}^L<W_{DD}^U$ and $W_QW_Q^{\top}\in[W^L_{QQ},W^U_{QQ}]$, $W_KW_K^{\top}\in[W^U_{KK},W^U_{KK}]$ and $W_DW_D^{\top}\in[W^U_{DD},W^U_{DD}]$ there exist two positive scalar-valued positive function $f_1,f_2$ and positive constants $C_Q,C_V$ such that:
    $$\mathbb{E}\left\lVert\dfrac{\partial \bold{Sa}}{\partial W_V}\right\rVert_F^2\in \left(f_V(W^{L}_{QQ},W^{U}_{QQ},W^{L}_{KK},W^U_{KK}), C_V\right),$$
    $$\mathbb{E}\left\lVert\dfrac{\partial \bold{Sa}}{\partial W_Q}\right\rVert_F^2\in \left(f_Q(W^{L}_{DD},W^{U}_{DD},W^{L}_{KK},W^U_{KK}), C_Q\right).$$
    and 
    $$f_V(W^{L}_{QQ},W^{U}_{QQ},W^{L}_{KK},W^U_{KK})=\Omega(n),\qquad C_V=O(n),$$

    $$f_Q(W^{L}_{DD},W^{U}_{DD},W^{L}_{KK},W^U_{KK})=\Omega(1),\qquad C_Q=O(1).$$
\end{theorem}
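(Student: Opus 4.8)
The plan is to bound the two Jacobians separately. Write $\mathbf{A}=\Prob(\Score(\X W_Q,\X W_K),\mathbf{C})$ for the leaky, $\ReLU$-normalized attention matrix, so that $\bold{Sa}=\mathbf{A}\X W_V+\alpha\X$ and the residual $\alpha\X$ drops out of both gradients. Starting with $W_V$: since $\bold{Sa}$ is affine in $W_V$ with coefficient $\mathbf{A}\X$, an index computation gives $\lVert\partial\bold{Sa}/\partial W_V\rVert_F^2=d\,\lVert\mathbf{A}\X\rVert_F^2$, reducing everything to $\mathbb{E}\lVert\mathbf{A}\X\rVert_F^2$. The global upper bound is then immediate: the leaky term $\operatorname{SP}(\mathbf{C}[\cdot,1])+\epsilon$ in the denominator is strictly positive, so every row of $\mathbf{A}$ is strictly sub-stochastic, $\lVert\mathbf{A}[i]\rVert_1<1$, hence $\lVert\mathbf{A}\rVert_F^2<n$ \emph{deterministically} and $\lVert\mathbf{A}\X\rVert_F^2\le\lVert\X\rVert_{\mathrm{op}}^2\lVert\mathbf{A}\rVert_F^2<\lVert\X\rVert_{\mathrm{op}}^2\,n$, which gives $C_V=O(n)$.

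\noindent For the adaptive lower bound I would run the four-step recipe of the sketch. Expanding $\lVert\mathbf{A}[i]\X\rVert^2=\sum_{r,s}\mathbf{A}[i,r]\mathbf{A}[i,s]\langle\X[r],\X[s]\rangle$ exhibits it — inside each activation region, i.e.\ for a fixed sign pattern of the $\ReLU$ arguments together with a fixed choice of the larger entry in every element-wise $\max$ — as a ratio $f(W_Q,W_K)/g(W_Q,W_K)$ with $g$ the square of an affine map and $f$ a positive semidefinite quadratic form in affine maps. Step~2 replaces this ratio by its convex envelope over the box fixed by the hypotheses $W_QW_Q^\top\in[W^L_{QQ},W^U_{QQ}]$ and $W_KW_K^\top\in[W^L_{KK},W^U_{KK}]$, using the envelope construction of \citep{convexify}; Step~3 applies Jensen to pull $\mathbb{E}$ inside this convex minorant; Step~4 evaluates the minorant at the expectations of the Gram matrices under He initialization. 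Since $W_Q\perp W_K$ there, $Q[i,k]$ and $K[r,k]$ are independent centered Gaussians with $\mathbb{E}\max(Q[i,k],K[r,k])=\tfrac{\sigma}{\sqrt{2\pi}}\sqrt{\lVert\X[i]\rVert^2+\lVert\X[r]\rVert^2}$, so the box hypotheses pin each score entry, and hence each $\ReLU(\Score[i,r])$, to order $\Theta(1)$ with concentration; this forces every $\mathbb{E}\lVert\mathbf{A}[i]\X\rVert^2$ above a positive quantity $f_V$ depending only on $\X$ and the box, and summing over the $n$ rows gives $f_V=\Omega(n)$.

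\noindent For $\partial\bold{Sa}/\partial W_Q$, the chain rule through $\Prob$, $\Score$ and $\ReLU$ gives $\partial\bold{Sa}[i,j]/\partial W_Q[p,q]=\X[i,p]\sum_r\big(\partial\mathbf{A}[i,r]/\partial Q[i,q]\big)(\X W_V)[r,j]$, where $\partial\mathbf{A}[i,r]/\partial Q[i,q]$ is, on each activation region, a ratio whose denominator is the row normalizer $D_i=\sum_r\ReLU(\Score[i,r])+c_0$ or its square and whose numerator is a $1/\sqrt d$-scaled combination of the indicators $\mathbf{1}[Q[i,q]>K[r,q]]$. Under the same box hypotheses $D_i=\Theta(n)$, so each entry $\partial\mathbf{A}[i,r]/\partial Q[i,q]$ is $O(1/n)$ and the sensitivity vector $g_i^{(q)}$ over $r$ has $\lVert g_i^{(q)}\rVert^2=O(1/n)$; the norm-product inequality then gives $\lVert\partial\bold{Sa}/\partial W_Q\rVert_F^2\le\sum_{i,q}\lVert\X[i]\rVert^2\,\lVert g_i^{(q)}\rVert^2\,\lVert\X W_V\rVert_{\mathrm{op}}^2$, and taking expectations — with Jensen for concave functions used to pull $\mathbb{E}$ through the square roots coming from the Gaussian-$\max$ and operator-norm estimates — collapses the $\sum_i$ of $n$ terms of size $O(1/n)$ to $C_Q=O(1)$. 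The matching lower bound $f_Q=\Omega(1)$ repeats the convexify-then-Jensen step on the corresponding ratio functional; the only new point is that the governing randomness is now the difference process $\X W_Q-\X W_K$ — which is why $f_Q$ is parametrized through $W_D=W_Q-W_K$ by $W^L_{DD},W^U_{DD}$ — since both the $\max$-derivative indicators $\mathbf{1}[Q[i,q]>K[r,q]]$ and the $\lvert Q[i,\cdot]-K[r,\cdot]\rvert$ part of the score are controlled by the covariance of $\X W_D$ up to cross terms that vanish in expectation.

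\noindent The main obstacle is the adaptive lower bound, for two reasons. First, the ratio functional whose expectation we must bound is jointly neither convex nor concave and is only piecewise smooth (owing to $\ReLU$ and the element-wise $\max$), so the bound has to be built region-by-region and then reassembled through a convex envelope that is valid on the whole box yet still tight enough to recover the right order in $n$; producing a minorant that does not degenerate to a trivial bound is the crux. Second, the $\Omega(n)$ scaling of the $W_V$ gradient hinges on the data-weighted average $\sum_r\mathbf{A}[i,r]\X[r]$ not cancelling to $o(1)$ in norm, so the interval hypotheses on the Gram matrices must be made to interact quantitatively with the geometry of $\X$ rather than with a mere ``generic data'' assumption.
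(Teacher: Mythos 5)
Your plan follows essentially the same route as the paper's proof: the Kronecker/trace identity reducing $\bigl\lVert\partial\mathbf{Sa}/\partial W_V\bigr\rVert_F^2$ to $d\lVert\mathbf{A}\X\rVert_F^2$ with the sub-stochastic-row (norm-product) bound giving $C_V=O(n)$; the linearization of $\max$ and $\ReLU$ by indicators to obtain a fractional form in $W_{QQ},W_{KK}$, convexified via the envelope of \citep{convexify} and lower-bounded by Jensen to get $f_V=\Omega(n)$; and the chain rule plus the He-initialization expectation over $W_V$ for the query gradient, with $W_D=W_Q-W_K$ entering exactly as in the paper to give $\Omega(1)$ and $O(1)$. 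The only substantive deviation is your shortcut ``$D_i=\Theta(n)$'' in the query upper bound, where the paper instead keeps the activation-indicator sums explicit in both numerator and denominator and isolates the leaky constant via $(a+b)^2\geq 4ab$; this is a looseness of presentation rather than a different method.
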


\begin{proof}
    First, let us note some remarks on matrix calculus and properties of the Kronecker product $\otimes$, shown at \citep{matcal1} \citep{matcal2}. Given some matrices $\bold{A}\in\mathbb{R}^{s_1\times s_2},\bold{B}\in\mathbb{R}^{s_2\times s_3},\bold{C}\in\mathbb{R}^{s_3\times s_4},\bold{D}\in\mathbb{R}^{s_4\times s_5}$ and a matrix variable $\bold{W}\in\mathbb{R}^{s_2\times s_2}$, then:
    \begin{align*}
        \frac{\partial\bold{AWB}}{\partial \bold{W}}=\bold{A}\otimes\bold{B}^{\top}.
    \end{align*}

    Additionally, we also mention some useful properties regarding the trace operator, denoted as $\tr(\cdot)$,
    \begin{align*}
        \tr(\bold{A}\otimes\bold{B})&\;=\tr(\bold{A})\tr(\bold{B}),\\
        (\bold{A}\bold{C})\otimes(\bold{B}\bold{D}) &\;= (\bold{A\otimes B})(\bold{C\otimes D}).
    \end{align*}

We would also like to recall to the Jensen inequality for the expectation of convex and concave function. Let $\mathbb{E}$ denotes the expectation operator, and $X$ be a random variable. Then, for any scalar function $\phi:\mathbb{R}\to\mathbb{R}$,
    \begin{itemize}
        \item $\mathbb{E}(\phi(X))\geq\phi(\mathbb{E}(X))$ if $\phi$ is convex,
        \item $\mathbb{E}(\phi(X))\leq\phi(\mathbb{E}(X))$ if $\phi$ is concave.
    \end{itemize}

    Now we are ready for the proof, let us consider the gradient norm w.r.t $W_V$. For simplicity, we have the notations
    \begin{align*}
        \bold{T}_i=&\;\Score(\X[i]W_Q,\X W_K),\\
        \bold{R}_i=&\;\Prob(\bold{T}_i).
    \end{align*}
    
    Note that
    \begin{align*}
        \left\lVert\dfrac{\partial \bold{Sa}}{\partial W_V}\right\rVert_F^2=&\;\displaystyle\sum_{i=1}^n\left\lVert\dfrac{\partial \bold{Sa}[i]}{\partial W_V}\right\rVert_F^2\\
        =&\;\displaystyle\sum_{i=1}^n\left\lVert\bold{R}_i\bold{X}\otimes \bold{I}_d\right\rVert_F^2\\
        =&\;\displaystyle\sum_{i=1}^n\tr\left(\left((\R_iX)\otimes\I_d\right)\left((X^{\top}\R_i^{\top})\otimes\I_d\right)\right)\\
        =&\;\displaystyle\sum_{i=1}^n\tr\left((\R_i\X\X^{\top}\R_i^{\top})\otimes\I_d\right)\\
        =&\;\displaystyle\sum_{i=1}^nd\tr\left(\R_i\X\X^{\top}\R_i^{\top}\right)\\
        =&\;\displaystyle\sum_{i=1}^nd\left\lVert \R_i\X\right\rVert_F^2\\
        =&\;\displaystyle\sum_{i=1}^n\sum_{j=1}^nd \left\lVert \R_i[j]\X[j]\right\rVert_F^2.
    \end{align*}

However, we have:
$$\R_i[j]=\frac{\ReLU\left(\max(\Q[i],\K[j])\right)}{\sum_{k=1}^n\ReLU\left(\max(\Q[i],\K[k])\right) + \bold{C}[i]}.$$

In order to linearize the $\max$ operator, let $e_{ij}\in\{0,1\}$ be the query-key indicator such that:
$$\max(\Q[i],\K[j]) = e_{ij}\Q[i]+(1-e_{ij})\K[j].$$

Similiar to the $e_{ij}$, let $d_{ij}\in\{0,1\}$ represents the ReLU non-linearity by having
$$\ReLU\left(\max(\Q[i],\K[j])\right) = d_{ij}\max(\Q[i],\K[j])=d_{ij}\left(e_{ij}\Q[i]+(1-e_{ij})\K[j]\right).$$

By this formulation, we have
\begin{align*}
    \left\lVert\dfrac{\partial \bold{Sa}}{\partial W_V}\right\rVert_F^2=&\;d\displaystyle\sum_{i=1}^n\sum_{j=1}^n\frac{d_{ij}e_{ij}\Q[i]\Q^{\top}[i] + d_{ij}(1-e_{ij})\K[j]\K^{\top}[j]}{\left(\sum_{k=1}^nd_{ik}e_{ik}\Q[i] + d_{ik}(1-e_{ik})\K[k]+ \bold{C}[i]\right)^2}\left\lVert\X[j]\right\rVert_F^2\\
    =&\;d\displaystyle\sum_{i=1}^n\sum_{j=1}^n\frac{d_{ij}e_{ij}\tr(\X^{\top}[i]\X[i]W_QW_Q^{\top})+d_{ij}(1-e_{ij})\tr(\X^{\top}[j]\X[j]W_KW_K^{\top})}{\left(\sum_{k=1}^nd_{ik}e_{ik}\tr(\X[i]W_Q) + d_{ik}(1-e_{ik})\tr(\X[i]W_K) + \bold{C}[i]\right)^2}\left\lVert\X[j]\right\rVert_F^2\\
    \geq&\;\frac{d}{3n}\displaystyle\sum_{i=1}^n\sum_{j=1}^n\frac{d_{ij}e_{ij}\tr(\X^{\top}[i]\X[i]W_QW_Q^{\top})+d_{ij}(1-e_{ij})\tr(\X^{\top}[j]\X[j]W_KW_K^{\top})}{\sum_{k=1}^n d_{ik}e_{ik}\tr(\X^{\top}[i]\X[i]W_QW_Q^{\top})+d_{ik}(1-e_{ik})\tr(\X^{\top}[k]\X[k]W_KW_K^{\top}) + \bold{C}[i]^2}\left\lVert\X[j]\right\rVert_F^2.
\end{align*}

To express this more concisely, we have following shorthand notations:
\begin{align*}
    \bold{A}_{Qi} = &\;\frac{d}{3n}\displaystyle\sum_{j=1}^nd_{ij}e_{ij}\left\lVert\X[j]\right\rVert_F^2\X^{\top}[i]\X[i],\\
    \bold{A}_{Ki} = &\;\frac{d}{3n}\displaystyle\sum_{j=1}^nd_{ij}(1-e_{ij})\left\lVert\X[j]\right\rVert_F^2\X^{\top}[j]\X[j],\\
    \bold{B}_{Qi} = &\;\sum_{k=1}^nd_{ik}e_{ik}\X^{\top}[i]\X[i],\\
    \bold{B}_{Ki} = &\;\sum_{k=1}^nd_{ik}(1-e_{ik})\X^{\top}[k]\X[k].
\end{align*}

By doing this, we derive that:
\begin{align*}
    \left\lVert\dfrac{\partial \bold{Sa}}{\partial W_V}\right\rVert_F^2\geq&\;\displaystyle\sum_{i=1}^n\frac{\tr(\bold{A}_{Qi}W_{QQ}) + \tr(\bold{A}_{Ki}W_{KK})}{\tr(\bold{B}_{Qi}W_{QQ}) + \tr(\bold{B}_{Ki}W_{KK})+\bold{C}[i]^2},
\end{align*}
where $W_{QQ} = W_QW_Q^{\top}$ and $W_{KK} = W_KW_K^{\top}$.

We delve into the concept of convex envelope, which seeks the highest convex underestimator of a given function. Additionally, the local minina of the convex envelope is also the global minima of the primal function.

However, in this proof, we only focus on the convex properties. Specifically, consider the non-convex lower bound function
\begin{equation}
\label{LBi}
    \bold{LB}_{i}(W_{QQ},W_{KK}) = \frac{\tr(\bold{A}_{Qi}W_{QQ}) + \tr(\bold{A}_{Ki}W_{KK})}{\tr(\bold{B}_{Qi}W_{QQ}) + \tr(\bold{B}_{Ki}W_{KK})+\bold{C}[i]^2},
\end{equation}
and based on the convex envelope of the bivariate rational function discovered at \citep{convexify}, we can convexify the multivariate rational function $\bold{LB}_i$ by considering the convexity with respect to each entry one at a time.
For convenience, we let $\bold{Af}_i(W_{QQ},W_{KK})$ be the affine function denominator of \eqref{LBi}. As a result, we can derive a componentwise convex lower bound of $\bold{LB}_i$ called $\bold{CCLB}_i$, which can be expressed in the form
$$\bold{CCLB}_{i}(W_{QQ},W_{KK})=\frac{r_i}{\bold{Af}_i(W_{QQ},W_{KK})}\displaystyle\prod_{s,k,p,q=1}^d\bold{Cci}_{Qsr}\left(\frac{\left(w_{Qsr}+\sqrt{w_{Qsr}^Lw_{Qsr}^U}\right)^2}{\left(\sqrt{w_{Qsr}^L}+\sqrt{w_{Qsr}^U}\right)^2}\right)\bold{Cci}_{Kpq}\left(\frac{\left(w_{Kpq}+\sqrt{w_{Kpq}^Lw_{Kpq}^U}\right)^2}{\left(\sqrt{w_{Kpq}^L}+\sqrt{w_{Kpq}^U}\right)^2}\right),$$
where:

\begin{itemize}
    \item $w_{Qsr},w_{Kpq}$ is the $[s,r]$ and $[p,q]$ location entry of $W_{QQ}$ and $W_{KK}$ respectively,
    \item $w^L_{Qsr}$ and $w^U_{Qsr}$ are the lower and upper bound of $w_{Qsr}$,
    \item $w^L_{Kpq}$ and $w^U_{Kpq}$ are the lower and upper bound of $w_{Kpq}$,
    \item $r_i$ is a constant and is independent of the bounds,
    \item $\bold{Cci}_{Qsr}(\cdot)$ is the identity function if $w_{Qsr}$ is concave in the $\bold{LB}_i$ function, and 1 otherwise,
    \item $\bold{Cci}_{Kpq}(\cdot)$ is defined similar to $\bold{Cci}_{Qsr}(\cdot)$.
\end{itemize}

With this formulation, we take the expectation of the gradient norm square and use the Jensen inequality, which leads to:

\begin{align}
    \mathbb{E}\left\lVert\frac{\partial\bold{Sa}}{\partial W_V}\right\rVert_F^2\geq&\;\displaystyle\sum_{i=1}^n \frac{r_i}{\bold{Af}_i(\sigma^2\bold{I}_n,\sigma^2\bold{I}_n)}\prod_{s,r,p,q=1}^d\bold{Cci}_{Qsr}\left(\frac{\left(\sigma^2\delta_{sr} + \sqrt{w_{Qsr}^Lw_{Qsr}^U}\right)^2}{ \left(\sqrt{w_{Qsr}^L}+\sqrt{w_{Qsr}^U}\right)^2}\right)\bold{Cci}_{Kpq}\left(\frac{\left(\sigma^2\delta_{pq} + \sqrt{w_{Kpq}^Lw_{Kpq}^U}\right)^2}{ \left(\sqrt{w_{Kpq}^L}+\sqrt{w_{Kpq}^U}\right)^2}\right)\\
    \triangleq&\;f_V(W^{L}_{QQ},W^{U}_{QQ},W^{L}_{KK},W^U_{KK})=\Omega(n),\label{lowerv}
\end{align}
where $\delta_{sr}= 1$ if $s=r$ and $0$ otherwise.

Now let us consider the gradient norm with respect to the query:

$$\left\lVert\frac{\partial \bold{Sa}}{\partial W_Q}\right\rVert_F^2=\displaystyle\sum_{i=1}^n\left\lVert\frac{\partial \bold{Sa}[i]}{\partial W_Q}\right\rVert_F^2.$$

Consider the gradient norm for each $i\in[n]$, by applying the chain rule we have:

\begin{align*}
    \left\lVert\frac{\partial \bold{Sa}[i]}{\partial W_Q}\right\rVert_F^2=&\;\left\lVert\frac{\partial\bold{Sa}[i]}{\partial\bold{R}_i}\frac{\partial \bold{R}_i}{\partial\bold{T}_i}\frac{\partial\bold{T}_i}{\partial\bold{Q}[i]}\frac{\partial\Q[i]}{\partial W_Q}\right\rVert_F^2\\
    =&\;\left\lVert W_V^{\top}\X^{\top}\frac{\partial \bold{R}_i}{\partial\bold{T}_i}\bold{e}_i\X[i]\right\rVert_F^2,
\end{align*}

where $\bold{e}_i=(e_{ij})_{j\in[n]}$ is the query-key binary gate vector. Continuing the formulation,
\begin{align*}
    \mathbb{E}\left\lVert\frac{\partial \bold{Sa}[i]}{\partial W_Q}\right\rVert_F^2=&\;\mathbb{E}\left(\tr\left(W_VW_V^{\top}\X^{\top}\frac{\partial \bold{R}_i}{\partial\bold{T}_i}\bold{e}_i\X[i]\X^{\top}[i]\bold{e}_i^{\top}\frac{\partial \bold{R}_i^{\top}}{\partial\bold{T}_i^{\top}}\X\right)\right)\\
    =&\;\sigma^2\left\lVert\X[i]\right\rVert_F^2 \tr\left(\X\X^{\top}\mathbb{E}\left(\frac{\partial \bold{R}_i}{\partial\bold{T}_i}\bold{e}_i\bold{e}_i^{\top}\frac{\partial \bold{R}_i^{\top}}{\partial\bold{T}_i^{\top}}\right)\right)\\
    =&\;\sigma^2\left\lVert\X[i]\right\rVert_F^2 \mathbb{E}\left\lVert\X^{\top}\frac{\partial \R_i}{\partial\bold{T}_i}\bold{e}_i\right\rVert_F^2\\
    =&\;\displaystyle\sum_{k=1}^d \sigma^2\left\lVert\X[i]\right\rVert_F^2 \mathbb{E}\left\lVert\X[:,k]^{\top}\frac{\partial \R_i}{\partial\bold{T}_i}\bold{e}_i\right\rVert_F^2.
\end{align*}

Let $W_D=W_Q-W_K$ and $\bold{Z}_i=\dfrac{\partial \R_i}{\partial\bold{T}_i}\bold{e}_i$, we have:
$$\Z_{i}[j]=\frac{d_{ij}e_{ij}\left(\displaystyle\sum^n_{\substack{k=1\\k\ne j}}d_{ik}\T_i[k]-d_{ik}\T_i[j]\right)}{\left(d_{ik}\T_i[k] + \bold{C}[i]\right)^2}.$$

And since $\T_i[k] = e_{ik}\Q[i] + (1-e_{ik})\K[k] = e_{ik}\X[i]W_D+\left((1-e_{ik})\X[k]+e_{ik}\X[i]\right)W_K$, we derive that:
$$\begin{aligned}
    \left\lVert\X[:,k]^{\top}\Z_i\right\rVert_F^2=&\;\frac{\displaystyle\sum_{j=1}^n\X[j,k]^2d_{ij}e_{ij}\left(\displaystyle\sum_{\substack{k=1\\k\ne j}}^nd_{ik}\left(1-e_{ik}\right)\right)^2\left(\X[i]W_D\right)^2}{\left(\displaystyle\sum_{k=1}^nd_{ik}e_{ik}\X[i]W_D+d_{ik}\left((1-e_{ik})\X[k] + e_{ik}\X[i]\right)W_K+\bold{C}[i]\right)^4}\\
    \geq&\;\frac{1}{9n^2}\frac{\displaystyle\sum_{j=1}^n\X[j,k]^2d_{ij}e_{ij}\left(\displaystyle\sum_{\substack{k=1\\k\ne j}}^nd_{ik}\left(1-e_{ik}\right)\right)^2\left(\X[i]W_D\right)^2}{\left(\displaystyle\sum_{k=1}^nd_{ik}e_{ik}(\X[i]W_D)^2+d_{ik}\left(\left((1-e_{ik})\X[k] + e_{ik}\X[i]\right)W_K\right)^2+\bold{C}[i]^2\right)^2}.
\end{aligned}$$

Now let $W_{DD}=W_DW_D^{\top}$ and confine $W_{DD}$ in a hypercube $(W^L_{DD},W^U_{DD})$ similarly to $W_{QQ}$ and $W_{KK}$, we have
$$(\X[i]W_D)^2=\tr(\X[i]^{\top}\X[i]W_DW_D^{\top})\leq \left\lVert\X[i]\right\rVert_F^2\left\lVert W_{DD}\right\rVert_F\leq \left\lVert\X[i]\right\rVert_F^2C\left(W^L_{DD},W^U_{DD}\right),$$
where $C(\cdot,\cdot)$ is some scalar-valued positive function dependent only on the bounds of $W_{DD}$.

By applying similar evaluations for $W_{KK}$ and following the techniques used for the analysis $W_V$ gradient norm, such that it is sufficient to deduce that for some constant matrices $A_{Di},B_{Di}$ and $B_{Ki}$ that are bound-independent, we have:

$$\left\lVert\X[:,k]^{\top}\Z_i\right\rVert_F^2\geq C\left(W^L_{DD},W^U_{DD}, W^L_{KK},W^U_{KK}\right)\frac{\tr(A_{Di}W_{DD})}{\tr(B_{Di}W_{DD}) + \tr(B_{Ki}W_{KK}) + \bold{C}[i]^2}.$$

It is evident of the proof from this point as it is analogous to the previous part. Therefore, we deduce another adaptive lower bound for the gradient norm of $W_Q$ as there exist a non-negative scalar function $f_Q$ such that:

\begin{equation}
    \label{lowerq}
    \mathbb{E}\left\lVert\dfrac{\partial \bold{Sa}}{\partial W_Q}\right\rVert_F^2\geq f_Q\left(W^{L}_{DD},W^{U}_{DD},W^{L}_{KK},W^U_{KK}\right) = \Omega(1).
\end{equation}

Regarding the upper bound, we will achieve this by acquiring a sufficiently large universal constant. Simply by the norm product inequality, we have:

\begin{equation}
    \label{upperv}
    \mathbb{E}\left\lVert\dfrac{\partial \bold{Sa}}{\partial W_V}\right\rVert_F^2\leq d\displaystyle\max_{i}\left\lVert\X[i]\right\rVert_F^2 \sum_{i=1}^n\left\lVert\R_{i}\right\rVert_F^2\leq dn\displaystyle\max_{i}\left\lVert\X[i]\right\rVert_F^2\triangleq C_V=O(n).
\end{equation}

Combining Eq~\ref{lowerv} and Eq~\ref{upperv}, the complexity of $\mathbb{E}\left\lVert\dfrac{\partial \bold{Sa}}{\partial W_V}\right\rVert_F^2$ is derived to be $\Theta(n)$.

Also,
$$\mathbb{E}\left\lVert\dfrac{\partial \bold{Sa}}{\partial W_Q}\right\rVert_F^2\leq \displaystyle\sum_{i=1}^n\sum_{k=1}^d\sigma^2\lVert\X[i]\rVert_F^2\mathbb{E}\left\lVert\X[:,k]^{\top}\Z_i\right\rVert_F^2.$$

Since $W_D=W_Q-W_K$, we get $\operatorname{Var}(W_D)= \operatorname{Var}(W_Q) + \operatorname{Var}(W_K)=2\sigma^2\I_n$. Alongside this, notice that the indicators $d_{ij}$ where $i,j\in[n]$ can be treated as binary gates that filter out negative values, therefore:
$$\begin{aligned}
    \mathbb{E}\left\lVert\X[:,k]^{\top}\Z_i\right\rVert_F^2\leq&\;\mathbb{E}\frac{\displaystyle\sum_{j=1}^n\X[j,k]^2d_{ij}e_{ij}\left(\displaystyle\sum_{\substack{k=1\\k\ne j}}^nd_{ik}\left(1-e_{ik}\right)\right)^2\left(\X[i]W_D\right)^2}{\left(\displaystyle\sum_{k=1}^nd_{ik}e_{ik}\X[i]W_D+\bold{C}[i]\right)^4}\\
    \leq&\;\mathbb{E}\frac{\displaystyle\sum_{j=1}^n\X[j,k]^2d_{ij}e_{ij}\left(\displaystyle\sum_{\substack{k=1\\k\ne j}}^nd_{ik}\left(1-e_{ik}\right)\right)^2}{\left(\displaystyle\sum_{k=1}^nd_{ik}e_{ik}\X[i]W_D+\bold{C}[i]\right)^2\left(\dfrac{\bold{C}[i]}{\X[i]W_D}+\displaystyle\sum_{k=1}^nd_{ik}e_{ik}\right)^2}\\
    \leq&\;\mathbb{E}\frac{\displaystyle\sum_{j=1}^n\X[j,k]^2d_{ij}e_{ij}\left(\displaystyle\sum_{\substack{k=1\\k\ne j}}^nd_{ik}\left(1-e_{ik}\right)\right)^2}{4\bold{C}[i]^2\left(\displaystyle\sum_{k=1}^nd_{ik}e_{ik}\right)^4}.
\end{aligned}$$

Finally, the global upper bound for the gradient norm squared of $W_V$ is concluded:
\begin{equation}
    \label{upperq}\mathbb{E}\left\lVert\dfrac{\partial \bold{Sa}}{\partial W_Q}\right\rVert_F^2\leq \displaystyle\sum_{i=1}^n\sum_{k=1}^d\sigma^2\lVert\X[i]\rVert_F^2\mathbb{E}\left(\frac{\displaystyle\sum_{j=1}^n\X[j,k]^2d_{ij}e_{ij}\left(\displaystyle\sum_{\substack{k=1\\k\ne j}}^nd_{ik}\left(1-e_{ik}\right)\right)^2}{4\bold{C}[i]^2\left(\displaystyle\sum_{k=1}^nd_{ik}e_{ik}\right)^4}\right)\triangleq C_Q = O(1).
\end{equation}

From Eq.~\ref{lowerq} and Eq.~\ref{upperq}, we conclude that the complexity of $\mathbb{E}\left\lVert\dfrac{\partial \bold{Sa}}{\partial W_Q}\right\rVert_F^2$ is $\Theta(1)$.
\end{proof}

\section{Datasets}
\label{datasetdetail}
The ModelNet40 dataset~\citep{Modelnet40} consists of 12,311 pre-aligned shapes divided into 40 classes, where the train and test sets consist of 9,843 instances and 2,468 instances respectively. ModelNet40 is the pioneer large-scale 3D CAD dataset. Unlike previous CAD datasets \citep{princetonshape}, ModelNet40 is the pioneer large-scale dataset that is diverse in terms of both class and samples per class.

The ShapeNetPart dataset consists of 16,881 pre-aligned 3D shapes from 16 categories and is a part of a larger dataset: ShapeNetCore (51,300 3D models)~\citep{Shapenet}. The 3D shapes in the ShapeNetPart dataset are annotated with 50 segmentation parts in total representing virtual real-world 3D semantic models. ShapeNet provides a diverse variety of shape annotations and corresponding shapes. The full ShapeNet dataset a is multitude containing upright and front orientation vectors, parts and keypoints, shape symmetries, but we only account for the part-segmenting task in our work.

The Long Range Arena Benchmark (LRA)~\citep{longrangearena} is a composition of 5 tasks: ListOps \citep{nangia-bowman-2018-listops}, ImDB review \citep{ImdbReview}, ACL Anthology Network \citep{ACLAnthologyNetworkCorpus}, Grayscaled CIFAR-10 \citep{CIFAR10}, and Pathfinder \citep{Pathfinder}. These five tasks are all classification and they feature very long sequences: ListOps (2,048 tokens), ImDB review (1,024 tokens), ACL Anthology Network (4,096 tokens), Grayscaled CIFAR-10 (1,024 tokens), and Pathfinder (1,024 tokens). All five tasks involve tackling long-ranged data structures and manifold categorizations, challenging networks's generalization powers as well as memory and time efficiencies.

We used three datasets in Long Range Graph Benchmark (LRGB)~\citep{LongRangeGraphBenchmark}: Peptides-func~\citep{Peptideds}, Peptides-struct~\citep{Peptideds}, and PascalVOC-sp~\citep{PascalVOC}. Peptides-func is a Graph Multiclass-Classification task, that features graphs with an averaged number of nodes of 150. Similarly, Peptides-struct is a Graph-Regression task, with the same averaged number of nodes. PascalVOC-sp is a Node-Classification task, with an averaged number of nodes of 479. While the Peptide datasets measure the long-range relationship deduction of models, the PascalVOC-sp dataset measures local pattern recognition through the Node-Classification tasks. This combination provides a good enough evaluation of efficient transformers' performance in the Graph data modality. 

\section{Reproducibility} 
\label{sec:Reproducibility}
This section is devoted to providing information to assist the reproducibility of our work and explain the technicality of our experimental code.

In our work, we conducted the following experiments:
\begin{itemize}
    \item Experiment to measure the performance of our model in Object-Classification on ModelNet40 (1.1)
    \item Experiment to measure the performance of our model in Semantic-Segmentation on ShapeNetPart (1.2)
    \item Experiment to measure the performance of our model in Object-Classification on ModelNet40 under rotational invariant constraint (1.3)
    \item Experiment to measure the performance of our model in Semantic-Segmentation on ShapeNetPart under rotational invariant constraint (1.4)
    \item Experiment to measure the performance of our model in Graph-Multiclass-Classification on Peptide-func (2.1)
    \item Experiment to measure the performance of our model in Graph-Regression on Peptide-struct (2.2)
    \item Experiment to measure the performance of our model in Graph-Node-Classification on PascalVOC-sp (2.3)
    \item Experiment to measure the performance of our model in LRA-Benchmark-ListOPS (3.1)
    \item Experiment to measure the performance of our model in LRA-Benchmark-Text (3.2)
    \item Experiment to measure the performance of our model in LRA-Benchmark-Retrieval (3.3)
    \item Experiment to measure the performance of our model in LRA-Benchmark-Image (3.4)
    \item Experiment to measure the performance of our model in LRA-Benchmark-Pathfinder (3.5)
    \item Experiment to show the effectiveness of our attention formulation in ModelNet40 (4.1)
    \item Experiment to analyze the performance-efficiency tradeoff in ModelNet40 (4.2)
    \item Experiment to measure model efficiency (4.3)
    \item Experiment to investigate rank injection phenomena (5)
\end{itemize}

\subsection{Experiment details}\label{sec:implementation_details}
This subsection provides information on the experiment settings of both our main experiments and our ablation study experiments.

All the experiments on ModelNet40 (1.1, 1.3, 4.1, 4.2) use uniform point cloud sampling technique to sample 1024 points as input tokens. Experiments on ShapeNetPart do not use uniform sampling technique; it randomly samples 2500 points.

All the point cloud experiments (1.1, 1.2, 1.3, 1.4, 4.1, 4.2) under rotational invariant constraint use both point cloud coordinates and surface normal vectors. Under rotational invariant constraint, the input token value are replaced by tensor filled with one, and the relative positional embedding matrix is a Squared Euclidean Distance Matrix from point coordinates concatenated with Pairwise Dot-Product of surface normal vectors; these two features are rotational invariant.

All experiments on LRG-Benchmark (2.1, 2.2, 2.3) use graph adjacency matrix as {relative} information between tokens. The subscripted adjacency matrices are fed into each layer based on the output of the sampling module.

All experiments on LRA-Benchmark (3.1, 3.2, 3.3, 3.4, 3.5) use a learnable linear combination of Sinusoid1D as relative positional encoding; the sinusoid vectors are linearly combined, then a pairwise hadamard product is used to construct relative matrix between tokens. {This method uses a sinusoidal matrix $\mathbf{S}$ having shape $n \times d$, defined as follows:}
\begin{align}
    {\mathbf{S}[i,2j]} &{= \sin(\dfrac{i}{10000^{\frac{j}{d}}})}, \\
    {\mathbf{S}[i,2j+1]} &{= \cos(\dfrac{i}{10000^{\frac{j}{d}}})},
\end{align}
{$n, d$ are number of tokens and number of dimensions, respectively. The sinusoid matrix fed into the sampler, resulting in matrices of sampled positional tokens $\mathbf{S_{\text{top}}}, \mathbf{S_{\text{rand}}}$. Relative position matrice is constructed by pairwise hadamard product, which results in tensor $\mathbf{A_{\text{top}}}$ and $\mathbf{A_{\text{rand}}}$ of shape $(n, n', d)$:}
\begin{equation}
    {\mathbf{A}[i, j, k] = \mathbf{S}[i, k] * \mathbf{S}[j, k]}
\end{equation}
{A learnable linear layer is then applied to $\mathbf{A}$, transforming it into a tensor of shape $(n, n', h)$, where $h$ is the number of attention heads. The remaining steps regarding relative positional encoding can be traced back in Section~\ref{sec:SFT}, as we have described the method to generate $\mathbf{X_R}$.}

\subsection{Implementation details}\label{sec:implementation_details}
This subsection provides information on how we organize and implement our experimental software.

All the experiments involving model training use TorchCompile to compile the entire model into one optimized kernel. This reduces around 40\% of the computational cost of our models.

Our implementation for graph data modality is not yet able to support very large graphs due to the usage of subscripted adjacency matrices. This would be fixed in the future if we implement direct sparse operator converting edge list into subscripted adjacency matrices, ensuring linear runtime.

Our code is split into three modules:
\begin{itemize}
    \item Data-related Modules: data loaders and data augmentation modules for all of our experimented tasks.
    \item Layers: code for our differentiable sampling-without-replacement module, SFT-layer with various configurations, and task-head. The code for our differentiable sampling-without-replacement module is separated from other codes; therefore, it is possible to use our sampling module to more optimized transformer implementation, such as the PyTorch one.
    \item Models: methods to parse point cloud models, sequential models, and graph models.
\end{itemize}

The experiments in which we calculate Flops of submodules (4.3) use the Calflop Python library. In this experiment, we disable TorchCompile.

\subsection{Training Strategies}\label{sec:train_strat}
This subsection provides information on how we train our models across tasks; including the choice of optimizers, optimizer hyperparameters, learning rate schedulers, and data augmentations.

All of our model uses AdamW optimizer, StepLR learning rate scheduler, max gradient norm clipping, and Untuned Linear Learning-Rate Warmup~\citep{UntunedLinearWarmup}. Therefore, the training-strategy-related hyperparameters are:
\begin{itemize}
    \item Optimizer
    \begin{itemize}
        \item Number of Epochs
        \item Batch Size
        \item Learning Rate
        \item L2-Weight-Decay
    \end{itemize}
    \item Learning Rate Scheduler
    \begin{itemize}
        \item Step Size
        \item Decay Rate
    \end{itemize}
    \item Max Gradient Norm Clipping
    \begin{itemize}
        \item Max Gradient Norm
    \end{itemize}
\end{itemize}

Point Cloud tasks use data preprocessing and data augmentation techniques as the following:
\begin{itemize}
    \item Data preprocessing: All of the point clouds are centralized
    \item Data preprocessing: The point cloud size is normalized by dividing the point coordinates by the distance from point cloud centroid and the farthest point to the centroid
    \item Data augmentation: Randomly change the size of point clouds by multiplying point cloud coordinates with a uniformly sampled scalar in range of [0.6, 1.4]
\end{itemize}

Peptide tasks use data preprocessing as the following:
\begin{itemize}
    \item Data preprocessing: Node features go through a 17-class one-hot vectorization
    \item Data preprocessing: Edge features are concatenated with 1-valued tensor
\end{itemize}

Here, we list the training hyperparameter for each experiment in Table~\ref{tab-hyperparameter}.

\setlength{\tabcolsep}{3.0pt} 
\begin{table} 
\caption{Training Hyperparameters of Experiments}
\label{tab-hyperparameter}
\begin{center}
\scalebox{0.9}{
\begin{small}
\begin{tabular}{l|r|r|r|r|r|r|r|r|r|r|r|r|r|r|r|r}
\toprule
Experiment        & 1.1 & 1.2 & 1.3  & 1.4  & 2.1  & 2.2  & 2.3 & 3.1  & 3.2  & 3.3  & 3.4  & 3.5  & 4.1 & 4.2 & 4.3 & 5    \\ \midrule
Number of Epoch   & 600 & 600 & 1000 & 1000 & 600  & 600  & 600 & 50   & 200  & 80   & 50   & 200  & 600 & 600 & 600 & 1000 \\ \midrule
Batch Size        & 32  & 32  & 32   & 32   & 32   & 32   & 32  & 64   & 64   & 32   & 64   & 64   & 32  & 32  & 32  & 32   \\ \midrule
Learning Rate & 0.001 & 0.001 & 0.001 & 0.001 & 0.0008 & 0.0003 & 0.001 & 0.001 & 0.001 & 0.001 & 0.001 & 0.001 & 0.001 & 0.001 & 0.001 & 0.001 \\ \midrule
L2 Weight Decay   & 0.1 & 0.1 & 0.1  & 0.1  & 0.17 & 0.12 & 0.1 & 0.01 & 0.01 & 0.01 & 0.02 & 0.01 & 0.1 & 0.1 & 0.1 & 0.1  \\ \midrule
Step Size         & 30  & 30  & 30   & 30   & 30   & 30   & 30  & 20   & 20   & 20   & 20   & 20   & 30  & 30  & 30  & 30   \\ \midrule
Decay Rate        & 0.8 & 0.8 & 0.8  & 0.8  & 0.8  & 0.8  & 0.9 & 0.9  & 0.9  & 0.9  & 0.9  & 0.9  & 0.8 & 0.8 & 0.8 & 0.8  \\ \midrule
Max Gradient Norm & 1.0 & 1.0 & 1.0  & 1.0  & 16.0 & 16.0 & 1.0 & 1.0  & 1.0  & 1.0  & 1.0  & 1.0  & 1.0 & 1.0 & 1.0 & 1.0 \\ \bottomrule
\end{tabular}
\end{small}
}
\end{center}
\end{table}

\subsection{Architectural Specification}\label{sec:arch_spec}
This subsection provides information on the architectural hyperparameters of models across tasks.

All of the models we experimented use a feedforward network expansion rate of 4 (this implies the FFN module in SFT transforms d-dimensional tokens into 4d-dimensional tokens then back to d-dimensional tokens).

To combat overfitting, we have a slight difference compared to the vanilla transformer: we introduce dropout layers with different dropout rates: a dropout layer for Q and K vectors is called attention score dropout, a dropout layer after multi-head-attention and feedforward network is called token embedding dropout, and a dropout layer after the feedforward network's dimension expansion linear layer is called feedforward dropout.

The classification head used in ModelNet40 dataset is a bit different: it uses our differentiable sampling to sample 512 tokens and then uses a maximum aggregation to aggregate information. All of the classification heads we used in our experiments except the one on ModelNet40 use a linear layer with softmax aggregation.

Table~\ref{tab-hyperparameter-arch} shows architectural hyperparameters of experiments.

\setlength{\tabcolsep}{3.0pt} 
\begin{table} 
\caption{Architectural Hyperparameters of Experiments}
\label{tab-hyperparameter-arch}
\begin{center}
\begin{tabular}{l|c|c|c|c|c|c|c|c|c|c|c|c}
\toprule
Experiment                   & 1.1  & 1.2  & 1.3  & 1.4  & 2.1  & 2.2  & 2.3  & 3.1 & 3.2  & 3.3  & 3.4 & 3.5 \\ \midrule
Number of Sampled Tokens     & 256  & 256  & 256  & 256  & 75   & 75   & 150  & 256 & 256  & 256  & 256 & 256 \\ \midrule
Drop Token Probability       & 0.5  & 0.2  & 0.5  & 0.2  & 0.3  & 0.3  & 0.3  & 0.0 & 0.3  & 0.0  & 0.3 & 0.0 \\ \midrule
Token Embedding Size         & 256  & 256  & 256  & 256  & 144  & 144  & 160  & 128 & 128  & 128  & 128 & 128 \\ \midrule
Number of Attention Head     & 16   & 16   & 16   & 16   & 16   & 16   & 16   & 4   & 4    & 4    & 4   & 4   \\ \midrule
Attention Score Dropout      & 0.1  & 0.1  & 0.1  & 0.1  & 0.1  & 0.1  & 0.1  & 0.0 & 0.1  & 0.0  & 0.1 & 0.0 \\ \midrule
Token Embedding Dropout      & 0.1  & 0.1  & 0.1  & 0.1  & 0.0  & 0.0  & 0.0  & 0.1 & 0.0  & 0.1  & 0.1 & 0.1 \\ \midrule
Feedforward Dropout          & 0.2  & 0.2  & 0.2  & 0.2  & 0.2  & 0.2  & 0.2  & 0.2 & 0.2  & 0.2  & 0.2 & 0.2 \\ \midrule
Normalization Layer Position & Post & Post & Post & Post & Post & Post & Post & Pre & Post & Post & Pre & Pre \\ \midrule
Number of Layers             & 8    & 8    & 8    & 8    & 4    & 4    & 4    & 6   & 4    & 5    & 8   & 6  \\ \bottomrule
\end{tabular}
\end{center}
\end{table}

\end{document}